\def\eqref#1{equation~\ref{#1}}
\def\1{\bm{1}}
\DeclareMathAlphabet{\mathsfit}{\encodingdefault}{\sfdefault}{m}{sl}
\SetMathAlphabet{\mathsfit}{bold}{\encodingdefault}{\sfdefault}{bx}{n}
\title{Leveraging Unlabeled Data Sharing through Kernel Function Approximation in Offline Reinforcement Learning}
\author{\name Yen-Ru Lai \email r09942079@ntu.edu.tw \\
      \addr Graduate Institute of Communication Engineering, National Taiwan University 
      \AND
      \name Fu-Chieh Chang \email d09942015@ntu.edu.tw \\
      \addr Graduate Institute of Communication Engineering, National Taiwan University \\
      \addr Mediatek Research
      \AND
      \name Pei-Yuan Wu \email peiyuanwu@ntu.edu.tw \\
      \addr Graduate Institute of Communication Engineering, National Taiwan University
      \AND
      }
\newtheorem{theorem}{Theorem}[section]
\newtheorem{definition}[theorem]{Definition}
\newtheorem{assumption}[theorem]{Assumption}
\newtheorem{lemma}[theorem]{Lemma}
\newtheorem{remark}[theorem]{Remark}
\newtheorem{corollary}[theorem]{Corollary}
\newtheorem{proposition}[theorem]{Proposition}
\definecolor{c1}{rgb}{0, 0, 0}    
\definecolor{c2}{rgb}{0, 0, 0.8} 
\definecolor{c3}{rgb}{0, 0.5, 1}   
\definecolor{c4}{rgb}{0, 0.8, 1}   
\definecolor{c5}{rgb}{0.6, 1, 1}   
\definecolor{c6}{rgb}{0, 0.7, 0.7}   
\newcommand{\myred}[1]{{{#1}}}
\begin{document}

\maketitle

\begin{abstract}
Offline reinforcement learning (RL) learns policies from a fixed dataset, but often requires large amounts of data. The challenge arises when labeled datasets are expensive, especially when rewards have to be provided by human labelers for large datasets. In contrast, unlabelled data tends to be less expensive. This situation highlights the importance of finding effective ways to use unlabelled data in offline RL, especially when labelled data is limited or expensive to obtain. In this paper, we present the algorithm to utilize the unlabeled data in the offline RL method with kernel function approximation and give the theoretical guarantee. We present various eigenvalue decay conditions of the RKHS $\mathcal{H}_k$ induced by kernel $k$ which determine the complexity of the algorithm.  In summary, our work provides a promising approach for exploiting the advantages offered by unlabeled data in offline RL, whilst maintaining theoretical assurances. 
\end{abstract}

\section{Introduction}
Reinforcement learning (RL) algorithms have demonstrated empirical success in a variety of domains, including the defeat of Go champions \citep{silver2016mastering}, 
robot control \citep{kalashnikov2018scalable}, and the development of large language models  \citep{stiennon2020learning}. In particular, these achievements are largely associated with online reinforcement learning, characterized by dynamic data collection. However, the widespread adoption of online RL faces significant challenges. In many scenarios, active exploration is impractical due to factors such as the high cost of data collection \citep{levine2020offline}. To this end, in this paper we explore offline reinforcement learning - a fully data-driven framework similar to supervised learning. Unfortunately, fully data-driven offline RL demands large datasets. In more realistic scenarios, offline reinforcement learning (RL) could allow us to use a smaller amount of task-specific data along with a significant amount of task-agnostic data. This data is not labeled with task rewards, and some of it may not be directly relevant to the task at hand. 

Prior works use learned classifiers that discriminate between successes and failures for reward labeling \citep{fu2018variational,singh2019end} in the online RL setting. However, these approaches are unsuitable for the offline RL setting since they require real-time interaction. Alternatively, some research focuses on learning from data without explicit reward labels by directly imitating expert trajectories \citep{ho2016generative,kostrikov2019imitation} or deriving the reward function through inverse reinforcement learning using an expert dataset 
 \citep{fu2017learning,finn2016guided}. However, in real-world scenarios, these approaches may face challenges due to the resource-intensive and costly nature of the expert trajectory acquisition and reward labelling process.

 \citet{yu2022leverage} has revealed the challenges associated with learning to predict rewards, highlighting the surprising efficacy of setting the reward to zero. Despite these findings, the impact of reward prediction methods on performance and the potential demonstrable benefits of reward-free data in offline reinforcement learning (RL) remain unclear. In response to this, \citet{hu2023provable} have introduced a novel model-free approach named Provable Data Sharing (PDS). PDS incorporates uncertainty penalties into the learned reward functions, maintaining a conservative algorithm. This method allows PDS to take advantage of unlabeled data for offline RL, especially in linear MDPs. However, the linear MDP assumption is inflexible and rarely is fulfilled in practice. This question naturally arises.

How can we enhance the performance of offline RL algorithms that use kernel function approximation by effectively using reward-free data?

This work focuses on the episodic Markov decision process (MDP). The reward function and value function are both represented by kernel functions. Inspired by the Provable Data Sharing (PDS) \citep{hu2023provable} framework, we propose a new algorithm. The PDS algorithm has two main components. First, it pessimistically estimates rewards by applying additional penalties to the reward function learned from labeled data. This augmentation is designed to prevent overestimation, thus ensuring a conservative algorithm. The second part of the PDS algorithm uses the Pessimistic Value Iteration (PEVI) algorithm introduced by \citet{jin2021pessimism} to derive the policy. Our main contribution is that 
\begin{itemize}
    \item \textbf{Extension of PDS framework:}  We expand the applicability of the Provable Data Sharing (PDS) framework, initially introduced by \citet{hu2023provable}. This extension goes beyond the original linear Markov Decision Process (MDP) setting, incorporating kernel function approximation. This expansion enhances the versatility of the PDS framework, making it applicable to a broader range of scenarios. Our derivation is influenced by methodologies proposed for kernelized contextual bandits \citep{chowdhury2017kernelized,valko2013finite,srinivas2009gaussian}, as well as techniques such as pessimistic value iteration (PEVI) \citep{jin2021pessimism} and the kernel optimum least squares value iteration algorithm (KOVI) \citep{yang2020function}.
    \item \textbf{Focus on finite-horizon MDPs:} While \citet{hu2023provable}  focuses on discounted infinite-horizon MDPs, our work shifts attention to finite-horizon MDPs, which introduce unique challenges due to their horizon-dependent reward and transition functions. Unlike the discounted setting, where strong coverage assumptions such as finite concentrability coefficients are often required, our framework does not rely on such uniform data coverage. 
    \item \textbf{Feature coverage assessment via concentratability coefficient:} In contrast to \citet{hu2023provable}, which relies on a bounded concentratability coefficient to assess coverage over the state-action space, we introduce a global coverage assumption (Assumption \ref{ass:weak conv}) based on the spectrum of feature covariance matrices. This approach \citep{wang2020statistical}, widely used in supervised learning, is particularly well-suited for settings with linear function approximation. Unlike single-policy coverage, our assumption requires sufficient data coverage across all policies in the considered class, ensuring broader applicability. 
    \item \textbf{Enhance the suboptimality:} By employing the data-splitting technique discussed by \citet{xie2021policy}, the suboptimality can be enhanced by a factor of $\sqrt{d}$, which depends on the choice of kernel. This enhancement comes at the cost of $\sqrt{H}$, a constant inherent in the MDP. Nevertheless, with an appropriate selection of kernel, the overall algorithmic performance can be significantly improved. 
\end{itemize}

Our research provides a theoretical guarantee for effectively utilizing the benefits of reward-free data in offline RL. We aim to enhance the robustness of offline RL methods by maintaining theoretical guarantees, which offers a valuable contribution to the ongoing development of more resilient and efficient RL frameworks.
\section{Related Works}\label{sec:related_works}
The issue of suboptimality in discounted and episodic MDP with a model has been considered in linear and kernel settings. The results are presented in Table \ref{tab:suboptimality}. In the episodic MDP setting, we have the dataset with $N$ trajectories of horizon $H$, and the suboptimality dependent on $N$ and $H$. On the other hand, in a discounted MDP setting, we have the dataset with length $N$, and suboptimality dependent on $N$ and the discount factor $\gamma$. The PEVI algorithm \citep{jin2021pessimism} serves as the foundational algorithm within \citet{hu2023provable} and our work. If we assume that the infinite horizon MDP should conclude within $H$ steps (referred to as the effective horizon) \citep{yan2022model}, we can set the discount factor $\gamma$ such that $H = {1}/(1-\gamma)$. Consequently, the suboptimality for the PDS algorithm is expressed as $\tilde{\mathcal{O}}({d}H^2 N_2^{-\frac{1}{2}})$ where $N_2$ is the number of trajectories for the unlabeled dataset. 
Similar to \citet{hu2023provable}, we incorporate unsupervised data sharing to enhance the offline RL algorithm. The linear setting is a special case of the kernel setting with a linear kernel. In this case, we can recover the suboptimality as $\tilde{\mathcal{O}}(Hd^{\frac{1}{2}}N_1^{-\frac{1}{2}})$, where $N_1$ is the number of trajectories for the labeled dataset, as provided in \citet{hu2023provable}. A notable difference between PEVI and PDS lies in PDS's utilization of data sharing to improve the suboptimality through an unlabeled dataset. It's important to note that $N_2 > N_1$ in general. When comparing PDS with our approach in a linear setting, the $H$-folds data splitting in our algorithm enhances the suboptimality by a factor of $\sqrt{d}$. However, this improvement comes with a tradeoff, as our algorithm introduces a suboptimality increment by a factor of $\sqrt{H}$ because we need to partition the data set into $H$ folds. As a result, each estimated value function is derived from only $N_2/H$ episodes of data.
\begin{table}[ht]
\centering
\caption{The existing suboptimality under weak convergence (see Assumption \ref{ass:weak conv})(except for the last row), discussed in Section~\ref{sec:related_works}. Here, the labeled dataset is represented as $\{({s^\prime}_h^\tau, {a^\prime}_h^\tau, r_h^\tau)\}_{\tau, h = 1}^{N_1, H}$, unlabeled dataset is represented as $\{({s^\prime}_h^{\tau+N_1}, {a^\prime}_h^{\tau+N_1})\}_{\tau, h = 1}^{N_2, H}$. 
Denote  $G(N, \lambda)$ as the maximum information gain, $\zeta_{\mathcal{D}}=\max_{h\in [H]} \myred{\mathbb{E}_{\pi^*}}
\left[\zeta_h({\mathcal{D}}^\prime, \mathcal{D}) \myred{\mid s_1=s_0}\right]$ represents a maximum amount of information from the dataset $\mathcal{D}$ and $\mathcal{D}^\prime$, where $\mathcal{D}^\prime$ is the combination of $\mathcal{D}$ and observed data $z$, and $\zeta_{\widetilde{\mathcal{D}}}=\max_{h\in [H]} \myred{\mathbb{E}_{\pi^*}}\left[\zeta_h({(\widetilde{\mathcal{D}}^{\widetilde{\theta}}_h})^\prime, {\widetilde{\mathcal{D}}^{\widetilde{\theta}}_h})\myred{\mid s_1=s_0}\right]$, where the definition of ${\widetilde{\mathcal{D}}^{\widetilde{\theta}}_h}$ is shown in Theorem~\ref{thm:main thm}, $\nu = 1+\frac{1}{N_1}$, $\lambda = 1+\frac{1}{N}$ and $N=N_1+N_2$. In a linear MDP setting, it is stated that the transition probability can be represented linearly in a feature map of state-action with $d$ dimensions. 
}
\begin{tabular}{|l|l|l|l|}
\hline
Algorithm & MDP        & Setting                                                                             & $\operatorname{SubOpt}$                                                                     \\ \hline
PEVI \citep{jin2021pessimism}      & Episodic   & Linear                                                                              & $\tilde{\mathcal{O}}(d H^2 N_1^{-\frac{1}{2}})$                                 \\ \hline
PDS \citep{hu2023provable}       & Discounted & Linear                                                                              & $\tilde{\mathcal{O}}(d^{\frac{1}{2}}(1-{\gamma})^{-1}N_1^{-\frac{1}{2}})$            + $\tilde{\mathcal{O}}({d}{(1-{\gamma})^{-2}}N_2^{-\frac{1}{2}})$      \\ \hline
Our work & Episodic   & \begin{tabular}[c]{@{}l@{}}kernel-based, \\ $d$-finite spectrum\end{tabular}   & $\tilde{\mathcal{O}}(Hd^{\frac{1}{2}}N_1^{-\frac{1}{2}})$            + $\tilde{\mathcal{O}}(H^{\frac{5}{2}}{d}^{\frac{1}{2}}{N_2^{-\frac{1}{2}}})$ \\
\hline\hline
Our work & Episodic   & \begin{tabular}[c]{@{}l@{}}kernel-based, \\ general setting \end{tabular}   & $\tilde{\mathcal{O}}(H\sqrt{G(N_1, \nu)\zeta_{\mathcal{D}_1}})$            + $\tilde{\mathcal{O}}(H^2\sqrt{G(\frac{N}{H}, \lambda)\zeta_{\widetilde{\mathcal{D}}}})$ 
\\ \hline
\end{tabular}

\label{tab:suboptimality}
\end{table}

\subsection{Offline Reinforcement Learning}
In offline reinforcement learning (RL), the goal is to learn a policy from a static data set collected previously without interacting with the environment. Current approaches in offline RL \citep{levine2020offline} can be broadly classified into dynamic programming methods and model-based methods.
Dynamic programming methods aim to learn a state action value function, known as the $Q$ function. Subsequently, this value function is used either to directly find the optimal policy or, in the case of actor-critic methods, to estimate a gradient for the expected returns of a policy. The offline dynamic programming algorithm operates in a tabular setting \citep{jin2018q}. However, algorithms designed for tabular settings have limitations when applied to function approximation settings with a large number of effective states. Recent work has centered around the functional approximation setting, especially in the linear setting, where the value function (or transition model) can be represented using a linear function of a known feature mapping \citep{jin2021pessimism,cai2020provably,zanette2021cautiously}. 
As the linear Markov decision process (MDP) assumption is rigid and rather restrictive in practice, \citet{wang2020provably} explores the kernel optimal least squares value iteration (KOVI) algorithm \citep{yang2020function} for general function approximation. 
In contrast, model-based methods rely on their ability to estimate the transition function using a parameterized model, such as a neural network. Instead of employing dynamic programming methods to fit the model, model-based approaches leverage their ability to effectively utilize large and diverse datasets to estimate the transition function \citep{yu2021combo,janner2019trust,uehara2021pessimistic}.
Both of the methods presented above require a large amount of data to learn a state-action or transition function. In our work, we use reward-free data (i.e., unlabeled data) to improve the performance of learning a state-action function. On the theoretical front, \citet{yin2022offline} explore offline reinforcement learning with differentiable function class approximations, extending to non-linear function approximation. \citet{blanchet2024double} investigate distributionally robust offline reinforcement learning (robust offline RL), which aims to identify an optimal policy from offline datasets that remains effective in perturbed environments. Meanwhile, \citet{hu2024bayesian} addresses the fundamental challenge of transitioning from offline learning to online fine-tuning.

\subsection{Offline Data Sharing}
Data sharing strategies in multi-task reinforcement learning (RL) have shown effectiveness, as observed in works such as \citet{yu2021conservative, eysenbach2020rewriting,chen2021learning}. This involves reusing data across different tasks by relabeling rewards, thereby enhancing performance in multi-task offline RL scenarios.
Prior work has employed various relabeling strategies. These include uniform labeling \citep{kalashnikov2021mt}, labeling based on metrics such as estimated $Q$-values \citep{yu2021conservative}, and labeling based on distances to states in goal-conditioned settings \citep{chen2021learning}. However, these approaches either necessitate access to the functional form of the reward for relabeling or are confined to goal-conditioned settings.
On the other hand, \citet{yu2022leverage} proposes a straightforward strategy by assigning zero rewards to unlabeled data. On the other hand, \citet{hu2023provable} employs linear regression to label rewards for unlabeled data. These approaches present alternative and potentially simpler methods for relabeling, especially in scenarios where direct access to the reward function is challenging or unavailable. In our work, we propose kernel ridge regression to exploit unlabeled data which under certain conditions can be reduced to linear regression.

\section{Background}
\subsection{Episodic Markov Decision Process}\label{sec:markov_decision}
Consider an episodic MDP \citep{yang2020function,sutton2018reinforcement}, denoted as $\mathcal{M} = \left(\mathcal{S}, \mathcal{A}, H, \mathcal{P},r\right)$ with state space $\mathcal{S}$, action space $\mathcal{A}$, horizon $H$, transition function $\mathcal{P} = \{\mathcal{P}_h\}_{h\in [H]}$, and reward function $r = \{r_h\}_{h\in [H]}$. We assume that the reward function is bounded, that is, $r_h\in [0, 1]$. For any policy $\pi = \{\pi_h\}_{h\in [H]}$ and $h\in [H]$, we define the state-value function $V_h^\pi: \mathcal{S}\rightarrow\mathbb{R}$ and the action-valued function (Q-function) $Q^\pi_h: \mathcal{S}\times\mathcal{A}\rightarrow \mathbb{R}$ as
    $V^\pi_h (s) = \mathbb{E}_{\pi} \left[\sum_{t=h}^{H} r_t(s_t, a_t) | s_h = s\right]$ and $
    Q^\pi_h (s, a) = \mathbb{E}_\pi \left[\sum_{t=h}^{H} r_t(s_t, a_t) | s_h = s, a_h = a\right].$
These two functions satisfy the well-known Bellman equation: 
$V_h^\pi(s)=\left\langle Q_h^\pi(s, \cdot), \pi_h(\cdot \mid s)\right\rangle_{\mathcal{A}}$ and $Q_h^\pi(s, a)=\mathbb{E}\left[r_h\left(s_h, a_h\right)+V_{h+1}^\pi\left(s_{h+1}\right) \mid s_h=s, a_h=a\right]$. 
For any function $f: \mathcal{S}\rightarrow \mathbb{R}$, we define the transition operator at each step $h\in [H]$ as
$\left(\mathbb{P}_h f\right)(s, a)=\mathbb{E}\left[ f\left(s_{h+1}\right) \mid s_h=s, a_h=a\right]$, 
and define the Bellman operator as 
$ \left(\mathbb{B}_h f\right)(s, a)  =\mathbb{E}\left[r_h\left(s_h, a_h\right) \mid s_h=s, a_h=a\right]+\left(\mathbb{P}_h f\right)(s, a)$.
Similarly, for all $h\in[H]$, the Bellman optimality equations defined as  
$V^*_h(s)=\sup _{a \in \mathcal{A}} Q^*_h(s, a)$ and $Q^*_h(s, a)=\left(\mathbb{B}_h V^*_{h+1}\right)(s, a)$.
Meanwhile, the optimal policy $\pi^*$ satisfies
$\pi^*_h(\cdot \mid s)=\underset{\pi_h}{\operatorname{argmax}}\left\langle Q^*_h(s, \cdot), \pi_h(\cdot \mid s)\right\rangle_{\mathcal{A}}$ and $V^*_h(s)=\left\langle Q^*_h(s, \cdot), \pi^*_h(\cdot \mid s)\right\rangle_{\mathcal{A}}.$
Reinforcement learning aims to learn a policy maximizing expected cumulative reward. Accordingly, we define the performance metric(i.e., suboptimality)  as 
\begin{equation}\label{eqn: performance metric}
\operatorname{SubOpt}(\pi; s)=V^{\pi^*}_1(s)-V^\pi_1(s).
\end{equation}

\subsection{Assumption of Offline Data}
\label{sec:assumption_offline_data}
In offline RL setting, a learner uses pre-collected dataset $\mathcal{D}$, which consists of $N$ trajectories $\left\{\left({s^{\prime}}_h^\tau, {a^{\prime}}_h^\tau, r_h^\tau\right)\right\}_{\tau, h=1}^{N, H}$, generated by some fixed but unknown MDP $\mathcal{M}$ under the behavior policy $\pi^\mathrm{b}$ in the following manner:
${s^{\prime}}_1^\tau \sim \rho^{\mathrm{b}}, {a^{\prime}}_h^\tau \sim \pi_h^{\mathrm{b}}\left(\cdot \mid {s^{\prime}}_h^\tau\right)$ and ${s^{\prime}}_{h+1}^\tau \sim \mathcal{P}_h\left(\cdot \mid {s^{\prime}}_h^\tau, {a^{\prime}}_h^\tau\right), 1 \leq h \leq H$.
Here $\rho^\mathrm{b}$ represents a predetermined initial state distribution associated with the static dataset. The learner may also have partial observations of the reward in addition to the above state-action observations. More elaborately, we assume access to both a labeled dataset $\mathcal{D}_1 = \left\{\left({s^{\prime}}_h^\tau, {a^{\prime}}_h^\tau, r_h^\tau\right)\right\}_{\tau, h=1}^{N_1, H}$, and an unlabeled dataset $\mathcal{D}_2 = \left\{\left({s^{\prime}}_h^{\tau+N_1}, {a^{\prime}}_h^{\tau+N_1} \right)\right\}_{\tau, h=1}^{N_2, H}$. We utilize the estimated reward function with parameter $\widetilde{\theta}$, as determined in equation (\ref{eqn:pess_reward}), to relabel dataset $\mathcal{D}_2$. The relabeled dataset, denoted as $\mathcal{D}^{\widetilde{\theta}}_2 = \left\{\left({s^{\prime}}_h^{\tau+N_1}, {a^{\prime}}_h^{\tau+N_1}, \widetilde{r}_h^{\widetilde{\theta}_h}({s^{\prime}}_h^{\tau+N_1}, {a^{\prime}}_h^{\tau+N_1})\right)\right\}_{\tau, h=1}^{N_2, H}$.

\subsection{Reproducing Kernel Hilbert Space}\label{sec:rkhs}
Consider a reproducing kernel Hilbert space (RKHS) as a function space. For simplicity, let $z = (s, a)$ denote a state-action pair and denote $\mathcal{Z} = \mathcal{S}\times\mathcal{A}$. Without loss of generality, we regard $\mathcal{Z}$ as a compact subset of $\mathbb{R}^m$, where the dimension $m$ is fixed. Let $k: \mathcal{Z}\times \mathcal{Z}\rightarrow \mathbb{R}$ be a positive definite continuous kernel and its corresponding Gram matrix $[K]_{i,j} = k(z_i, z_j), \forall i, j\in [m]$. Note that $K$ is positive semi-definite. Define $\mathcal{H}_k$ as the RKHS induced by $k$, containing a family of functions defined in $\mathcal{Z}$. Let $\langle\cdot, \cdot\rangle_{\mathcal{H}_k}: \mathcal{H}_k\times\mathcal{H}_k\rightarrow\mathbb{R}$ and $\|\cdot\|_{\mathcal{H}_k}:\mathcal{H}_k\rightarrow\mathbb{R}$ denote the inner product and the norm on $\mathcal{H}_k$, respectively. According to the reproducing property, for all $f\in\mathcal{H}_k$, and $z\in\mathcal{Z}$, holds $f(z) = \langle f, k(\cdot, z)\rangle_{\mathcal{H}_k}$. For more details and different characterizations of RKHS, see \citet{aronszajn1950theory,berlinet2011reproducing}. Without loss of generality, we assume that $\sup_{z\in \mathcal{Z}}k(z, z)\leq 1$. 

Let $\mathcal{L}^2(\mathcal{Z})$ be the set of square-integrable functions on $\mathcal{Z}$ with respect to the Lebesgue measure and let $\langle\cdot,\cdot  \rangle_{\mathcal{L}^2}$ be the inner product on $\mathcal{L}^2(\mathcal{Z})$. The kernel function $k$ induces an integral operator $T_k: \mathcal{L}^2(\mathcal{Z})\rightarrow\mathcal{L}^2(\mathcal{Z})$ defined as 
$T_k f(z)=\int_{\mathcal{Z}} k\left(z, z^{\prime}\right)  f\left(z^{\prime}\right) \mathrm{d} z^{\prime}$  for all $f \in \mathcal{L}^2(\mathcal{Z})$. 
By Mercer's theorem \citep{steinwart2008support}, the integral operator $T_k$ has countable and positive eigenvalues $\{\sigma_i\}_{i\geq 1}$ and the corresponding eigenfunctions $\{\psi_i\}_{i\geq 1}$. Then, the kernel function admits a spectral expansion 
$k\left(z, z^{\prime}\right)=\sum_{i=1}^{\infty} \sigma_i  \psi_i(z)  \psi_j\left(z^{\prime}\right)$. 
Moreover, the RKHS $\mathcal{H}_k$ can be written as a subset of $\mathcal{L}^2(\mathcal{Z})$ such that $\mathcal{H}_k=\left\{f \in \mathcal{L}^2(\mathcal{Z}): \sum_{i=1}^{\infty} (1/\sigma_i)\left\langle f, \psi_i\right\rangle_{\mathcal{L}^2}^2<\infty\right\}$, 
and the inner product of $\mathcal{H}_k$ also can be written as 
$\langle f, g\rangle_{\mathcal{H}_k}=\sum_{i=1}^{\infty} \left(1 / \sigma_i\right) \left\langle f, \psi_i\right\rangle_{\mathcal{L}^2} \left\langle g, \psi_i\right\rangle_{\mathcal{L}^2}$ for all $f, g \in \mathcal{H}_k$. 
With the above construction, the scaled eigenfunctions $\{\sqrt{\sigma_i}\psi_i\}_{i\geq 1}$ form an orthonormal basis for $\mathcal{H}_k$. We define the mapping $\phi: z \mapsto k(z, \cdot)$ to transform data from $\mathcal{Z} = \mathcal{S}\times \mathcal{A}$ to the (possibly infinite-dimensional) RKHS $\mathcal{H}_k$, which satisfies $k(z, z^\prime) = \langle \phi(z), \phi(z^\prime)\rangle_{\mathcal{H}_k}$ for all $z, z^\prime\in \mathcal{Z}$ \citep[Lemma 4.19]{steinwart2008support}. 
We define the maximum information gain \citep{srinivas2009gaussian} to describe the complexity of $\mathcal{H}_k$:
\begin{equation}\label{eqn:information gain}
G(n, \lambda)=\sup \left\{\frac{1}{2}  \log \operatorname{det}\left(I+K_{\mathcal{D}} / \lambda\right): \mathcal{D} \subset \mathcal{Z},|\mathcal{D}| \leq n\right\},
\end{equation}
where $K_\mathcal{D}$ is the Gram matrix for the set $\mathcal{D}$. Furthermore, the magnitude of maximal information gain $G(n, \lambda)$ depends on how rapidly the eigenvalues decay to zero, serving as a proxy dimension of $\mathcal{H}$ in the case of an infinite-dimensional space. If $\mathcal{H}_k$ is of finite rank, we have that $G(n, \lambda) = \mathcal{O}(d \log n)$ \citep{yang2020function}, where $d$ is the rank of $\mathcal{H}_k$ -- referred as the $d$-finite spectrum. In the following, we present several conditions that are often used in the analysis of the RKHS property of $\mathcal{H}_k$ \citep{yang2020function,vakili2021information,yeh2023sample} characterizing the eigenvalue decay of $\mathcal{H}_k$.

\begin{assumption}\label{ass:eigen}
    The integral operator $T_K$ has eigenvalues $\{\sigma_j\}_{j\geq 1}$ and the associated eigenfunctions $\{\psi_j\}_{j\geq 1}$.
     We assume that $\{\sigma_j\}_{j\geq 1}$ satisfies one of the following conditions for some constant $d > 0$.
    \begin{itemize}
        \item $d$-finite spectrum: $\sigma_j=0, \forall j > d$, where $d$ is a positive integer.
        \item $d$-exponential decay: there exists some constants $C_1, C_2>0$ such that $\sigma_j \leq C_1 \cdot \exp \left(-C_2 \cdot j^d\right),$ $\forall j \geq 1$, where $d > 0$.
        \item $d$-polynomial decay: there exists some constants $C_1>0$ such that $\sigma_j \leq C_1 \cdot j^{-d}$ $\forall j \geq 1,$, where $d>1$.
    \end{itemize}
    For both $d$-exponential decay and $d$-polynomial decay, we assume that there exists $C_\psi > 0$ such that $\sup _{z \in \mathcal{Z}} \sigma_j^\tau \cdot\left|\psi_j(z)\right| \leq C_\psi$ holds for all $j \geq 1$ and $\tau \in[0,1 / 2)$.
\end{assumption}

For instance, let $\mathcal{Z} \subset \mathbb{R}^d$ and assume the kernel function $k(z, z^\prime) \leq 1$, the Squared Exponential kernel, defined as
\begin{equation}\label{eq:squared_exponential_kernel}
    k(z, z^\prime) = \exp\left(-\frac{|z-z^\prime|^2}{2l^2}\right)
\end{equation}
where $l$ is a lengthscale parameter, exhibits $d$-exponential decay. Similarly, the Matérn kernel, given by
$$
    k(z, z^\prime) = \frac{2^{1-\nu}}{\Gamma(\nu)} r^\nu B_\nu(r), \quad\text{where}~ r = \frac{\sqrt{2\nu}}{l}|z-z^\prime|,
$$
is characterized by $d$-polynomial decay. Here, $\nu$ determines the smoothness of sample paths (with smaller $\nu$ producing rougher paths), and $B_\nu(r)$ is the modified Bessel function. Theorem 5 in \citet{srinivas2009gaussian} provides detailed proof for these kernel properties.

We assume that the Bellman operator maps any bounded function onto a bounded RKHS norm ball, which is a common assumption used in the function approximation \citep{yang2020function,jin2018q}.
\begin{assumption}\label{ass:bdd}
    Define the function class $\mathcal{Q}^*=\left\{f \in \mathcal{H}_k:\|f\|_{\mathcal{H}_k} \leq R_Q H\right\}$ for some fixed constant $R_Q>0$. Then, for any $h \in[H]$ and any $Q: \mathcal{S} \times \mathcal{A} \rightarrow[0, H]$, it holds that $\mathbb{B}_h V \in \mathcal{Q}^*$ for $V(s)=\max _{a \in \mathcal{A}} Q(s, a)$.
\end{assumption}
A sufficient condition for Assumption \ref{ass:bdd} to hold is when $\mathcal{S} = [0, 1]^m$ and that
$r_h(\cdot, \cdot), \mathcal{P}_h\left(s^{\prime} \mid \cdot, \cdot\right) \in\left\{f \in \mathcal{H}_k:\|f\|_{\mathcal{H}_k} \leq 1\right\}$ for all $h \in[H], \forall s^{\prime} \in \mathcal{S}$.
To see this, suppose this condition holds, then for any integrable $V: \mathcal{S}\rightarrow [0, H]$ holds, 
\begin{equation*}
    \begin{aligned}
        \| r_h + \mathbb{P}_h V\|_{\mathcal{H}_k} &\leq \|r_h\|_{\mathcal{H}_k} + \|\mathbb{P}_h V\|_{\mathcal{H}_k}
        \leq 1 + \left\|\int_{s^\prime\in \mathcal{S}} \mathcal{P}_h(s^\prime | \cdot, \cdot) V(s^\prime)  \ d s^\prime \right\|_{\mathcal{H}_k}\\
        &\leq 1 + \int_{s^\prime\in \mathcal{S}} \|\mathcal{P}_h(s^\prime | \cdot, \cdot) V(s^\prime) \|_{\mathcal{H}_k} \ d s^\prime 
        = 1 + \int_{s^\prime\in \mathcal{S}} \|\mathcal{P}_h(s^\prime | \cdot, \cdot)\|_{\mathcal{H}_k} \|V(s^\prime) \|_{\mathcal{H}_k} \ d s^\prime\\
        &\leq 1 + H\int_{s^\prime\in \mathcal{S}}  \ d s^\prime = H+1.
    \end{aligned}
\end{equation*}
Note that under the assumptions of measurability and boundedness on the kernel $k$, $\|\mathbb{P}_h V\|_{\mathcal{H}_k}\in \mathcal{H}_k$, which is given in \citet[section 3.1]{muandet2017kernel}.
Thus, Assumption \ref{ass:bdd} holds with $R_Q = 2$. This assumption is mild and is also used in \citet{yang2020function}. Similar assumptions are used in linear MDP's, which are much stricter \citep{jin2021pessimism,zanette2020frequentist}.
\subsection{Pessimistic Value Iteration and Kernel Setting}
We consider the pessimistic value iteration, i.e., PEVI \citep{jin2021pessimism} algorithm, described in Algorithm \ref{alg:PEVI}, as the backbone algorithm. This is a model-free, theoretically guaranteed offline algorithm. The fundamental insight of PEVI lies in the incorporation of a penalty function, which essentially introduces a sense of pessimism, into the value iteration algorithm. The key challenge to extend PEVI to kernel setting is that the dimension (even effective dimension) of the kernel based model (when interpreted as linear model) is divergent. 
In addition, we apply the data splitting method \citep{rashidinejad2021bridging,xie2021policy}. 
As introduced in \citet{rashidinejad2021bridging}, data splitting makes sure that the estimated value $\widehat{V}_{h+1}$ and estimated Bellman operator $\widehat{\mathbb{B}}_h$ are estimated using different subsets of $\mathcal{D}$, this yields conditional independence that is required in bounding concentration terms of the form $\left(\widehat{\mathbb{B}}_h-\mathbb{B}_h\right) \widehat{V}_{h+1}$, and hence the suboptimality can be reduced by a factor of $\sqrt{d}$. However, applied naively, this data splitting induces one undesired $\sqrt{H}$ factor in the optimality as we need to split $\mathcal{D}$ into $H$ folds and thus each $\mathbb{B}_h$ is estimated using only $N / H$ episodes of data.
Further details of the PEVI algorithm can be found in Appendix \ref{sec:proof}.


\section{Unsupervised Data Sharing}
Our algorithm comprises two main components. The first part involves employing kernel ridge regression to learn the reward function using the labeled dataset and constructing the confidence set. Next, to mitigate overestimation in reward prediction, we construct the pessimism reward parameter $\tilde{\theta}$ within the confidence set. Section \ref{section:reward estimation} discusses this in more detail. The second part involves using the pessimistic reward estimator $\tilde{\theta}$ to relabel the entire dataset, which is a combination of the labeled dataset and the relabeled dataset. Following this, we employ the PEVI algorithm with kernel approximation and data splitting (refer to Algorithm \ref{alg:PEVI_RKHS}) to determine the optimal policy. The detailed steps of the algorithm are outlined in Algorithm \ref{alg:main}.
\begin{algorithm}[ht]
\caption{Data Sharing, Kernel Approximation}\label{alg:main}
\begin{algorithmic}[1]
\State \textbf{Data:} Labeled dataset $\mathcal{D}_1$, and unlabeled dataset $\mathcal{D}_2$.
\State \textbf{Input:} Parameter $\beta_h(\delta), \delta, B, \nu, \lambda$.
\State Learn the reward function $\widehat{\theta}_1, \cdots, \widehat{\theta}_H$ from $\mathcal{D}_1$ with 
\begin{equation*}
    \widehat{\theta}_h = \underset{\theta_h \in \mathcal{H}_k}{\operatorname{arg min }}\ \sum_{\tau=1}^{N_1}\left[r_h^\tau-\widehat{r}_h^{\theta_h}({s^\prime}_h^\tau, {a^\prime}_h^\tau)\right]^2+\nu\|\theta_h\|_{\mathcal{H}_k}^2.
\end{equation*}
\State Construct the pessimistic reward function with parameter $\tilde{\theta} := \{\tilde{\theta}_h\}_{h=1}^H$ satisfy
\begin{equation}\label{eqn:pess_reward}
     \tilde{r}_h^{\tilde{\theta}_h}(s, a) = \max\left\{\left\langle\widehat{\theta}_h, \phi\left(s, a\right)\right\rangle_{\mathcal{H}_k} - \beta_h(\delta) \left\|(\Lambda_h^{\mathcal{D}_1})^{-\frac{1}{2}}\phi(s, a)\right\|_{\mathcal{H}_k}, 0\right\}.
\end{equation}
\State Annotate the reward in $\mathcal{D}^{{\theta}}$ with parameter $\theta = \widetilde{\theta}$, where $\mathcal{D}^{{\theta}}$ is a combination of the labeled dataset $\mathcal{D}_1$ and the unlabeled dataset $\mathcal{D}_2^\theta$. 
\State Learn the policy from the dataset $\widetilde{\mathcal{D}}_h^{\widetilde{\theta}}$ using Algorithm \ref{alg:PEVI_RKHS} in Appendix:
\begin{equation*}
\left\{\widehat{\pi}_h\right\}_{h=1}^H \leftarrow \operatorname{PEVI}\left(\mathcal{D}^{\widetilde{\theta}}, B, \lambda\right) .
\end{equation*}
\State \textbf{Result:} $\widehat{\pi} = \left\{\widehat{\pi}_h\right\}_{h=1}^H$.
\end{algorithmic}
\end{algorithm}
\subsection{Pessimistic Reward Estimation}\label{section:reward estimation}
We utilize labeled dataset $\mathcal{D}_1$ to train a reward function $\widehat{r}_h^{\theta_h}$, using it to label the unlabeled data. Assume that the observed reward is generated as  $r_h^\tau = r_h({s^\prime}_h^\tau, {a^\prime}_h^\tau) + \epsilon_h^\tau$ 
where $r_h: (s, a)\mapsto \langle \theta^*_h, \phi(s, a)\rangle_{\mathcal{H}_k}$ satisfies $r_h(s, a)\in [0,1 ]$ for all $(s, a)\in \mathcal{S}\times \mathcal{A}$, and $\epsilon^\tau_h$ are i.i.d. centered $1$-SubGaussian noise. Here $\theta^*_h\in \mathcal{H}_k$ is an unknown parameter, and $\phi: \mathcal{S}\times \mathcal{A}\rightarrow \mathcal{H}_k$ is a known feature map defined in Section \ref{sec:rkhs}. Furthermore, we assume that $\|\theta^*_h\|_{\mathcal{H}_k} \leq \mathscr{S}$. 
We learn the reward function from labeled data through a kernel ridge regression problem. Using the feature representation, we write
\begin{equation}\label{eqn:reg_theta}
\widehat{\theta}_h = \underset{\theta_h \in \mathcal{H}_k}{\operatorname{arg min }}\ \sum_{\tau=1}^{N}\left[r_h^\tau-\widehat{r}_h^{\theta_h}({s^\prime}_h^\tau, {a^\prime}_h^\tau)\right]^2+\nu\|\theta_h\|_{\mathcal{H}_k}^2,
\end{equation}
where $\widehat{r}_h^{\theta_h}(s, a) = \left\langle\phi\left(s, a\right), \theta_h\right\rangle_{\mathcal{H}_k}$ with parameter $\theta_h$. However, this method leads to an overestimation of predicted reward values, as highlighted in \citet{yu2022leverage}. A novel algorithm called Provable Data Sharing (PDS) is introduced in \citet{hu2023provable} to mitigate this problem. PDS incorporates uncertainty penalties into the learned reward functions and integrates seamlessly with existing offline RL algorithms in a linear MDP setting. We extend the application of this algorithm to the kernel setting.

To address the problem of overestimating predicted rewards, we analyze the uncertainty in the learned reward function. The previous solution defines the center of the ellipsoidal confidence set: 
\begin{equation}\label{eqn:confidence set}
\mathcal{C}_h(\delta)=\left\{\theta \in \mathcal{H}_k:\left\|\theta-\hat{\theta}_h\right\|_{\Lambda_h^{\mathcal{D}_1}} \leq \beta_h(\delta)\right\},
\end{equation}
where $\Vert \theta\Vert^2_{\Lambda_h^{\mathcal{D}_1}} = \left\langle \theta, \Lambda_h^{\mathcal{D}_1}\theta\right\rangle_{\mathcal{H}_k}$ and $\Lambda_h^{\mathcal{D}_1} = \sum_{\tau=1}^{N_1} \phi({s^\prime}_h^\tau, {a^\prime}_h^\tau) \phi({s^\prime}_h^\tau, {a^\prime}_h^\tau)^\top+\nu I_{\mathcal{H}_k}$ is a positive definite operator, and $\beta_h(\delta)$ is its radius which follows the following proposition. 
\begin{proposition}\label{eqn:radius}   
We define $\beta_h(\delta)$ with the labeled data set $\mathcal{D}_1$ by
$\beta_h(\delta)=\sqrt{\nu}\mathscr{S}+\sqrt{\log\frac{\operatorname{det}\left[\nu I+K_h^{\mathcal{D}_1}\right]}{\delta^2}}$, where $K_h^{\mathcal{D}_1}$ is the Gram matrix constructed from the dataset $\mathcal{D}_1$ as $\left[K_h^{\mathcal{D}_1}\right]_{\tau, \tau^\prime} = k({z^\prime}^\tau_h, {z^\prime}^{\tau^\prime}_h)$, where ${z^\prime}_h^\tau = ({s^\prime}^\tau_h, {a^\prime}^\tau_h)$ for $\tau, \tau^\prime \in [N_1]$ and for each $h\in [H]$ and $\delta \in(0,1)$. Then, with probability at least $1-\delta$ we have $\left\|\widehat{\theta}_h-\theta^*_h\right\|_{\Lambda_h^{\mathcal{D}_1}} \leq \beta_h(\delta)$, 
where $\widehat{\theta}_h$ is the solution of equation (\ref{eqn:reg_theta}). Furthermore, consider the information gain $G(N, \nu)$, defined in equation (\ref{eqn:information gain}) of the matrix $K_h^{\mathcal{D}_1}$ and set $\nu = 1 + 1/{N_1}$, $\beta_h(\delta)$ can be rewritten as 
\begin{equation}\label{eqn: ellipsoid radius}
\beta_h(\delta)=\sqrt{\nu}\mathscr{S}+\sqrt{2 G(N_1, \nu)+1+\log\frac{1}{\delta^2}}. 
\end{equation}
Moreover, define 
$\mathcal{C}_h(\delta)=\left\{\theta \in \mathcal{H}_k:\left\|\theta-\widehat{\theta}_h\right\|_{\Lambda_h^{\mathcal{D}_1}} \leq \beta_h(\delta)\right\}$,
we have $\mathbb{P}\left(\theta^*_h\in C_h(\delta)\right)\geq 1 - \delta$.
\end{proposition}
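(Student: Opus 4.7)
The plan is to follow the standard self-normalized concentration template from the kernelized bandit literature (Abbasi-Yadkori et al.; Chowdhury \& Gopalan, 2017), adapted to the per-step reward regression at step $h$. First I would write the closed-form minimizer of the ridge objective in (\ref{eqn:reg_theta}) as $\widehat{\theta}_h = (\Lambda_h^{\mathcal{D}_1})^{-1} \sum_{\tau=1}^{N_1} \phi({z^\prime}_h^\tau) \, r_h^\tau$, and substitute $r_h^\tau = \langle \theta^*_h,\phi({z^\prime}_h^\tau)\rangle_{\mathcal{H}_k} + \epsilon_h^\tau$. This gives the decomposition
\begin{equation*}
\widehat{\theta}_h - \theta^*_h \;=\; (\Lambda_h^{\mathcal{D}_1})^{-1}\!\sum_{\tau=1}^{N_1} \phi({z^\prime}_h^\tau)\,\epsilon_h^\tau \;-\; \nu\,(\Lambda_h^{\mathcal{D}_1})^{-1}\theta^*_h,
\end{equation*}
which I then measure in the $\Lambda_h^{\mathcal{D}_1}$-norm and split with the triangle inequality.

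The deterministic term is easy: $\|\nu (\Lambda_h^{\mathcal{D}_1})^{-1}\theta^*_h\|_{\Lambda_h^{\mathcal{D}_1}} = \sqrt{\nu}\,\|\theta^*_h\|_{\nu(\Lambda_h^{\mathcal{D}_1})^{-1}} \le \sqrt{\nu}\,\|\theta^*_h\|_{\mathcal{H}_k} \le \sqrt{\nu}\,\mathscr{S}$, using $\nu(\Lambda_h^{\mathcal{D}_1})^{-1} \preceq I_{\mathcal{H}_k}$ by construction of $\Lambda_h^{\mathcal{D}_1}$ together with the norm bound $\|\theta^*_h\|_{\mathcal{H}_k}\le\mathscr{S}$. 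The noise term is the martingale piece $S_{N_1} := \sum_{\tau=1}^{N_1} \phi({z^\prime}_h^\tau)\,\epsilon_h^\tau$, and I need to bound $\|S_{N_1}\|_{(\Lambda_h^{\mathcal{D}_1})^{-1}}$ by a self-normalized concentration inequality. Here I would invoke the RKHS-valued version of the self-normalized bound (Theorem 1 of \citet{chowdhury2017kernelized}, and equivalently the method-of-mixtures argument of Abbasi-Yadkori et al.), which yields with probability at least $1-\delta$,
\begin{equation*}
\|S_{N_1}\|_{(\Lambda_h^{\mathcal{D}_1})^{-1}} \;\le\; \sqrt{\,\log\frac{\det(\nu I + K_h^{\mathcal{D}_1})}{\delta^2}\,},
\end{equation*}
using the RKHS determinant identity $\det(I_{\mathcal{H}_k} + \nu^{-1}\Phi\Phi^\top) = \det(I_{N_1} + \nu^{-1}K_h^{\mathcal{D}_1})$ to express the mixture-of-supermartingales log term in Gram-matrix form. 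Combining the two bounds gives exactly the stated expression for $\beta_h(\delta)$.

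For the second form, I would just substitute the definition of the maximum information gain $G(N_1,\nu)$ from (\ref{eqn:information gain}): since $K_h^{\mathcal{D}_1}$ is a Gram matrix of size $N_1$, we have $\tfrac12 \log\det(I + K_h^{\mathcal{D}_1}/\nu) \le G(N_1,\nu)$, and with the choice $\nu = 1+1/N_1$ the identity $\log\det(\nu I + K_h^{\mathcal{D}_1}) = N_1 \log \nu + \log\det(I + K_h^{\mathcal{D}_1}/\nu) \le 1 + 2 G(N_1,\nu)$ delivers the displayed form (\ref{eqn: ellipsoid radius}). The final containment statement $\mathbb{P}(\theta^*_h \in \mathcal{C}_h(\delta)) \ge 1-\delta$ is then just the definition of $\mathcal{C}_h(\delta)$ applied to the high-probability inequality we established.

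The main obstacle is the RKHS self-normalized concentration step: the operator $\Lambda_h^{\mathcal{D}_1}$ acts on the possibly infinite-dimensional space $\mathcal{H}_k$, so Abbasi-Yadkori's finite-dimensional method-of-mixtures argument does not directly transfer; one must either argue via a projection onto the finite-dimensional span of the observed feature vectors (where the kernel-trick determinant identity is valid) or cite the ready-made RKHS extension in \citet{chowdhury2017kernelized}. Everything else is routine algebra modulo that step.
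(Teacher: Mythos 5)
Your proposal is correct and follows essentially the same route as the paper's proof: the same ridge-estimator decomposition into a noise term plus a $\nu(\Lambda_h^{\mathcal{D}_1})^{-1}\theta^*_h$ bias term, the same self-normalized RKHS concentration bound of \citet{chowdhury2017kernelized} (the paper spells out the intermediate algebra $\|\Phi^\top E\|^2_{\Lambda^{-1}} = E^\top K(K+\nu I)^{-1}E \le E^\top[(K+\eta I)^{-1}+I]^{-1}E$ with $\nu=1+\eta$ that your citation of the ready-made theorem glosses over), and the same $\log\det(\nu I+K) \le 2G(N_1,\nu)+N_1(\nu-1)$ conversion with $\nu=1+1/N_1$. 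Your direct triangle inequality in the $\Lambda_h^{\mathcal{D}_1}$-norm is just a cleaner packaging of the paper's Cauchy--Schwarz argument with $x=\Lambda_h^{\mathcal{D}_1}(\widehat{\theta}_h-\theta^*_h)$, so there is no substantive difference.
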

\begin{proof}
Please refer to Appendix \ref{proof:lemma4.1} for detailed proof.
\end{proof}

In Proposition \ref{eqn:radius}, the uncertainty of the learned reward function depends on the maximum information gain of the Gram matrix $K_h^{\mathcal{D}_1}$. However, finding the optimal parameter within the confidence set is computationally inefficient. To address this challenge, \citet{hu2023provable} proposes an approach that preserves the pessimistic property of the offline algorithm. This method uses pessimistic estimation, allowing the algorithm to remain pessimistic while mitigating computational challenges. Formally, we construct the pessimistic reward function $\tilde{r}_h^{\tilde{\theta}_h}(s, a)$ for the parameter $\tilde{\theta}_h$ as
\begin{equation}\label{eqn:reward func}
\begin{aligned}
    \tilde{r}_h^{\tilde{\theta}_h}(s, a) &= \max\left\{\left\langle\widehat{\theta}_h, \phi\left(s, a\right)\right\rangle_{\mathcal{H}_k} - \beta_h(\delta) \left\|(\Lambda_h^{\mathcal{D}_1})^{-\frac{1}{2}}\phi(s, a)\right\|_{\mathcal{H}_k}, 0\right\}.
\end{aligned}
\end{equation}
The equation (\ref{eqn:reward func}) is a direct consequence of the following lemma derived from Cauchy-Schwarz inequalities.
\begin{lemma}
    $
        \left|\left\langle\theta_h - \widehat{\theta}_h, \phi\left(s, a\right)\right\rangle_{\mathcal{H}_k}\right| \leq \beta_h(\delta) \left\|(\Lambda_h^{\mathcal{D}_1})^{-\frac{1}{2}}\phi(s, a)\right\|_{\mathcal{H}_k}
    $    for any $\theta_h\in \mathcal{C}_h(\delta), h\in [H]$.

\end{lemma}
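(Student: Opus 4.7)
The plan is to prove the stated bound as a one-line consequence of the Cauchy--Schwarz inequality in the RKHS, once the inner product is rewritten using the positive definite operator $\Lambda_h^{\mathcal{D}_1}$ as a metric. The key observation is that $\Lambda_h^{\mathcal{D}_1} = \sum_\tau \phi({s^\prime}_h^\tau,{a^\prime}_h^\tau)\phi({s^\prime}_h^\tau,{a^\prime}_h^\tau)^\top + \nu I_{\mathcal{H}_k}$ is a strictly positive, self-adjoint operator on $\mathcal{H}_k$ (since $\nu > 0$), so its square root $(\Lambda_h^{\mathcal{D}_1})^{1/2}$ and inverse square root $(\Lambda_h^{\mathcal{D}_1})^{-1/2}$ are well defined self-adjoint bounded operators, and these two operators commute with each other and with $\Lambda_h^{\mathcal{D}_1}$.

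First, I would insert an identity operator written as $(\Lambda_h^{\mathcal{D}_1})^{1/2}(\Lambda_h^{\mathcal{D}_1})^{-1/2}$ into the inner product and use self-adjointness to transfer one factor onto each side, obtaining
\begin{equation*}
\bigl\langle \theta_h - \widehat{\theta}_h,\ \phi(s,a)\bigr\rangle_{\mathcal{H}_k}
= \bigl\langle (\Lambda_h^{\mathcal{D}_1})^{1/2}(\theta_h - \widehat{\theta}_h),\ (\Lambda_h^{\mathcal{D}_1})^{-1/2}\phi(s,a)\bigr\rangle_{\mathcal{H}_k}.
\end{equation*}
Applying Cauchy--Schwarz in $\mathcal{H}_k$ then yields
\begin{equation*}
\bigl|\langle \theta_h - \widehat{\theta}_h,\ \phi(s,a)\rangle_{\mathcal{H}_k}\bigr|
\leq \bigl\|(\Lambda_h^{\mathcal{D}_1})^{1/2}(\theta_h - \widehat{\theta}_h)\bigr\|_{\mathcal{H}_k}
\cdot \bigl\|(\Lambda_h^{\mathcal{D}_1})^{-1/2}\phi(s,a)\bigr\|_{\mathcal{H}_k}.
\end{equation*}
By the definition of the weighted norm $\|\cdot\|_{\Lambda_h^{\mathcal{D}_1}}$ given immediately after equation~(\ref{eqn:confidence set}), the first factor equals $\|\theta_h - \widehat{\theta}_h\|_{\Lambda_h^{\mathcal{D}_1}}$. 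Finally, by the assumption that $\theta_h \in \mathcal{C}_h(\delta)$ and the definition of $\mathcal{C}_h(\delta)$ in~(\ref{eqn:confidence set}), this factor is at most $\beta_h(\delta)$, which completes the proof.

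The argument has essentially no obstacle; the only technical subtlety worth stating carefully is the justification that $(\Lambda_h^{\mathcal{D}_1})^{\pm 1/2}$ exist as bounded self-adjoint operators on the possibly infinite-dimensional $\mathcal{H}_k$. This follows because $\Lambda_h^{\mathcal{D}_1}$ is the sum of a finite-rank positive semi-definite operator and the scaled identity $\nu I_{\mathcal{H}_k}$ with $\nu > 0$, so its spectrum is contained in $[\nu, \nu + \operatorname{tr}(\cdot)]$ and standard functional calculus applies. Everything else is algebraic manipulation using the inner-product structure of $\mathcal{H}_k$ and the definition~(\ref{eqn:confidence set}) of the confidence set.
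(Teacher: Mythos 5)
Your proof is correct and follows the same route as the paper, which also obtains this bound by the weighted Cauchy--Schwarz inequality $|\langle \theta_h-\widehat{\theta}_h,\phi(s,a)\rangle_{\mathcal{H}_k}|\leq \|\theta_h-\widehat{\theta}_h\|_{\Lambda_h^{\mathcal{D}_1}}\,\|\phi(s,a)\|_{(\Lambda_h^{\mathcal{D}_1})^{-1}}$ together with the definition of $\mathcal{C}_h(\delta)$. Your added remark justifying the existence of $(\Lambda_h^{\mathcal{D}_1})^{\pm 1/2}$ via the spectrum being bounded below by $\nu>0$ is a harmless extra level of care beyond what the paper records.
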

The equation $(\ref{eqn:reward func})$ provides a lower bound for the reward function within the confidence set $\mathcal{C}(\delta)$. 
When the labeled data is scarce, or when there is a significant shift in the distribution between the labeled and unlabeled data, the confidence interval  becomes wider and then the equation (\ref{eqn:reward func}) degenerates to $0$, which is reduced to the UDS algorithm \citep{yu2022leverage,hu2023provable}. 


\subsection{Theoretical Analysis}
The suboptimality of the Algorithm \ref{alg:main} is characterized by the following theorem.
\begin{theorem}\label{thm:main thm}
Consider the MDP described in Section ~\ref{sec:markov_decision}. Under Assumption \ref{ass:eigen} and Assumption \ref{ass:bdd}, and suppose the labeled dataset $\mathcal{D}_1$ and unlabeled dataset $\mathcal{D}_2^\theta$ are defined in Section ~\ref{sec:assumption_offline_data}. Define $\mathcal{D}^{\widetilde{\theta}} = \{(s_h^\tau, a_h^\tau, \widetilde{r}_h^{\widetilde{\theta}_h}(s_h^\tau, a_h^\tau))\}_{\tau, h = 1}^{N, H}$, which is a combination of labeled dataset $\mathcal{D}_1$ and unlabeled dataset $\mathcal{D}_2^{\widetilde{\theta}}$ with $N= N_1+N_2$. We partition dataset $\mathcal{D}^{\widetilde{\theta}}$ into $H$ disjoint and equally sized sub dataset $\{\widetilde{\mathcal{D}}_h^{\widetilde{\theta}}\}_{h=1}^H$, where $|\widetilde{\mathcal{D}}^{\widetilde{\theta}}_h| = N_H = N/H$. Let $\mathcal{I}_h=\left\{N_H\cdot (h-1) + 1, \ldots, N_H\cdot h\right\}=\left\{\tau_{h,1}, \cdots, \tau_{h, N_H}\right\}$ satisfy $\widetilde{\mathcal{D}}^{\widetilde{\theta}}_h = \{(s_h^\tau, a_h^\tau, r_h^\tau)\}_{\tau\in \mathcal{I}_h}$. We set $\lambda = 1 + \frac{1}{N}, \nu = 1+\frac{1}{N_1}$ in Algorithm \ref{alg:main}, where
\begin{equation*}
\beta_h(\delta)= \begin{cases}\sqrt{1 + \frac{1}{N_1}}\mathscr{S} + \sqrt{ C_1\cdot d\cdot \log N_1 + \log(\frac{1}{\delta^2})} & d \text {-finite spectrum, } \\ 
    \sqrt{1 + \frac{1}{N_1}}\mathscr{S} + \sqrt{ C_1\cdot \left(\log N_1\right)^{1+\frac{1}{d}} + \log(\frac{1}{\delta^2})} & d \text {-exponential decay, }\\
    \sqrt{1 + \frac{1}{N_1}}\mathscr{S} +  \sqrt{C_1 \cdot(N_1)^{\frac{m+1}{d+m}} \cdot \log (N_1) +\log(\frac{1}{\delta^2})} & d \text {-polynomial decay. }
    \end{cases}
\end{equation*}
\begin{equation*}
B= \begin{cases}C_2 \cdot H\cdot \sqrt{d\log{(N/\delta)}} & d \text {-finite spectrum, } \\ C_2 \cdot H\cdot \sqrt{\left(\log{N/\delta}\right)^{1+1/d}} & d \text {-exponential decay, }\\
    C_2 \cdot N^{\frac{m+1}{2(d+m)}} H^{1-\frac{m+1}{2(d+m)}} \cdot \sqrt{\log (N / \delta)}& d \text {-polynomial decay. }
    \end{cases}
\end{equation*}
Here, $C_1, C_2 > 0$ are absolute constants that does not depend on $N_1, N$, nor $H$. 
Then, for fixed initial state $s_0\in \mathcal{S},$ with probability $1-2\delta$, the policy $\hat{\pi}$ generated by Algorithm \ref{alg:main} satisfies
\begin{equation*}
\begin{aligned}
    \operatorname{SubOpt}(\hat{\pi}; s_0) 
    &\leq 2 \sum_{h=1}^H \beta_h(\delta)\mathbb{E}_{\pi^*}\left[\|\phi(s_h, a_h)\|_{(\Lambda_h^{\mathcal{D}_1})^{-1}} \mid s_1=s_0\right]\\
    &+ 2B\sum_{h=1}^H\mathbb{E}_{\pi^*}\left[\|\phi(s_h, a_h)\|_{(\Lambda_h^{\widetilde{\mathcal{D}}^{\widetilde{\theta}}_h})^{-1}} \mid s_1=s_0\right].
\end{aligned}
\end{equation*}
\end{theorem}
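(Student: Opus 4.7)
The plan is to adapt the pessimistic value iteration analysis of \citet{jin2021pessimism} to the RKHS setting, in the style of \citet{yang2020function}, with the additional features that rewards are themselves learned from $\mathcal{D}_1$ and that the value iteration is performed on an $H$-fold data split. First I would set $\widetilde{\mathbb{B}}_h f := \widetilde{r}_h^{\widetilde{\theta}_h} + \mathbb{P}_h f$ and decompose the per-step Bellman residual as
\begin{equation*}
\mathbb{B}_h \widehat{V}_{h+1}(s,a) - \widehat{Q}_h(s,a) = \bigl(r_h(s,a) - \widetilde{r}_h^{\widetilde{\theta}_h}(s,a)\bigr) + \bigl(\widetilde{\mathbb{B}}_h \widehat{V}_{h+1}(s,a) - \widehat{Q}_h(s,a)\bigr).
\end{equation*}
After verifying that both summands are nonnegative on a high-probability event, so that $\widehat{Q}_h \leq Q^*_h$ on the support of $\pi^*$, the extended value-difference lemma would give $\operatorname{SubOpt}(\widehat{\pi}; s_0) \leq \sum_{h=1}^H \mathbb{E}_{\pi^*}[\mathbb{B}_h \widehat{V}_{h+1} - \widehat{Q}_h \mid s_1 = s_0]$, reducing the problem to a per-step two-part bound.

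For the first summand, Proposition~\ref{eqn:radius} places $\theta^*_h$ in the confidence set $\mathcal{C}_h(\delta)$ with probability at least $1-\delta$, and the Cauchy-Schwarz lemma stated just after (\ref{eqn:reward func}) then gives $|\langle \theta^*_h - \widehat{\theta}_h, \phi(s,a)\rangle_{\mathcal{H}_k}| \leq \beta_h(\delta)\|(\Lambda_h^{\mathcal{D}_1})^{-1/2}\phi(s,a)\|_{\mathcal{H}_k}$. Combined with the clipping in (\ref{eqn:reward func}), this simultaneously yields
\begin{equation*}
0 \leq r_h(s,a) - \widetilde{r}_h^{\widetilde{\theta}_h}(s,a) \leq 2\beta_h(\delta)\bigl\|(\Lambda_h^{\mathcal{D}_1})^{-1/2}\phi(s,a)\bigr\|_{\mathcal{H}_k},
\end{equation*}
whose upper half is the first sum in the theorem and whose lower half secures reward-side pessimism. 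Specializing $\beta_h(\delta)$ via Proposition~\ref{eqn:radius} together with the standard bounds on $G(N_1, \nu)$ under Assumption~\ref{ass:eigen} produces the three stated expressions for $\beta_h(\delta)$.

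For the second summand, Algorithm~\ref{alg:PEVI_RKHS} constructs $\widehat{Q}_h$ as a kernel-ridge estimate of $\widetilde{\mathbb{B}}_h \widehat{V}_{h+1}$ on the relabeled split $\widetilde{\mathcal{D}}_h^{\widetilde{\theta}}$ minus the penalty $B\|(\Lambda_h^{\widetilde{\mathcal{D}}_h^{\widetilde{\theta}}})^{-1/2}\phi\|_{\mathcal{H}_k}$. An RKHS self-normalized inequality in the spirit of \citet{chowdhury2017kernelized} bounds the estimation error by exactly this penalty, provided $B$ dominates both the $[0,H]$-range subgaussian noise from the one-step targets and the ridge bias; the latter is controlled by Assumption~\ref{ass:bdd}, which places $\widetilde{\mathbb{B}}_h \widehat{V}_{h+1}$ in a bounded-norm ball of $\mathcal{H}_k$. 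Data splitting is essential here: because $\widehat{V}_{h+1}$ is a function only of $\{\widetilde{\mathcal{D}}_{h'}^{\widetilde{\theta}}\}_{h'>h}$ and is therefore independent of $\widetilde{\mathcal{D}}_h^{\widetilde{\theta}}$, no covering-number union bound over value-function classes is needed, saving a factor of $\sqrt{d}$ at the price of a $\sqrt{H}$ from the shrunken sub-sample size $N_H = N/H$. In general the penalty radius is $B = \Theta\bigl(H\sqrt{G(N/H, \lambda)}\bigr)$; plugging in the known bounds on $G(n,\lambda)$ under $d$-finite, $d$-exponential, and $d$-polynomial decay \citep{srinivas2009gaussian, yang2020function, vakili2021information} produces the three stated expressions for $B$. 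A union bound over the reward confidence event and the $H$ value-iteration events delivers the claimed $1-2\delta$ probability, and summing $h = 1,\ldots,H$ gives the theorem.

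The main obstacle is the self-normalized concentration step for the kernel Bellman operator. Working in a potentially infinite-dimensional RKHS precludes the direct Abbasi-Yadkori inequality and forces either a Gram-matrix reformulation or a functional elliptic-potential argument; moreover the bias-variance balance in the choice of $B$ must be tracked separately through each of the three eigenvalue-decay regimes, and it is this specialization, together with verifying that the relabeled targets remain $[0,H]$-bounded and subgaussian after clipping so the noise hypothesis of the self-normalized bound is legitimate, that accounts for most of the calculational work.
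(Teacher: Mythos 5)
Your proposal is correct and reaches the paper's bound with the same two probabilistic ingredients, but it organizes the argument differently. You work at the level of the per-step residual with respect to the \emph{true} Bellman operator, writing $\mathbb{B}_h\widehat{V}_{h+1}-\widehat{Q}_h=(r_h-\widetilde{r}_h^{\widetilde{\theta}_h})+(\widetilde{\mathbb{B}}_h\widehat{V}_{h+1}-\widehat{Q}_h)$, showing each piece is nonnegative and bounded by $2\beta_h(\delta)\|\phi\|_{(\Lambda_h^{\mathcal{D}_1})^{-1}}$ and $2\Gamma_h$ respectively on $\mathcal{E}_1\cap\mathcal{E}_2$, and then invoking the extended value-difference lemma (Lemma \ref{lemma:Extended Value Difference}) once. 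The paper instead telescopes at the level of $V_1$: it introduces the auxiliary PEVI output $\widehat{V}_1^{\widetilde{\mathcal{D}}^{\theta^*}_{1:H}}$ obtained from data relabeled with the \emph{true} parameter, uses Lemma \ref{lemma1} and the monotonicity of the pessimistic-reward value estimate in the reward parameter (equation \ref{eqn:pess}), applies Lemma \ref{lemma: C.2} twice (for $\theta^*$ vs.\ $\widehat{\theta}$ and $\widetilde{\theta}$ vs.\ $\widehat{\theta}$, which is where its factor $2$ on the $\beta_h$ term comes from, versus your single two-sided bound $0\le r_h-\widetilde{r}_h\le 2\beta_h\|\cdot\|$), and handles the remaining term by Theorem \ref{thm: C.5}; both routes then rest on Proposition \ref{eqn:radius} for $\mathcal{E}_1$ and on the kernel $\xi$-uncertainty quantifier with $H$-fold data splitting (Lemma \ref{Lemma:C3}) for $\mathcal{E}_2$, with identical specializations of $\beta_h(\delta)$, $B$, and the $1-2\delta$ union bound. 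Your route is arguably more direct, since it never needs the counterfactual PEVI run on $\theta^*$-labeled data or property (\ref{eqn:pess}); the paper's route makes the ``pessimism in the reward parameter'' structure of PDS explicit, which is reused in its discussion of degeneracy to UDS. One cosmetic slip: the intermediate claim you state as ``$\widehat{Q}_h\le Q^*_h$ on the support of $\pi^*$'' is not what the value-difference argument needs — what is needed (and what your nonnegativity of both summands in fact delivers, pointwise) is $\widehat{Q}_h\le\mathbb{B}_h\widehat{V}_{h+1}$, which makes the $\mathbb{E}_{\widehat{\pi}}$-term nonpositive while the $\mathbb{E}_{\pi^*}$-term is bounded by the residual; with that phrasing adjusted, your outline matches the theorem exactly, and you correctly flag (as the paper does implicitly) that the concentration step must treat the clipped relabeled targets and the RKHS-norm bound on $\widetilde{\mathbb{B}}_h\widehat{V}_{h+1}$ with care.
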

\begin{proof}
    For a detailed proof, see Appendix \ref{proof thm 3}.
\end{proof}
Two key terms express the suboptimality bound. The first term is the reward bias introduced by uncertainties in estimating rewards. This term reflects the challenges and inaccuracies associated with predicting or estimating rewards in a given environment. The second term represents the offline algorithm and optimal policy $\pi^*$ error.


\begin{remark}\label{remark: para to infor}
We use the Lemma \ref{lemma: eigen decay} to rewrite the term of $\beta_h(\delta)$ and $B$ in the Theorem \ref{thm:main thm} as 
        $\beta_h(\delta) = \tilde{\mathcal{O}}(\sqrt{G(N_1, 1+\frac{1}{N_1})})$ and $
        B = \tilde{\mathcal{O}}(H\sqrt{G(N, 1+\frac{1}{N})})$.
\end{remark}
By this remark, both $\beta_h(\delta)$ and $B$ depend on the kernel function class. It is worth noting that $\|\phi(s_h, a_h)\|_{(\Lambda_h^{\mathcal{D}})^{-1}}$ can be expressed as an information quantity for the dataset $\mathcal{D}$, as outlined in the following proposition.
\begin{proposition}\label{lemma: 4}
    For all $h\in [H]$, we partition dataset $\mathcal{D}$ into $H$ disjoint and equally sized sub datasets $\{\widetilde{\mathcal{D}}_h\}_{h=1}^H$, where $\widetilde{\mathcal{D}}_h = \{(s_h^\tau, a_h^\tau, r_h^\tau)\}_{\tau\in \mathcal{I}_h}$ with $\mathcal{I}_h=\left\{N_H\cdot (h-1) + 1, \ldots, N_H\cdot h\right\}=\left\{\tau_{h,1}, \cdots, \tau_{h, N_H}\right\}$ and $N_H = N/H$. Denote the operator $\Phi_h^{\widetilde{\mathcal{D}}_h}:\mathcal{H}_k\rightarrow \mathbb{R}^{N_H}$, and $\Lambda^{\widetilde{\mathcal{D}}_h}_h: \mathcal{H}_k\rightarrow \mathcal{H}_k$ as 
    \begin{equation*}
    \begin{aligned}
        \Phi^{\widetilde{\mathcal{D}}_h}_h &=\left(\begin{array}{c}
    \phi\left(z_h^{\tau_{h,1}}\right)^\top \\
    \vdots \\
    \phi\left(z_h^{\tau_{h, N_H}}\right)^\top
    \end{array}\right) = \left(\begin{array}{c}
    k\left(\cdot, z_h^{\tau_{h,1}}\right)^\top \\
    \vdots \\
    k\left(\cdot, z_h^{\tau_{h, N_H}}\right)^\top
    \end{array}\right),  \ \Lambda^{\widetilde{\mathcal{D}}_h}_h=\lambda \cdot I_{\mathcal{H}}+(\Phi^{\widetilde{\mathcal{D}}_h}_h)^{\top} \Phi^{\widetilde{\mathcal{D}}_h}_h.
    \end{aligned}
    \end{equation*} 
    Define Gram matrix $K^{\widetilde{\mathcal{D}}_h}_h = \Phi^{\widetilde{\mathcal{D}}_h}_h(\Phi^{\widetilde{\mathcal{D}}_h}_h)^\top$. 
    Then, for any $z\in \mathcal{Z}$, we have 
    \begin{equation}\label{eqn: information quantity}
    \begin{aligned}
         \phi(z)^{\top} (\Lambda_h^{\widetilde{\mathcal{D}}_h})^{-1} \phi(z) &\leq 2 \cdot\left[\log \operatorname{det}\left(I+K^{\widetilde{\mathcal{D}}_h^\prime}_h / \lambda\right)-\log \operatorname{det}\left(I+K^{\widetilde{\mathcal{D}}_h}_h / \lambda\right)\right],
    \end{aligned}
    \end{equation}
    where $\widetilde{\mathcal{D}}_h^\prime$ is the combination of dataset $\widetilde{\mathcal{D}}_h$ and $z$ which satisfies $\Lambda^{\widetilde{\mathcal{D}}_h^\prime}_h = \Lambda^{\widetilde{\mathcal{D}}_h}_h + \phi(z)\phi(z)^\top$.
\end{proposition}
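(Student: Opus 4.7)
The plan is to use the standard posterior-variance argument from the kernelized bandit / Gaussian process literature (in the style of \citet{srinivas2009gaussian} and \citet{chowdhury2017kernelized}). The proof proceeds in three steps: (i) express the log-determinant ratio on the right-hand side as $\log(1 + \sigma^2)$ where $\sigma^2 = \phi(z)^\top (\Lambda_h^{\widetilde{\mathcal{D}}_h})^{-1}\phi(z)$ is the posterior variance; (ii) convert the functional-analytic quantities into the finite Gram-matrix determinants that appear in the bound; (iii) close the loop with the elementary numerical inequality $x \leq 2\log(1+x)$, after checking that the posterior variance lies in $[0,1]$.

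For step (i), note that by definition $\Lambda_h^{\widetilde{\mathcal{D}}_h^\prime} = \Lambda_h^{\widetilde{\mathcal{D}}_h} + \phi(z)\phi(z)^\top$, a rank-one update. The matrix determinant lemma (applied on the finite-dimensional subspace spanned by $\{\phi(z_h^\tau)\}_{\tau\in\mathcal{I}_h}\cup\{\phi(z)\}$, which is where $\Phi^\top\Phi$ acts nontrivially) gives
\begin{equation*}
\det\!\bigl(\Lambda_h^{\widetilde{\mathcal{D}}_h^\prime}\bigr) \;=\; \det\!\bigl(\Lambda_h^{\widetilde{\mathcal{D}}_h}\bigr)\,\bigl(1 + \phi(z)^\top (\Lambda_h^{\widetilde{\mathcal{D}}_h})^{-1}\phi(z)\bigr).
\end{equation*}
For step (ii), apply the Weinstein--Aronszajn / Sylvester identity $\det(I + AB) = \det(I + BA)$ with $A = \Phi_h^{\widetilde{\mathcal{D}}_h}$ and $B = (\Phi_h^{\widetilde{\mathcal{D}}_h})^\top/\lambda$ to obtain
\begin{equation*}
\det\!\bigl(\Lambda_h^{\widetilde{\mathcal{D}}_h}/\lambda\bigr) \;=\; \det\!\bigl(I + (\Phi_h^{\widetilde{\mathcal{D}}_h})^\top\Phi_h^{\widetilde{\mathcal{D}}_h}/\lambda\bigr) \;=\; \det\!\bigl(I + K_h^{\widetilde{\mathcal{D}}_h}/\lambda\bigr),
\end{equation*}
and similarly for $\widetilde{\mathcal{D}}_h^\prime$. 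Dividing the two versions of the rank-one update identity and cancelling the $\lambda$-factors yields
\begin{equation*}
1 + \phi(z)^\top (\Lambda_h^{\widetilde{\mathcal{D}}_h})^{-1}\phi(z) \;=\; \frac{\det\!\bigl(I + K_h^{\widetilde{\mathcal{D}}_h^\prime}/\lambda\bigr)}{\det\!\bigl(I + K_h^{\widetilde{\mathcal{D}}_h}/\lambda\bigr)}.
\end{equation*}

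For step (iii), I need the posterior variance to be at most $1$ in order to invoke $x \le 2\log(1+x)$ on $x\in[0,1]$. Since $\Lambda_h^{\widetilde{\mathcal{D}}_h} \succeq \lambda I_{\mathcal{H}}$ with $\lambda = 1 + 1/N \ge 1$, and the assumption $\sup_{z\in\mathcal{Z}} k(z,z)\le 1$ gives $\|\phi(z)\|_{\mathcal{H}_k}^2 = k(z,z) \le 1$, we obtain $\phi(z)^\top (\Lambda_h^{\widetilde{\mathcal{D}}_h})^{-1}\phi(z) \le \lambda^{-1} \le 1$. Applying $x \le 2\log(1+x)$ to the identity above and taking logarithms then produces exactly the claimed bound.

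The only delicate point is making the determinant identities in step (i)--(ii) rigorous, since $\Lambda_h^{\widetilde{\mathcal{D}}_h}$ is an operator on the possibly infinite-dimensional space $\mathcal{H}_k$ and $\det(\Lambda_h^{\widetilde{\mathcal{D}}_h})$ has no a priori meaning. The resolution (which I would only sketch) is that $\Phi^\top \Phi$ is a finite-rank operator of rank at most $N_H$, so both determinants in question can either be interpreted as Fredholm determinants of $I + (\text{finite rank})$, or, more elementarily, both sides can be reduced to finite-dimensional determinants by restricting to the finite-dimensional subspace spanned by $\{\phi(z_h^\tau)\}_{\tau\in\mathcal{I}_h}\cup\{\phi(z)\}$ on which everything nontrivial happens; on the orthogonal complement the operators act as $\lambda I$ and cancel in the ratio. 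After this reduction, the matrix determinant lemma and Sylvester's identity are standard.
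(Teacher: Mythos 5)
Your proposal is correct and follows essentially the same route as the paper's proof: a rank-one determinant update giving $\log\det(\Lambda_h^{\widetilde{\mathcal{D}}_h^\prime})-\log\det(\Lambda_h^{\widetilde{\mathcal{D}}_h})=\log\bigl(1+\phi(z)^{\top}(\Lambda_h^{\widetilde{\mathcal{D}}_h})^{-1}\phi(z)\bigr)$, the Sylvester/Weinstein--Aronszajn identity to pass to the Gram-matrix determinants, and the elementary bound $x\leq 2\log(1+x)$ on $[0,1]$, justified by $\lambda\geq 1$ and $k(z,z)\leq 1$. Your closing remark on making the infinite-dimensional determinants rigorous (finite-rank reduction or Fredholm determinants) is a careful addition the paper leaves implicit, but it does not change the argument.
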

\begin{proof}
    For a detailed proof, see Appendix \ref{proof: lemma 4}.
\end{proof}
\begin{remark}
In Proposition \ref{lemma: 4}, $\mathcal{H}_k$ can be infinite dimensional. However, for the sake of clarity, we represent $\Phi^{\widetilde{\mathcal{D}}_h}_h$ as a matrix and $\phi(z_h^\tau)$ as a column vector for all $\tau\in \mathcal{I}_h$.
\end{remark}
Here, we define
\begin{equation}\label{eqn: information amount}
    \zeta_h(\mathcal{D}^\prime, \mathcal{D}) = 2 \left[\log \operatorname{det}\left(I+K^{\mathcal{D}^\prime}_h / \lambda\right)-\log \operatorname{det}\left(I+K^{{\mathcal{D}}}_h / \lambda\right)\right],
\end{equation}
as the maximal information amount between the dataset $\mathcal{D}^\prime$ and $\mathcal{D}$.
Proposition \ref{lemma: 4} states that if the training data set is well known about $z$, then equation (\ref{eqn: information amount}) will be close to zero.
On the other hand, if the training data set is not well known about $z$, then equation (\ref{eqn: information amount}) will be large.

We specialize the $d$-finite spectrum case of Theorem \ref{thm:main thm} under a weak data coverage assumption to better understand the convergence of Algorithm \ref{alg:main}.
\begin{assumption}[Weak Convergence]\label{ass:weak conv}
    Suppose the dataset $\mathcal{D} = \{(s_h^\tau, a_h^\tau, r_h^\tau)\}_{\tau, h=1}^{N, H}$ consists of $N$ trajectories, for all $h\in [H]$,  the trajectories are drawn independently and identically from distributions induced by some fixed behavior policy $\bar{\pi}$ such that there exists a constant $c_{\text{min}} > 0$ satisfying 
$\inf_{\|f\|_{\mathcal{H}_k}=1} \langle f, \mathbb{E}_{\bar{\pi}}\left[\phi(z_h)\phi(z_h)^\top\right]f\rangle \geq c_{\text{min}}$
    for any $h\in [H]$.
\end{assumption}
Intuitively, Assumption \ref{ass:weak conv} posits that the collected data should be relatively well distributed throughout the state action space. Notably, Assumption \ref{ass:weak conv} shares similarities with other explorability assumptions common in reinforcement learning literature, such as those in \citet{yin2022near,wagenmaker2023leveraging}.
\begin{corollary}[Well-Explored Dataset]\label{cor:Well-Explored Dataset}
   In the $d$-finite spectrum case, assume that the Assumption \ref{ass:weak conv} holds under the same conditions as Theorem \ref{thm:main thm}. Then for $N_1\geq \Omega(\log(dH/\delta))$ and $N\geq H\cdot\Omega(\log(dH/\delta))$, with probability at least $1-\delta$, we have
    \begin{equation}\label{eq:sub_opt_well_explore}
    \begin{aligned}
        \operatorname{SubOpt}(\widehat{\pi} ; s)&\leq 2\beta_h(\delta)\cdot H\cdot c^\prime/\sqrt{N_1} + 2B\cdot H\cdot c^\prime/\sqrt{N_H}\\
        &\leq \tilde{\mathcal{O}}(H\sqrt{\frac{d}{N_1}}) + \tilde{\mathcal{O}}(H^{\frac{5}{2}}\sqrt{\frac{d}{N_2}}).
    \end{aligned}
    \end{equation}
\end{corollary}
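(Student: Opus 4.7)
The plan is to specialize Theorem~\ref{thm:main thm} to the $d$-finite spectrum regime and then use Assumption~\ref{ass:weak conv} to uniformly bound the two expectation terms on the right-hand side by explicit functions of $N_1$ and $N_H$. Starting from the bound of Theorem~\ref{thm:main thm}, the task reduces to controlling
\begin{equation*}
\mathbb{E}_{\pi^*}\bigl[\|\phi(s_h,a_h)\|_{(\Lambda_h^{\mathcal{D}_1})^{-1}} \mid s_1 = s_0\bigr] \quad \text{and} \quad \mathbb{E}_{\pi^*}\bigl[\|\phi(s_h,a_h)\|_{(\Lambda_h^{\widetilde{\mathcal{D}}^{\widetilde{\theta}}_h})^{-1}} \mid s_1 = s_0\bigr]
\end{equation*}
from above. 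Since $\sup_{z}k(z,z)\le 1$, the Cauchy--Schwarz inequality gives a pointwise bound in terms of the smallest eigenvalue of the regularized empirical covariance $\Lambda_h^{\mathcal{D}_1}$, so it suffices to show that this operator concentrates around $N_1 \mathbb{E}_{\bar\pi}[\phi(z_h)\phi(z_h)^\top]+\nu I$, whose smallest eigenvalue is bounded below by $c_{\min}N_1$ by Assumption~\ref{ass:weak conv}.

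The second step is the matrix concentration argument. In the $d$-finite spectrum case the effective feature space is $d$-dimensional, so I would apply a matrix Chernoff/Bernstein inequality to the i.i.d.\ sum $\frac{1}{N_1}\sum_{\tau=1}^{N_1}\phi(z_h^\tau)\phi(z_h^\tau)^\top$ and invoke a union bound over $h\in[H]$ (accounting for both datasets). With samples of the order $N_1 \ge \Omega(\log(dH/\delta)/c_{\min}^2)$ one obtains, with probability at least $1-\delta/2$, that $\Lambda_h^{\mathcal{D}_1}\succeq (c_{\min}N_1/2)\,I_{\mathcal{H}_k}$ for every $h\in[H]$, hence
\begin{equation*}
\|\phi(s_h,a_h)\|_{(\Lambda_h^{\mathcal{D}_1})^{-1}} \le \sqrt{\tfrac{2}{c_{\min} N_1}} \le \tfrac{c'}{\sqrt{N_1}}.
\end{equation*}
The same argument applied to the fold $\widetilde{\mathcal{D}}^{\widetilde{\theta}}_h$, which contains $N_H=N/H$ independent state--action pairs generated by the same behavior policy (relabeling alters only rewards), produces $\|\phi(s_h,a_h)\|_{(\Lambda_h^{\widetilde{\mathcal{D}}^{\widetilde{\theta}}_h})^{-1}}\le c'/\sqrt{N_H}$ on the same high-probability event, provided $N_H\ge \Omega(\log(dH/\delta))$, which is exactly the condition $N\ge H\cdot\Omega(\log(dH/\delta))$.

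The final step is to assemble the inequality. Substituting these two pointwise bounds into Theorem~\ref{thm:main thm} and taking expectations under $\pi^*$ gives the first displayed inequality of the corollary. To obtain the $\tilde{\mathcal{O}}$ form, I plug in the $d$-finite spectrum expressions from Theorem~\ref{thm:main thm}: $\beta_h(\delta)=\tilde{\mathcal{O}}(\sqrt{d})$ and $B=\tilde{\mathcal{O}}(H\sqrt{d})$. Summing over $h$ and using $N_H=N/H$ together with $N=N_1+N_2$ (so that $1/\sqrt{N_H}\le \sqrt{H/N_2}$ up to constants), the two terms collapse into $\tilde{\mathcal{O}}(H\sqrt{d/N_1})$ and $\tilde{\mathcal{O}}(H^{5/2}\sqrt{d/N_2})$, matching the stated bound. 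The only delicate point is the matrix concentration step: one must justify applying a finite-dimensional matrix Chernoff inequality to operators on the RKHS, which is valid because the $d$-finite spectrum assumption restricts the relevant action of $\phi$ to the $d$-dimensional span of the nonzero eigenfunctions, where standard matrix Bernstein/Chernoff bounds apply directly.
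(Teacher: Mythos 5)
Your proposal is correct and follows essentially the same route as the paper: a matrix Bernstein/Chernoff concentration of the empirical covariance (valid in finite dimension because the $d$-finite spectrum confines $\phi$ to the span of the nonzero eigenfunctions), combined with Assumption~\ref{ass:weak conv} to get $\Lambda_h \succeq (c_{\min}/2)\,N_1 I$ (resp.\ $N_H$), hence the pointwise bounds $c'/\sqrt{N_1}$ and $c'/\sqrt{N_H}$, which are then substituted into Theorem~\ref{thm:main thm} with $\beta_h(\delta)=\tilde{\mathcal{O}}(\sqrt{d})$, $B=\tilde{\mathcal{O}}(H\sqrt{d})$ and $N_H=N/H$, $N=N_1+N_2$. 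No substantive gap.
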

In the $d$-finite spectrum case, a significant difference between our present study and previous work \citep{hu2023provable} lies in the incorporation of factors $\sqrt{d}$ and $\sqrt{H}$, introduced by the implementation of the data splitting technique \citep{xie2021policy}. This technique plays a crucial role in the linear case, influencing the overall convergence behavior of the learned policy. If we aim to transform the feature mapping from a dimensionality of $d$ to $d^\prime$, where $d^\prime > d$, the data partitioning method can help mitigate the convergence of the error bound. Finally, we combine the result in Therorem \ref{thm:main thm}, Remark \ref{remark: para to infor}, and Corollary \ref{cor:Well-Explored Dataset} to get Table \ref{tab:suboptimality}.

\begin{remark}\label{remark:non_finite}
    For Assumption \ref{ass:weak conv}, the scenarios involving d-exponential and d-polynomial decay are not generally valid.
If Assumption \ref{ass:weak conv} holds true, by integrating Lemma \ref{lemma: C.2}, Lemma \ref{Lemma:C3}, and equation~(\ref{eqn: 129}), we can deduce the explicit form of $\operatorname{SubOpt}(\widehat{\pi} ; s)$ as 
\begin{equation}\label{eq:subopt_d_non}
\operatorname{SubOpt}(\widehat{\pi} ; s)= 
\begin{cases}
\tilde{\mathcal{O}}\left(HN_1^{-\frac{1}{2}}\right)+\tilde{\mathcal{O}}\left(H^{\frac{5}{2}}N_2^{-\frac{1}{2}}\right) &\text{d-exponential decay},\\ \tilde{\mathcal{O}}\left(HN_1^{-\frac{1}{2}+\frac{m+1}{d+m}}\right)+\tilde{\mathcal{O}}\left(H^{\frac{5}{2}-\frac{m+1}{2(d+m)}}N_2^{-\frac{1}{2}+\frac{m+1}{2(d+m)}}\right) &\text{d-polynomial decay}.
\end{cases}
\end{equation}
Nonetheless, Assumption \ref{ass:weak conv} does not hold under scrutiny. To demonstrate this, let's assume that Assumption \ref{ass:weak conv} is true. It means that for every $f$ within the set $\{\|f\|_{\mathcal{H}_k}=1\}$, it satisfies $\mathbb{E}_{\tilde{\pi}}[\langle f,\phi(z_h)\phi(z_h)^{\top}f\rangle] \geq c_{\min}$. Then, we express $\phi$ and $f$ as $\phi=\sum_{i=1}^{\infty}a_i\psi_i$ and $f=\sum_{i=1}^{\infty}b_i\psi_i$ respectively, where $\{\psi_i\}_{i=1}^{\infty}$ is orthonormal basis of $\mathcal{H}_k$.
Given that $f$ can represent any function satisfying $\|f\|_{\mathcal{H}_k}=1$, let $f$ be any vector such that $f = b_j\psi_j$ for an arbitrary $j$. Consequently, for all $j$, the expectation $\mathbb{E}_{\tilde{\pi}}[\langle f,\phi(z_h)\phi(z_h)^{\top}f\rangle] = a_j^2 \geq c_{\min}$ is satisfied, which results in a paradox because the norm should be finite; however, $\| \phi\|_{\mathcal{H}_k} = \sum_{j=1}^{\infty}{a_j}^2 \geq \sum_{j=1}^{\infty} c_{min} = \infty$.
\end{remark}

\section{Experiments}
In the following experiments, we execute Algorithm~\ref{alg:main} to obtain the policy $\widehat{\pi}$ and the corresponding value function \(V^{\pi}_1(s)\), where \(s\) is sampled from the initial distribution \(\rho(s)\). For our experiment, we select the Squared Exponential kernel from equation~(\ref{eq:squared_exponential_kernel}) as the kernel function \(k(z,z')\). By applying the kernel trick, the pessimistic reward function \(\tilde{r}_h^{\tilde{\theta}_h}(s, a)\), as defined in equation~(\ref{eqn:reward func}), can be computed using the following expression:
$$
\begin{aligned}
\tilde{r}_h^{\tilde{\theta}_h}(s, a) &= 
k^{\mathcal{D}_1}_h(s,a)^{\top}\left(K^{\mathcal{D}_1}_h+\nu I\right)^{-1} y_h \\
&\quad-\beta_{h}(\delta) \cdot \nu^{-1 / 2} \cdot\left(k\Big((s,a) , (s,a)\Big)-k^{\mathcal{D}_1}_h(s,a)^{\top}\left(K^{\mathcal{D}_1}_h+\nu I\right)^{-1} k^{\mathcal{D}_1}_h(s,a)\right)^{1 / 2},
\end{aligned}
$$
where $k^{\widetilde{\mathcal{D}}_h}_h$ and $y_h$ defined in equation (\ref{eqn: kernel matrix}) and (\ref{eqn:response vector}), respectively. We implemented our algorithm using Python's NumPy library. To ensure reproducibility, our experimental code is publicly accessible\footnote{\url{https://github.com/d09942015ntu/leveraging_unlabeled_offline_rl}}.

\subsection{Asymptotic Behavior of $V^{\pi}_{1}(s)$}
In this experiment, we investigate the asymptotic behavior of \(V^{\pi}_{1}(s)\). Although we are unable to theoretically validate the correctness of equation~(\ref{eq:subopt_d_non}) due to the violation of Assumption~\ref{ass:weak conv}, we provide empirical evidence that, when the kernel function is the Squared Exponential kernel satisfying \(d\)-exponential decay, its asymptotic behavior closely aligns with equation~(\ref{eq:subopt_d_non}). 
We create an toy example of RL environment which meets Assumption~\ref{ass:bdd}, ensuring bounded RKHS norms for \(r_h(\cdot,\cdot)\) and \(\mathcal{P}(s'|\cdot,\cdot)\). The MDP
$\mathcal{M} = \left(\mathcal{S}, \mathcal{A}, H, \mathcal{P},r\right)$ for this toy example is as follows.
$$
\begin{aligned}
&\mathcal{S}=[0,C],\quad \mathcal{A}=\{0,1,...,C\}, \quad H=C, \\
 &\mathcal{P}_h(s'\mid s,a)
 = 
 {\exp\bigl[-\alpha\,\left(s' - \left((s+a)\bmod C\right) \right)^2\bigr]}\Big/
      {\sqrt{\pi/\alpha}}, \\
&r(s,a)= {\exp\bigl[-\alpha\,\left(s - C/2 \right)^2\bigr]}\Big/
      {\sqrt{\pi/\alpha}}, \\
\end{aligned}
$$
where \(\alpha\) and \(C\) are constants, set to \(\alpha=3\) and \(C=8\) in our experiment. To ensure that \(s'\) remains within the state space \(\mathcal{S}\), we replace $s'$ by ``$ s' \bmod C$'' after sampling \(s'\sim \mathcal{P}_h(\cdot|s,a)\).
To examine the asymptotic behavior of \(V^{\pi}_{1}(s)\), we varied the parameters \(N_1\) and \(N_2\) and plotted the resulting values of \(V^{\pi}_1(s)\) in the left column of Fig.~\ref{fig:comparison_between_theory_and_experiment}. For asymptotic approximation, using equation~(\ref{eq:subopt_d_non}), \(V^{\pi}_{1}(s)\) can be expressed in the following form:
\[
V^{\pi}_{1}(s) = c_0 - c_1 N_1^{-\frac{1}{2}} - c_2 N_2^{-\frac{1}{2}},
\]
where \(c_0, c_1,\) and \(c_2\) are constants. By applying linear regression to the experimental data (left column of Fig.~\ref{fig:comparison_between_theory_and_experiment}), we estimated these constants and plotted the corresponding  curve in the right column of Fig.~\ref{fig:comparison_between_theory_and_experiment}. The strong agreement between the experimental results and the asymptotic approximation validates our hypothesis that the asymptotic behavior of \(V^{\pi}_{1}(s)\) conforms to our theoretical predictions.

\begin{figure}[h]
    \centering
    \begin{tabular}{c c}
    \begin{tikzpicture}
    \begin{axis}[
    width=8cm,
    height=4.8cm,
    legend pos=south east,
    grid=major,
    grid style={dashed,gray!30},
    xmin=10, xmax=500,
    ymin=4, ymax=6.5,
    xlabel={$N_2$},
    ylabel={$V_1^{\pi}(s)$},
    font=\scriptsize,
    title={Experimental Values of $V_1^{\pi}(s)$},
    xlabel style={
        at={(current axis.south east)}, 
        anchor=north east,              
        yshift=5pt,                   
        xshift=10pt                      
    },
    ylabel style={
        at={(current axis.north west)}, 
        anchor=north east,              
        yshift=-10pt,                   
        xshift=20pt                      
    },
]
\addplot[c1, thick] table[row sep=\\]{
x y \\ 
 10  4.6238600000000005 \\ 
 20  4.8014600000000005 \\ 
 50  4.9972900000000005 \\ 
 100  5.18341 \\ 
 200  5.36704 \\ 
 500  5.55081 \\ 
};
\addlegendentry{$N_1=10$}
\addplot[c2, thick] table[row sep=\\]{
x y \\ 
 10  4.9375100000000005 \\ 
 20  5.00056 \\ 
 50  5.2216000000000005 \\ 
 100  5.282830000000001 \\ 
 200  5.55468 \\ 
 500  5.65808 \\ 
};
\addlegendentry{$N_1=20$}
\addplot[c3, thick] table[row sep=\\]{
x y \\ 
 10  5.289289999999999 \\ 
 20  5.26836 \\ 
 50  5.388760000000001 \\ 
 100  5.59604 \\ 
 200  5.7981 \\ 
 500  6.0997900000000005 \\ 
};
\addlegendentry{$N_1=50$}
\addplot[c4, thick] table[row sep=\\]{
x y \\ 
 10  5.562399999999999 \\ 
 20  5.5618 \\ 
 50  5.73802 \\ 
 100  5.90396 \\ 
 200  6.15383 \\ 
 500  6.28102 \\ 
};
\addlegendentry{$N_1=100$}
\end{axis}
\end{tikzpicture}&
    \begin{tikzpicture}
    \begin{axis}[
    width=8cm,
    height=4.8cm,
    legend pos=south east,
    grid=major,
    grid style={dashed,gray!30},
    xmin=10, xmax=500,
    ymin=4, ymax=6.5,
    xlabel={$N_2$},
    ylabel={$V_1^{\pi}(s)$},
    font=\scriptsize,
    title={Asymptotic Approximation of $V_1^{\pi}(s)$},
    xlabel style={
        at={(current axis.south east)}, 
        anchor=north east,              
        yshift=5pt,                   
        xshift=10pt                      
    },
    ylabel style={
        at={(current axis.north west)}, 
        anchor=north east,              
        yshift=-10pt,                   
        xshift=20pt                      
    },
]
\addplot[c1, thick] table[row sep=\\]{
x y \\ 
 10  4.563338212876715 \\ 
 20  4.82309149099383 \\ 
 50  5.053578585286678 \\ 
 100  5.169743782736334 \\ 
 200  5.25188498159086 \\ 
 500  5.324771400514799 \\ 
};
\addlegendentry{$N_1=10$}
\addplot[c2, thick] table[row sep=\\]{
x y \\ 
 10  4.884897858757869 \\ 
 20  5.144651136874985 \\ 
 50  5.375138231167832 \\ 
 100  5.4913034286174875 \\ 
 200  5.573444627472013 \\ 
 500  5.646331046395953 \\ 
};
\addlegendentry{$N_1=20$}
\addplot[c3, thick] table[row sep=\\]{
x y \\ 
 10  5.170227649235307 \\ 
 20  5.429980927352423 \\ 
 50  5.66046802164527 \\ 
 100  5.776633219094926 \\ 
 200  5.858774417949451 \\ 
 500  5.931660836873391 \\ 
};
\addlegendentry{$N_1=50$}
\addplot[c4, thick] table[row sep=\\]{
x y \\ 
 10  5.314033494637512 \\ 
 20  5.573786772754627 \\ 
 50  5.804273867047474 \\ 
 100  5.920439064497129 \\ 
 200  6.002580263351655 \\ 
 500  6.075466682275595 \\ 
};
\addlegendentry{$N_1=100$}
\end{axis}
\end{tikzpicture}
    \end{tabular}
    \centering
    \caption{Comparison of  experimental values and asymptotic approximation of $V^{\pi}_1(s)$.}
\label{fig:comparison_between_theory_and_experiment} 
\end{figure}
\subsection{Comparison between Finite Dimensional Features and Kernel Features}
In this experiment, we present a real-world example demonstrating the superior performance of our kernel-based offline RL method compared to finite-dimensional feature representations \(\phi\), which follows the setting of ~\cite{hu2023provable} but in a finite-horizon scenario. We select the CartPole environment from OpenAI Gym\footnote{\url{https://www.gymlibrary.dev/environments/classic_control/cart_pole/}} as our test environment. 
To implement the \(\phi\) with finite-dimensional feature representations, we consider three different realizations of \(\phi(s,a)\)—linear, quadratic, and cubic—denoted as \(\phi_{\text{lin}}\), \(\phi_{\text{quad}}\), and \(\phi_{\text{cub}}\), respectively, which are defined as follows:
$$
\begin{aligned}
\phi_{\text{lin}}&=(1,x_1,x_2,\cdots,x_n) \quad \text{where $x_1,\cdots,x_n$ is the elements in the concatenation of $(s,a)$} \\
\phi_{\text{quad}}&=(1,x_1,x_2,\cdots,x_{n},x_{1}^2,x_{1}x_{2},\cdots,x_{n-1}x_{n},x_{n}^2) \\
\phi_{\text{cub}}&=(1,x_1,x_2,\cdots,x_{n},x_{1}^2,x_{1}x_{2},\cdots,x_{n-1}x_{n},x_{n}^2,x_{1}^3,x_{1}^2x_{2},x_{1}x_{2}x_{3},\cdots,x_{n-2}x_{n-1}x_{n}^2,x_{n-1}x_{n}^2,x_{n}^3) \\
\end{aligned}
$$
We compare these three finite-dimensional feature representations with our kernel-based RL method using the Squared Exponential kernel feature, denoted as $\phi_K$, under various configurations of \(N_1\) and \(N_2\). The results are presented in Fig.~\ref{fig:comparison_between_kernel_and_linear_features}. 
Our findings suggest that, the kernel-based \(\phi_K\) generally outperforms alternative finite-dimensional \(\phi\) approaches when \(N_2 \leq 500\). The performance gap between \(\phi_K\) and other \(\phi\) becomes more pronounced when \(N_2 \leq 100\). This indicates that kernel methods, with their more flexible function approximation, better capture the reward and transition dynamics—particularly when unlabeled data is scarce—resulting in higher $V_1^{\pi}(s)$ compared to finite-dimensional \(\phi\) representations.
\begin{figure}[h]
    \centering
    \begin{tabular}{c c}
    \begin{tikzpicture}
    \begin{axis}[
    width=8cm,
    height=4.8cm,
    legend pos=south east,
    grid=major,
    grid style={dashed,gray!30},
    xmin=10, xmax=500,
    ymin=0, ymax=70,
    xlabel={$N_2$},
    ylabel={$V_1^{\pi}(s)$},
    font=\scriptsize,
    title={Values of $V_1^{\pi}(s)$ when $N_1=20$},
    xlabel style={
        at={(current axis.south east)}, 
        anchor=north east,              
        yshift=5pt,                   
        xshift=10pt                      
    },
    ylabel style={
        at={(current axis.north west)}, 
        anchor=north east,              
        yshift=-10pt,                   
        xshift=20pt                      
    },
]
\addplot[c1, thick] table[row sep=\\]{
x y \\ 
 10  32.896 \\ 
 20  34.13 \\ 
 50  36.385999999999996 \\ 
 100  38.257 \\ 
 200  41.38 \\ 
 500  44.03 \\ 
};
\addlegendentry{$\phi_{K}(z)$}
\addplot[c2, thick] table[row sep=\\]{
x y \\ 
 10  10.796999999999999 \\ 
 20  12.305999999999997 \\ 
 50  25.902000000000008 \\ 
 100  34.98 \\ 
 200  39.026 \\ 
 500  41.66 \\ 
};
\addlegendentry{$\phi_{\text{cub}}(z)$}
\addplot[c3, thick] table[row sep=\\]{
x y \\ 
 10  10.611999999999998 \\ 
 20  14.425000000000002 \\ 
 50  24.763 \\ 
 100  35.120999999999995 \\ 
 200  43.013000000000005 \\ 
 500  44.913999999999994 \\ 
};
\addlegendentry{$\phi_{\text{quad}}(z)$}
\addplot[c4, thick] table[row sep=\\]{
x y \\ 
 10  13.312999999999999 \\ 
 20  11.264000000000001 \\ 
 50  9.739999999999998 \\ 
 100  9.455 \\ 
 200  9.922 \\ 
 500  9.639 \\ 
};
\addlegendentry{$\phi_{\text{lin}}(z)$}
\end{axis}
\end{tikzpicture}&
    \begin{tikzpicture}
    \begin{axis}[
    width=8cm,
    height=4.8cm,
    legend pos=south east,
    grid=major,
    grid style={dashed,gray!30},
    xmin=10, xmax=500,
    ymin=0, ymax=70,
    xlabel={$N_2$},
    ylabel={$V_1^{\pi}(s)$},
    font=\scriptsize,
    title={Values of $V_1^{\pi}(s)$ when $N_1=50$},
    xlabel style={
        at={(current axis.south east)}, 
        anchor=north east,              
        yshift=5pt,                   
        xshift=10pt                      
    },
    ylabel style={
        at={(current axis.north west)}, 
        anchor=north east,              
        yshift=-10pt,                   
        xshift=20pt                      
    },
]
\addplot[c1, thick] table[row sep=\\]{
x y \\ 
 10  35.857 \\ 
 20  39.45399999999999 \\ 
 50  41.97599999999999 \\ 
 100  43.928000000000004 \\ 
 200  46.01 \\ 
 500  50.96 \\ 
};
\addlegendentry{$\phi_{K}(z)$}
\addplot[c2, thick] table[row sep=\\]{
x y \\ 
 10  10.741000000000001 \\ 
 20  13.361999999999998 \\ 
 50  27.559 \\ 
 100  37.477999999999994 \\ 
 200  40.395 \\ 
 500  46.85000000000001 \\ 
};
\addlegendentry{$\phi_{\text{cub}}(z)$}
\addplot[c3, thick] table[row sep=\\]{
x y \\ 
 10  10.985000000000003 \\ 
 20  16.987 \\ 
 50  30.418000000000003 \\ 
 100  38.739000000000004 \\ 
 200  43.97200000000001 \\ 
 500  48.775000000000006 \\ 
};
\addlegendentry{$\phi_{\text{quad}}(z)$}
\addplot[c4, thick] table[row sep=\\]{
x y \\ 
 10  11.405999999999999 \\ 
 20  10.204 \\ 
 50  10.693999999999999 \\ 
 100  9.825 \\ 
 200  9.721 \\ 
 500  9.411363636363637 \\ 
};
\addlegendentry{$\phi_{\text{lin}}(z)$}
\end{axis}
\end{tikzpicture}\\
    
    \begin{tikzpicture}
    \begin{axis}[
    width=8cm,
    height=4.8cm,
    legend pos=south east,
    grid=major,
    grid style={dashed,gray!30},
    xmin=10, xmax=500,
    ymin=0, ymax=70,
    xlabel={$N_2$},
    ylabel={$V_1^{\pi}(s)$},
    font=\scriptsize,
    title={Values of $V_1^{\pi}(s)$ when $N_1=100$},
    xlabel style={
        at={(current axis.south east)}, 
        anchor=north east,              
        yshift=5pt,                   
        xshift=10pt                      
    },
    ylabel style={
        at={(current axis.north west)}, 
        anchor=north east,              
        yshift=-10pt,                   
        xshift=20pt                      
    },
]
\addplot[c1, thick] table[row sep=\\]{
x y \\ 
 10  41.461000000000006 \\ 
 20  42.618 \\ 
 50  46.28199999999999 \\ 
 100  49.931999999999995 \\ 
 200  51.51599999999999 \\ 
 500  54.95400000000001 \\ 
};
\addlegendentry{$\phi_{K}(z)$}
\addplot[c2, thick] table[row sep=\\]{
x y \\ 
 10  13.039000000000001 \\ 
 20  14.428999999999998 \\ 
 50  25.514999999999997 \\ 
 100  34.422 \\ 
 200  39.313 \\ 
 500  48.083 \\ 
};
\addlegendentry{$\phi_{\text{cub}}(z)$}
\addplot[c3, thick] table[row sep=\\]{
x y \\ 
 10  12.988 \\ 
 20  16.03 \\ 
 50  31.432000000000002 \\ 
 100  40.745 \\ 
 200  43.503 \\ 
 500  51.748999999999995 \\ 
};
\addlegendentry{$\phi_{\text{quad}}(z)$}
\addplot[c4, thick] table[row sep=\\]{
x y \\ 
 10  10.046999999999999 \\ 
 20  9.565 \\ 
 50  9.904 \\ 
 100  9.72857142857143 \\ 
 200  9.450000000000001 \\ 
 500  9.417 \\ 
};
\addlegendentry{$\phi_{\text{lin}}(z)$}
\end{axis}
\end{tikzpicture}&
    \begin{tikzpicture}
    \begin{axis}[
    width=8cm,
    height=4.8cm,
    legend pos=south east,
    grid=major,
    grid style={dashed,gray!30},
    xmin=10, xmax=500,
    ymin=0, ymax=70,
    xlabel={$N_2$},
    ylabel={$V_1^{\pi}(s)$},
    font=\scriptsize,
    title={Values of $V_1^{\pi}(s)$ when $N_1=200$},
    xlabel style={
        at={(current axis.south east)}, 
        anchor=north east,              
        yshift=5pt,                   
        xshift=10pt                      
    },
    ylabel style={
        at={(current axis.north west)}, 
        anchor=north east,              
        yshift=-10pt,                   
        xshift=20pt                      
    },
]
\addplot[c1, thick] table[row sep=\\]{
x y \\ 
 10  45.608000000000004 \\ 
 20  43.923 \\ 
 50  46.75 \\ 
 100  52.33076923076923 \\ 
 200  55.55806451612903 \\ 
 500  57.88928571428572 \\ 
};
\addlegendentry{$\phi_{K}(z)$}
\addplot[c2, thick] table[row sep=\\]{
x y \\ 
 10  19.75263157894737 \\ 
 20  13.938888888888888 \\ 
 50  20.092857142857145 \\ 
 100  39.45 \\ 
 200  41.074999999999996 \\ 
 500  48.55 \\ 
};
\addlegendentry{$\phi_{\text{cub}}(z)$}
\addplot[c3, thick] table[row sep=\\]{
x y \\ 
 10  20.078000000000003 \\ 
 20  17.880208333333332 \\ 
 50  29.861363636363638 \\ 
 100  42.634375 \\ 
 200  45.49047619047619 \\ 
 500  55.064285714285724 \\ 
};
\addlegendentry{$\phi_{\text{quad}}(z)$}
\addplot[c4, thick] table[row sep=\\]{
x y \\ 
 10  9.811 \\ 
 20  9.617 \\ 
 50  10.236 \\ 
 100  9.942045454545456 \\ 
 200  9.504545454545456 \\ 
 500  9.368749999999999 \\ 
};
\addlegendentry{$\phi_{\text{lin}}(z)$}
\end{axis}
\end{tikzpicture}
    \end{tabular}
    \centering
    \caption{Comparison the values of $V^{\pi}_1(s)$ between finite dimensional features and kernel features.}
\label{fig:comparison_between_kernel_and_linear_features} 
\end{figure}

\section{Conclusion}
In this paper, we demonstrate that incorporating unlabeled data into offline RL can greatly improve offline RL performance. Our theoretical analysis shows how unlabeled data can improve the performance of offline RL, especially in a more general function approximation setting, in contrast to the results in \citet{hu2023provable}. Our analysis is based on the common offline RL assumption about the dataset, providing a comprehensive examination of the algorithm's performance under these conditions. In future work, it may be interesting to extend to the discounted MDP setting to deal with more category problems and the low-rank MDP \citep{uehara2021representation}.
\section*{Acknowledgments}
This work was supported in part by the Ministry of Education (MOE) of Taiwan under Grant NTU-111L891406, the Asian Office of Aerospace Research and Development (AOARD) under Grant NTU-112HT911020, the Center of Data Intelligence: Technologies, Applications, and Systems, National Taiwan University (grant nos. 111L900901/111L900902/111L900903), from the Featured Areas Research Center Program within the framework of the Higher Education Sprout, the Ministry of Education (MOE) of Taiwan, 
and the financial supports from the Featured Area Research Center Program within the framework of the Higher Education Sprout Project by the Ministry of Education (111L900901/111L900902/111L900903).

\bibliography{main}

\begin{thebibliography}{49}
\providecommand{\natexlab}[1]{#1}
\providecommand{\url}[1]{\texttt{#1}}
\expandafter\ifx\csname urlstyle\endcsname\relax
  \providecommand{\doi}[1]{doi: #1}\else
  \providecommand{\doi}{doi: \begingroup \urlstyle{rm}\Url}\fi

\bibitem[Abbasi-Yadkori et~al.(2011)Abbasi-Yadkori, P{\'a}l, and Szepesv{\'a}ri]{abbasi2011improved}
Yasin Abbasi-Yadkori, D{\'a}vid P{\'a}l, and Csaba Szepesv{\'a}ri.
\newblock Improved algorithms for linear stochastic bandits.
\newblock \emph{Advances in neural information processing systems}, 24, 2011.

\bibitem[Aronszajn(1950)]{aronszajn1950theory}
Nachman Aronszajn.
\newblock Theory of reproducing kernels.
\newblock \emph{Transactions of the American mathematical society}, 68\penalty0 (3):\penalty0 337--404, 1950.

\bibitem[Berlinet \& Thomas-Agnan(2011)Berlinet and Thomas-Agnan]{berlinet2011reproducing}
Alain Berlinet and Christine Thomas-Agnan.
\newblock \emph{Reproducing kernel Hilbert spaces in probability and statistics}.
\newblock Springer Science \& Business Media, 2011.

\bibitem[Blanchet et~al.(2024)Blanchet, Lu, Zhang, and Zhong]{blanchet2024double}
Jose Blanchet, Miao Lu, Tong Zhang, and Han Zhong.
\newblock Double pessimism is provably efficient for distributionally robust offline reinforcement learning: Generic algorithm and robust partial coverage.
\newblock \emph{Advances in Neural Information Processing Systems}, 36, 2024.

\bibitem[Cai et~al.(2020)Cai, Yang, Jin, and Wang]{cai2020provably}
Qi~Cai, Zhuoran Yang, Chi Jin, and Zhaoran Wang.
\newblock Provably efficient exploration in policy optimization.
\newblock In \emph{International Conference on Machine Learning}, pp.\  1283--1294. PMLR, 2020.

\bibitem[Chen et~al.(2021)Chen, Nair, and Finn]{chen2021learning}
Annie~S Chen, Suraj Nair, and Chelsea Finn.
\newblock Learning generalizable robotic reward functions from" in-the-wild" human videos.
\newblock \emph{Robotics: Science and Systems (RSS)}, 2021.

\bibitem[Chowdhury \& Gopalan(2017)Chowdhury and Gopalan]{chowdhury2017kernelized}
Sayak~Ray Chowdhury and Aditya Gopalan.
\newblock On kernelized multi-armed bandits.
\newblock In \emph{International Conference on Machine Learning}, pp.\  844--853. PMLR, 2017.

\bibitem[Duan et~al.(2020)Duan, Jia, and Wang]{duan2020minimax}
Yaqi Duan, Zeyu Jia, and Mengdi Wang.
\newblock Minimax-optimal off-policy evaluation with linear function approximation.
\newblock In \emph{International Conference on Machine Learning}, pp.\  2701--2709. PMLR, 2020.

\bibitem[Eysenbach et~al.(2020)Eysenbach, Geng, Levine, and Salakhutdinov]{eysenbach2020rewriting}
Ben Eysenbach, Xinyang Geng, Sergey Levine, and Russ~R Salakhutdinov.
\newblock Rewriting history with inverse rl: Hindsight inference for policy improvement.
\newblock \emph{Advances in neural information processing systems}, 33:\penalty0 14783--14795, 2020.

\bibitem[Finn et~al.(2016)Finn, Levine, and Abbeel]{finn2016guided}
Chelsea Finn, Sergey Levine, and Pieter Abbeel.
\newblock Guided cost learning: Deep inverse optimal control via policy optimization.
\newblock In \emph{International conference on machine learning}, pp.\  49--58. PMLR, 2016.

\bibitem[Fu et~al.(2017)Fu, Luo, and Levine]{fu2017learning}
Justin Fu, Katie Luo, and Sergey Levine.
\newblock Learning robust rewards with adversarial inverse reinforcement learning.
\newblock In \emph{International Conference on Learning Representations}, 2017.

\bibitem[Fu et~al.(2018)Fu, Singh, Ghosh, Yang, and Levine]{fu2018variational}
Justin Fu, Avi Singh, Dibya Ghosh, Larry Yang, and Sergey Levine.
\newblock Variational inverse control with events: A general framework for data-driven reward definition.
\newblock \emph{Advances in neural information processing systems}, 31, 2018.

\bibitem[Ho \& Ermon(2016)Ho and Ermon]{ho2016generative}
Jonathan Ho and Stefano Ermon.
\newblock Generative adversarial imitation learning.
\newblock \emph{Advances in neural information processing systems}, 29, 2016.

\bibitem[Hu et~al.(2023)Hu, Yang, Zhao, and Zhang]{hu2023provable}
Hao Hu, Yiqin Yang, Qianchuan Zhao, and Chongjie Zhang.
\newblock The provable benefits of unsupervised data sharing for offline reinforcement learning.
\newblock In \emph{The Eleventh International Conference on Learning Representations}, 2023.
\newblock URL \url{https://openreview.net/forum?id=MTTPLcwvqTt}.

\bibitem[Hu et~al.(2024)Hu, Yang, Ye, Wu, Mai, Hu, Lv, Fan, Zhao, and Zhang]{hu2024bayesian}
Hao Hu, Yiqin Yang, Jianing Ye, Chengjie Wu, Ziqing Mai, Yujing Hu, Tangjie Lv, Changjie Fan, Qianchuan Zhao, and Chongjie Zhang.
\newblock Bayesian design principles for offline-to-online reinforcement learning.
\newblock \emph{arXiv preprint arXiv:2405.20984}, 2024.

\bibitem[Janner et~al.(2019)Janner, Fu, Zhang, and Levine]{janner2019trust}
Michael Janner, Justin Fu, Marvin Zhang, and Sergey Levine.
\newblock When to trust your model: Model-based policy optimization.
\newblock \emph{Advances in neural information processing systems}, 32, 2019.

\bibitem[Jin et~al.(2018)Jin, Allen-Zhu, Bubeck, and Jordan]{jin2018q}
Chi Jin, Zeyuan Allen-Zhu, Sebastien Bubeck, and Michael~I Jordan.
\newblock Is q-learning provably efficient?
\newblock \emph{Advances in neural information processing systems}, 31, 2018.

\bibitem[Jin et~al.(2021)Jin, Yang, and Wang]{jin2021pessimism}
Ying Jin, Zhuoran Yang, and Zhaoran Wang.
\newblock Is pessimism provably efficient for offline rl?
\newblock In \emph{International Conference on Machine Learning}, pp.\  5084--5096. PMLR, 2021.

\bibitem[Kalashnikov et~al.(2018)Kalashnikov, Irpan, Pastor, Ibarz, Herzog, Jang, Quillen, Holly, Kalakrishnan, Vanhoucke, et~al.]{kalashnikov2018scalable}
Dmitry Kalashnikov, Alex Irpan, Peter Pastor, Julian Ibarz, Alexander Herzog, Eric Jang, Deirdre Quillen, Ethan Holly, Mrinal Kalakrishnan, Vincent Vanhoucke, et~al.
\newblock Scalable deep reinforcement learning for vision-based robotic manipulation.
\newblock In \emph{Conference on Robot Learning}, pp.\  651--673. PMLR, 2018.

\bibitem[Kalashnikov et~al.(2021)Kalashnikov, Varley, Chebotar, Swanson, Jonschkowski, Finn, Levine, and Hausman]{kalashnikov2021mt}
Dmitry Kalashnikov, Jacob Varley, Yevgen Chebotar, Benjamin Swanson, Rico Jonschkowski, Chelsea Finn, Sergey Levine, and Karol Hausman.
\newblock Mt-opt: Continuous multi-task robotic reinforcement learning at scale.
\newblock \emph{arXiv preprint arXiv:2104.08212}, 2021.

\bibitem[Kostrikov et~al.(2019)Kostrikov, Nachum, and Tompson]{kostrikov2019imitation}
Ilya Kostrikov, Ofir Nachum, and Jonathan Tompson.
\newblock Imitation learning via off-policy distribution matching.
\newblock In \emph{International Conference on Learning Representations}, 2019.

\bibitem[Levine et~al.(2020)Levine, Kumar, Tucker, and Fu]{levine2020offline}
Sergey Levine, Aviral Kumar, George Tucker, and Justin Fu.
\newblock Offline reinforcement learning: Tutorial, review, and perspectives on open problems.
\newblock \emph{arXiv preprint arXiv:2005.01643}, 2020.

\bibitem[Muandet et~al.(2017)Muandet, Fukumizu, Sriperumbudur, Sch{\"o}lkopf, et~al.]{muandet2017kernel}
Krikamol Muandet, Kenji Fukumizu, Bharath Sriperumbudur, Bernhard Sch{\"o}lkopf, et~al.
\newblock Kernel mean embedding of distributions: A review and beyond.
\newblock \emph{Foundations and Trends{\textregistered} in Machine Learning}, 10\penalty0 (1-2):\penalty0 1--141, 2017.

\bibitem[Rashidinejad et~al.(2021)Rashidinejad, Zhu, Ma, Jiao, and Russell]{rashidinejad2021bridging}
Paria Rashidinejad, Banghua Zhu, Cong Ma, Jiantao Jiao, and Stuart Russell.
\newblock Bridging offline reinforcement learning and imitation learning: A tale of pessimism.
\newblock \emph{Advances in Neural Information Processing Systems}, 34:\penalty0 11702--11716, 2021.

\bibitem[Silver et~al.(2016)Silver, Huang, Maddison, Guez, Sifre, Van Den~Driessche, Schrittwieser, Antonoglou, Panneershelvam, Lanctot, et~al.]{silver2016mastering}
David Silver, Aja Huang, Chris~J Maddison, Arthur Guez, Laurent Sifre, George Van Den~Driessche, Julian Schrittwieser, Ioannis Antonoglou, Veda Panneershelvam, Marc Lanctot, et~al.
\newblock Mastering the game of go with deep neural networks and tree search.
\newblock \emph{nature}, 529\penalty0 (7587):\penalty0 484--489, 2016.

\bibitem[Singh et~al.(2019)Singh, Yang, Hartikainen, Finn, and Levine]{singh2019end}
Avi Singh, Larry Yang, Kristian Hartikainen, Chelsea Finn, and Sergey Levine.
\newblock End-to-end robotic reinforcement learning without reward engineering.
\newblock \emph{environment (eg, by placing additional sensors)}, 2019.

\bibitem[Srinivas et~al.(2009)Srinivas, Krause, Kakade, and Seeger]{srinivas2009gaussian}
Niranjan Srinivas, Andreas Krause, Sham~M Kakade, and Matthias Seeger.
\newblock Gaussian process optimization in the bandit setting: No regret and experimental design.
\newblock In \emph{Proceedings of the 27th International Conference on Machine Learning}, 2009.

\bibitem[Steinwart \& Christmann(2008)Steinwart and Christmann]{steinwart2008support}
Ingo Steinwart and Andreas Christmann.
\newblock \emph{Support vector machines}.
\newblock Springer Science \& Business Media, 2008.

\bibitem[Stiennon et~al.(2020)Stiennon, Ouyang, Wu, Ziegler, Lowe, Voss, Radford, Amodei, and Christiano]{stiennon2020learning}
Nisan Stiennon, Long Ouyang, Jeffrey Wu, Daniel Ziegler, Ryan Lowe, Chelsea Voss, Alec Radford, Dario Amodei, and Paul~F Christiano.
\newblock Learning to summarize with human feedback.
\newblock \emph{Advances in Neural Information Processing Systems}, 33:\penalty0 3008--3021, 2020.

\bibitem[Sutton \& Barto(2018)Sutton and Barto]{sutton2018reinforcement}
Richard~S Sutton and Andrew~G Barto.
\newblock \emph{Reinforcement learning: An introduction}.
\newblock MIT press, 2018.

\bibitem[Tropp et~al.(2015)]{tropp2015introduction}
Joel~A Tropp et~al.
\newblock An introduction to matrix concentration inequalities.
\newblock \emph{Foundations and Trends{\textregistered} in Machine Learning}, 8\penalty0 (1-2):\penalty0 1--230, 2015.

\bibitem[Uehara \& Sun(2021)Uehara and Sun]{uehara2021pessimistic}
Masatoshi Uehara and Wen Sun.
\newblock Pessimistic model-based offline reinforcement learning under partial coverage.
\newblock \emph{arXiv preprint arXiv:2107.06226}, 2021.

\bibitem[Uehara et~al.(2021)Uehara, Zhang, and Sun]{uehara2021representation}
Masatoshi Uehara, Xuezhou Zhang, and Wen Sun.
\newblock Representation learning for online and offline rl in low-rank mdps.
\newblock In \emph{International Conference on Learning Representations}, 2021.

\bibitem[Vakili et~al.(2021)Vakili, Khezeli, and Picheny]{vakili2021information}
Sattar Vakili, Kia Khezeli, and Victor Picheny.
\newblock On information gain and regret bounds in gaussian process bandits.
\newblock In \emph{International Conference on Artificial Intelligence and Statistics}, pp.\  82--90. PMLR, 2021.

\bibitem[Valko et~al.(2013)Valko, Korda, Munos, Flaounas, and Cristianini]{valko2013finite}
Michal Valko, Nathaniel Korda, R{\'e}mi Munos, Ilias Flaounas, and Nelo Cristianini.
\newblock Finite-time analysis of kernelised contextual bandits.
\newblock In \emph{Uncertainty in Artificial Intelligence}, 2013.

\bibitem[Wagenmaker \& Pacchiano(2023)Wagenmaker and Pacchiano]{wagenmaker2023leveraging}
Andrew Wagenmaker and Aldo Pacchiano.
\newblock Leveraging offline data in online reinforcement learning.
\newblock In \emph{International Conference on Machine Learning}, pp.\  35300--35338. PMLR, 2023.

\bibitem[Wang et~al.(2020{\natexlab{a}})Wang, Foster, and Kakade]{wang2020statistical}
Ruosong Wang, Dean~P Foster, and Sham~M Kakade.
\newblock What are the statistical limits of offline rl with linear function approximation?
\newblock In \emph{International Conference on Learning Representations}, 2020{\natexlab{a}}.

\bibitem[Wang et~al.(2020{\natexlab{b}})Wang, Salakhutdinov, and Yang]{wang2020provably}
Ruosong Wang, Ruslan Salakhutdinov, and Lin~F Yang.
\newblock Provably efficient reinforcement learning with general value function approximation.
\newblock \emph{arXiv preprint arXiv:2005.10804}, 2020{\natexlab{b}}.

\bibitem[Xie et~al.(2021)Xie, Jiang, Wang, Xiong, and Bai]{xie2021policy}
Tengyang Xie, Nan Jiang, Huan Wang, Caiming Xiong, and Yu~Bai.
\newblock Policy finetuning: Bridging sample-efficient offline and online reinforcement learning.
\newblock \emph{Advances in neural information processing systems}, 34:\penalty0 27395--27407, 2021.

\bibitem[Yan et~al.(2022)Yan, Li, Chen, and Fan]{yan2022model}
Yuling Yan, Gen Li, Yuxin Chen, and Jianqing Fan.
\newblock Model-based reinforcement learning is minimax-optimal for offline zero-sum markov games.
\newblock \emph{arXiv preprint arXiv:2206.04044}, 2022.

\bibitem[Yang et~al.(2020)Yang, Jin, Wang, Wang, and Jordan]{yang2020function}
Zhuoran Yang, Chi Jin, Zhaoran Wang, Mengdi Wang, and Michael~I Jordan.
\newblock On function approximation in reinforcement learning: Optimism in the face of large state spaces.
\newblock \emph{arXiv preprint arXiv:2011.04622}, 2020.

\bibitem[Yeh et~al.(2023)Yeh, Chang, Yueh, Wu, Bernacchia, and Vakili]{yeh2023sample}
Sing-Yuan Yeh, Fu-Chieh Chang, Chang-Wei Yueh, Pei-Yuan Wu, Alberto Bernacchia, and Sattar Vakili.
\newblock Sample complexity of kernel-based q-learning.
\newblock In \emph{International Conference on Artificial Intelligence and Statistics}, pp.\  453--469. PMLR, 2023.

\bibitem[Yin et~al.(2022{\natexlab{a}})Yin, Duan, Wang, and Wang]{yin2022near}
Ming Yin, Yaqi Duan, Mengdi Wang, and Yu-Xiang Wang.
\newblock Near-optimal offline reinforcement learning with linear representation: Leveraging variance information with pessimism.
\newblock In \emph{International Conference on Learning Representations}, 2022{\natexlab{a}}.

\bibitem[Yin et~al.(2022{\natexlab{b}})Yin, Wang, and Wang]{yin2022offline}
Ming Yin, Mengdi Wang, and Yu-Xiang Wang.
\newblock Offline reinforcement learning with differentiable function approximation is provably efficient.
\newblock \emph{arXiv preprint arXiv:2210.00750}, 2022{\natexlab{b}}.

\bibitem[Yu et~al.(2021{\natexlab{a}})Yu, Kumar, Chebotar, Hausman, Levine, and Finn]{yu2021conservative}
Tianhe Yu, Aviral Kumar, Yevgen Chebotar, Karol Hausman, Sergey Levine, and Chelsea Finn.
\newblock Conservative data sharing for multi-task offline reinforcement learning.
\newblock \emph{Advances in Neural Information Processing Systems}, 34:\penalty0 11501--11516, 2021{\natexlab{a}}.

\bibitem[Yu et~al.(2021{\natexlab{b}})Yu, Kumar, Rafailov, Rajeswaran, Levine, and Finn]{yu2021combo}
Tianhe Yu, Aviral Kumar, Rafael Rafailov, Aravind Rajeswaran, Sergey Levine, and Chelsea Finn.
\newblock Combo: Conservative offline model-based policy optimization.
\newblock \emph{Advances in neural information processing systems}, 34:\penalty0 28954--28967, 2021{\natexlab{b}}.

\bibitem[Yu et~al.(2022)Yu, Kumar, Chebotar, Hausman, Finn, and Levine]{yu2022leverage}
Tianhe Yu, Aviral Kumar, Yevgen Chebotar, Karol Hausman, Chelsea Finn, and Sergey Levine.
\newblock How to leverage unlabeled data in offline reinforcement learning.
\newblock In \emph{International Conference on Machine Learning}, pp.\  25611--25635. PMLR, 2022.

\bibitem[Zanette et~al.(2020)Zanette, Brandfonbrener, Brunskill, Pirotta, and Lazaric]{zanette2020frequentist}
Andrea Zanette, David Brandfonbrener, Emma Brunskill, Matteo Pirotta, and Alessandro Lazaric.
\newblock Frequentist regret bounds for randomized least-squares value iteration.
\newblock In \emph{International Conference on Artificial Intelligence and Statistics}, pp.\  1954--1964. PMLR, 2020.

\bibitem[Zanette et~al.(2021)Zanette, Cheng, and Agarwal]{zanette2021cautiously}
Andrea Zanette, Ching-An Cheng, and Alekh Agarwal.
\newblock Cautiously optimistic policy optimization and exploration with linear function approximation.
\newblock In \emph{Conference on Learning Theory}, pp.\  4473--4525. PMLR, 2021.

\end{thebibliography}
\bibliographystyle{tmlr}

\newpage
\appendix
\section{Pessimistic Value Iteration}
\label{appendix D}
The Pessimistic Value Iteration \citep{jin2021pessimism} (PEVI) algorithm constructs an estimated Bellman operator $\widehat{\mathbb{B}}_h$ based on the dataset $\mathcal{D}$ so that $\widehat{\mathbb{B}}_h \widehat{V}^{\mathcal{D}}_{h+1}: \mathcal{S} \times \mathcal{A} \rightarrow \mathbb{R}$ approximates $\mathbb{B}_h \widehat{V}^{\mathcal{D}}_{h+1}: \mathcal{S} \times \mathcal{A} \rightarrow \mathbb{R}$. Here $\widehat{V}^{\mathcal{D}}_{h+1}: \mathcal{S} \rightarrow \mathbb{R}$ is an estimated value function based on $\mathcal{D}$. Define an uncertainty quantifier with the confidence parameter $\xi\in(0, 1)$ as follows.
\begin{definition}[$\xi$-Uncertainty Quantifier]
    We say $\left\{\Gamma_h\right\}_{h=1}^H\left(\Gamma_h: \mathcal{S} \times \mathcal{A} \rightarrow \mathbb{R}\right)$ is a $\xi$-uncertainty quantifier if the event
\begin{equation}\label{eqn:uncertainty quantifier}
    \mathcal{E}=\left\{\left|\left(\widehat{\mathbb{B}}_h \widehat{V}^{\mathcal{D}}_{h+1}\right)(s, a)-\left(\mathbb{B}_h \widehat{V}_{h+1}^{\mathcal{D}}\right)(s, a)\right| \leq \Gamma_h(s, a), \quad \forall (s, a) \in \mathcal{S} \times \mathcal{A}, h \in[H]\right\},
\end{equation}
satisfies $\mathbb{P}(\mathcal{E}) \geq 1-\xi$.
\end{definition}
\begin{algorithm}[ht]
\caption{Pessimistic Value Iteration (PEVI): General MDP}\label{alg:PEVI}
\begin{algorithmic}[1]
\State \textbf{Input: }{Dataset $\mathcal{D}=\left\{\left(s_h^\tau, a_h^\tau, r_h^\tau\right)\right\}_{\tau, h=1}^{K, H} .$}
\State Initialization: Set $\widehat{V}^{\mathcal{D}}_{H+1}(\cdot)\leftarrow 0$.
\For{step $h = H, H-1, \cdots, 1$}
    \State Construct $\left(\widehat{\mathbb{B}}_h \widehat{V}^{\mathcal{D}}_{h+1}\right)(\cdot, \cdot)$ and $\Gamma_h(\cdot, \cdot)$ based on $\mathcal{D}$.
    \State Set $\bar{Q}^{\mathcal{D}}_h(\cdot, \cdot) \leftarrow\left(\widehat{\mathbb{B}}_h \widehat{V}_{h+1}\right)(\cdot, \cdot)-\Gamma_h(\cdot, \cdot)$.
    \State Set $\widehat{Q}^{\mathcal{D}}_h(\cdot, \cdot) \leftarrow \min \left\{\bar{Q}^{\mathcal{D}}_h(\cdot, \cdot), H-h+1\right\}^{+}$.
    \State Set $\widehat{\pi}_h(\cdot \mid s) \leftarrow \arg \max _{\pi_h}\left\langle\widehat{Q}^{\mathcal{D}}_h(s, \cdot), \pi_h(\cdot \mid s)\right\rangle_{\mathcal{A}}$.
    \State Set $\widehat{V}^{\mathcal{D}}_h(\cdot) \leftarrow\left\langle\widehat{Q}^{\mathcal{D}}_h(\cdot, \cdot), \widehat{\pi}_h(\cdot \mid \cdot)\right\rangle_{\mathcal{A}}$.
\EndFor
\State \textbf{Output: }{$\text{Pess}(\mathcal{D}) = \{\hat{\pi}_h\}_{h=1}^H, \{\widehat{V}^{\mathcal{D}}_h\}_{h=1}^H$}.
\end{algorithmic}
\end{algorithm}
By equation (\ref{eqn:uncertainty quantifier}), $\Gamma_h$ quantifies the uncertainty, which allows us to develop the meta-algorithm in Algorithm \ref{alg:PEVI}. 
\begin{algorithm}[ht]
    \caption{PEVI: Kernel Approximation with Data Split}\label{alg:PEVI_RKHS}
\begin{algorithmic}[1]
\State \textbf{Input: }{Dataset $\mathcal{D}=\left\{\left(s_h^\tau, a_h^\tau, r_h^\tau\right)\right\}_{\tau, h=1}^{K, H}$ and parameter $B$ and $\lambda$.}
\State Data split: Randomly split dataset $\mathcal{D}$ into $H$ disjoint and equally sub-datasets $\{\widetilde{\mathcal{D}}_h\}_{h=1}^H$.
\State Initialization: Set $\widehat{V}^{\widetilde{\mathcal{D}}_{H+1:H}}_{H+1}(\cdot) \leftarrow 0$.
\For{\text{step} $h= H, \cdots, 1$}
    \State Compute the Gram matrix $K^{\widetilde{\mathcal{D}}_h}_h$, function $k^{\widetilde{\mathcal{D}}_h}_h$ and response vector $y_h$ defined in equation (\ref{eqn: kernel matrix}) and (\ref{eqn:response vector}), respectively.
    \State Set $\Gamma_h(\cdot, \cdot) \leftarrow B \cdot \lambda^{-1 / 2} \cdot\left(k(\cdot, \cdot ; \cdot, \cdot)-k^{\widetilde{\mathcal{D}}_h}_h(\cdot, \cdot)^{\top}\left(K^{\widetilde{\mathcal{D}}_h}_h+\lambda I\right)^{-1} k^{\widetilde{\mathcal{D}}_h}_h(\cdot, \cdot)\right)^{1 / 2}$.
    \State Set $\bar{Q}^{\widetilde{\mathcal{D}}_{h:H}}_h(\cdot, \cdot) \leftarrow k^{\widetilde{\mathcal{D}}_h}_h(\cdot, \cdot)^{\top}\left(K^{\widetilde{\mathcal{D}}_h}_h+\lambda I\right)^{-1} y_h - \Gamma_h(\cdot, \cdot)$.
    \State Set $\widehat{Q}^{\widetilde{\mathcal{D}}_{h:H}}_h(\cdot, \cdot) \leftarrow \min \left\{\bar{Q}^{\widetilde{\mathcal{D}}_{h:H}}_h(\cdot, \cdot), H-h+1\right\}^{+}$.
    \State Set $\widehat{\pi}_h(\cdot \mid s) \leftarrow \arg \max _{\pi_h}\left\langle\widehat{Q}^{\widetilde{\mathcal{D}}_{h:H}}_h(s, \cdot), \pi_h(\cdot \mid s)\right\rangle_{\mathcal{A}}$.
    \State Set $\widehat{V}^{\widetilde{\mathcal{D}}_{h:H}}_h(\cdot) \leftarrow\left\langle\widehat{Q}^{\mathcal{D}}_h(\cdot, \cdot), \widehat{\pi}_h(\cdot \mid \cdot)\right\rangle_{\mathcal{A}}$.
\EndFor
\State  \textbf{Output: }{$\operatorname{Pess}(\mathcal{D})=\left\{\widehat{\pi}_h\right\}_{h=1}^H, \{\widehat{V}^{\widetilde{\mathcal{D}}_{h:H}}_h\}_{h=1}^H$.}
\end{algorithmic}
\end{algorithm}
Now we introduce the PEVI with the data splitting and the kernel setting.
Suppose we have the dataset $\mathcal{D} = \{(s_h^\tau, a_h^\tau, r_h^\tau)\}_{h, \tau = 1}^{H, N}$, we partition the dataset $\mathcal{D}$ into $H$ disjoint and equally sub-datasets $\{\widetilde{\mathcal{D}}_h\}_{h=1}^H$ and $|\widetilde{\mathcal{D}}_h| := N_H = N/H$ for all $h\in [H]$. Consider the index set $\mathcal{I}_h = \{N_H\cdot (h-1) + 1, \cdots, N_H\cdot h\} = \{\tau_{h, 1}, \cdots, \tau_{h, N_H}\}$ such that $\widetilde{\mathcal{D}}_h = \{(s_h^\tau, a_h^\tau, r_h^\tau)\}_{\tau\in \mathcal{I}_h}$. To simplify notation, let $\widetilde{\mathcal{D}}_{h:H} = \bigcup_{t=h}^H \widetilde{\mathcal{D}}_t$ for $h\in [H]$. Note that $\mathcal{D}_{h:H} = \emptyset$ if $h>H$. Then, we construct the pessimistic value iterations \citep{jin2021pessimism} with $\widehat{\mathbb{B}}_h \widehat{V}^{\widetilde{\mathcal{D}}_{h+1:H}}_{h+1}, \Gamma_h, $ and $\widehat{V}^{\widetilde{\mathcal{D}}_{h:H}}_h$. Note that $\widehat{V}^{\widetilde{\mathcal{D}}_{H+1:H}}_{H+1}=0$.
Define the empirical mean squared Bellman error (MSBE) as 
\begin{equation}\label{eqn:MSBE}
    M_h(f)=\sum_{\tau \in \mathcal{I}_h}\left(r_h^\tau+\widehat{V}^{\widetilde{\mathcal{D}}_{h+1:H}}_{h+1}\left(s_{h+1}^\tau\right)-f\left(s_h^\tau, a_h^\tau\right)\right)^2,
\end{equation}
at each step $h\in [H]$ and for all $f\in \mathcal{H}_{k}$. Corresponding, we set 
\begin{equation}\label{eqn:MSBE_solve}
\left(\widehat{\mathbb{B}}_h \widehat{V}^{\widetilde{\mathcal{D}}_{h+1:H}}_{h+1}\right)(z)=\widehat{f}_h(z), \quad \text { where } \widehat{f}_h=\underset{f \in \mathcal{H}_k}{\arg \min }\ M_h(f)+\lambda \cdot\|f\|_{\mathcal{H}_{k}}^2,
\end{equation}
for $\lambda > 0$.
Moreover, we construct $\Gamma_h$ via 
\begin{equation}
\label{eqn:xi-quantifier}
\Gamma_h(z)=B \cdot \lambda^{-1 / 2} \cdot\left(k(z, z)-k^{\widetilde{\mathcal{D}}_h}_h(z)^{\top}\left(K^{\widetilde{\mathcal{D}}_h}_h+\lambda I\right)^{-1} k^{\widetilde{\mathcal{D}}_h}_h(z)\right)^{1 / 2},
\end{equation}
where $B>0$ is a scaling parameter. Note that it is a bonus function defined in \citep{yang2020function} and that it is clearly a $\xi$ quantifier. Here, the Gram matrix $K^{\widetilde{\mathcal{D}}_h}_h \in \mathbb{R}^{N_H\times N_H}$, and the function $k^{\widetilde{\mathcal{D}}_h}_h: \mathcal{Z} \rightarrow \mathbb{R}^{N_H}$ as
\begin{equation}\label{eqn: kernel matrix}
\left[K^{\widetilde{\mathcal{D}}_h}_h\right]_{\tau, \tau^\prime}=k\left(z_h^\tau, z_h^{\tau^{\prime}}\right), \quad k^{\widetilde{\mathcal{D}}_h}_h(z)=\left(\begin{array}{c}
k\left(z_h^{\tau_{h, 1}}, z\right) \\
\vdots \\
k\left(z_h^{\tau_{h, N_H}}, z\right)
\end{array}\right) \in \mathbb{R}^{N_H},
\end{equation}
for $\tau, \tau^\prime\in \mathcal{I}_h$. The entry of $y_h\in \mathbb{R}^{N_H}$ corresponding to $\tau\in \mathcal{I}_h$ is 
\begin{equation}\label{eqn:response vector}
\left[y_h\right]_\tau=r_h^\tau+\widehat{V}^{\widetilde{\mathcal{D}}_{h+1:H}}_{h+1}\left(s_{h+1}^\tau\right).
\end{equation}
We construct the pessimistic value iteration with kernel approximation with kernel $k$ by
$$
\begin{aligned}
& \bar{Q}^{\widetilde{\mathcal{D}}_{h+1:H}}_h(z)=\widehat{\mathbb{B}}_h \widehat{V}^{\widetilde{\mathcal{D}}_{h+1:H}}_{h+1}(z)-\Gamma_h(z),\\
& \widehat{Q}^{\widetilde{\mathcal{D}}_{h+1:H}}_h(z)=\min \left\{\bar{Q}^{\widetilde{\mathcal{D}}_{h+1:H}}_h(z), H-h+1\right\}^{+}, \\
& \widehat{\pi}_h(\cdot \mid s)=\underset{\pi_h}{\arg \max }\left\langle\widehat{Q}^{\widetilde{\mathcal{D}}_{h+1:H}}_h(s, \cdot), \widehat{\pi}_h(\cdot \mid s)\right\rangle_{\mathcal{A}},\\
& \widehat{V}^{\widetilde{\mathcal{D}}_{h+1:H}}_h(s)=\left\langle\widehat{Q}^{\widetilde{\mathcal{D}}_{h+1:H}}_h(s, \cdot), \widehat{\pi}_h(\cdot \mid s)\right\rangle_{\mathcal{A}}.
\end{aligned}
$$
The algorithm \ref{alg:PEVI_RKHS} summarizes the entire PEVI algorithm with data splitting.

\section{Proof of Main Result}\label{sec:proof}
\subsection{Proof of Proposition \ref{eqn:radius}}\label{proof:lemma4.1}
We present a generalization of \citep{abbasi2011improved} (Theorem 1). Its proof closely mirrors that of the special case where $\mathcal{H}_k$ has a linear kernel. To simplify notation, denote the labeled dataset as $\mathcal{D}_1 = \{s_h^\tau, a_h^\tau, r_h^\tau\}_{\tau, h=1}^{N_1, H}$. Subsequently, we address the following problem:
\begin{equation}
\widehat{\theta_h^t} \in \underset{\theta \in \mathcal{H}_k}{\arg \min }\left\{\sum_{\tau=1}^t\left(\left\langle\theta, \phi\left(s_h^\tau, a_h^\tau\right)\right\rangle_{\mathcal{H}_k}-r_h^\tau\right)^2+\nu\|\theta\|^2_\mathcal{H}\right\} .
\end{equation}
Here, $\phi(s_h^\tau, a_h^\tau)$ is a column vector for all $h\in [H]$ and $t > 0$. 
The solution for the above equation is that $\widehat{\theta_h^t}=(\Lambda^t_h)^{-1} (\Phi^t_h)^\top Y_h^t$, where $\Lambda^t_h= (\Phi^t_h)^\top \Phi^t_h+\nu I_\mathcal{H}$, $\Phi^t_h = \left[\phi\left(s_h^1, a_h^1\right)^\top, \cdots, \phi\left(s_{h}^t, a_{h}^t\right)^\top\right]^\top$, and $Y_h^t = \left[r_h^1, \cdots, r_{h}^t\right]$. Denote $E^t_h = \left[\epsilon_h^1, \cdots, \epsilon_h^t \right]$ and $K^t_h = \Phi^t_h(\Phi^t_h)^\top$. We then determine the upper bound with $\|{\widehat{\theta_h^t}}- \widehat{\theta_h^*}\|_{\Lambda_h^t}$.
We write
\begin{equation}
\begin{aligned}
    \widehat{\theta_h^t} &= ((\Phi^t_h)^\top\Phi^t_h + \nu I_\mathcal{H})^{-1} (\Phi_h^t)^\top (\Phi^t_h \theta_h^* + E^t_h)\\
    &= ((\Phi^t_h)^\top\Phi^t_h + \nu I_\mathcal{H})^{-1}(((\Phi_h^t)^\top\Phi^t_h + \nu I_\mathcal{H})-\nu I_\mathcal{H}) \theta_h^* + (\Phi_h^t)^\top E^t_h)\\
    &= ((\Phi^t_h)^\top\Phi^t_h + \nu I_\mathcal{H})^{-1} (\Phi^t_h)^\top E^t_h + \theta^* - \nu((\Phi^t_h)^\top\Phi^t_h + \nu I_\mathcal{H})^{-1}\theta^*\\
    &= (\Lambda^t_h)^{-1}(\Phi^t_h)^\top E^t_h + \theta^* - \nu (\Lambda^t_h)^{-1}\theta^*.
\end{aligned}
\end{equation}
That implies 
\begin{equation}
    \begin{aligned}
        x^\top \hat{\theta_t} - x^\top \hat{\theta^*} &= x^\top(\Lambda^t_h)^{-1}(\Phi^t_h)^\top E^t_h - \nu x^\top (\Lambda^t_h)^{-1}\theta^*\\
        &= \langle x, (\Phi^t_h)^\top E^t_h\rangle_{(\Lambda^t_h)^{-1}} - \nu \langle x, \theta^*\rangle_{(\Lambda^t_h)^{-1}},
    \end{aligned}
\end{equation}
where $\langle x, y \rangle_{(\Lambda^t_h)^{-1}} = x^\top (\Lambda^t_h)^{-1} y$ and $\Lambda^t_h$ is positive definite, then $(\Lambda^t_h)^{-1/2} \leq \nu^{-1/2}$.
Using Cauchy-Schwarz inequality, we get
\begin{equation}\label{eqn:B.33}
    \begin{aligned}
        \vert x^\top \widehat{\theta_h^t} - x^\top \theta_h^*\vert &\leq \Vert x \Vert_{(\Lambda^t_h)^{-1}} (\Vert (\Phi^t_h)^\top E^t_h\Vert_{(\Lambda^t_h)^{-1}} + \nu \Vert \theta_h^*\Vert_{(\Lambda^t_h)^{-1}})\\
        &\leq \Vert x \Vert_{(\Lambda^t_h)^{-1}} (\Vert (\Phi^t_h)^\top E^t_h\Vert_{(\Lambda^t_h)^{-1}} + \sqrt{\nu} \Vert \theta_h^*\Vert_{\mathcal{H}_k}),
    \end{aligned}
\end{equation}
where the second inequality uses the fact that 
\begin{equation*}
    \begin{aligned}
        \left\|\theta_h^*\right\|_{(\Lambda^t_h)^{-1}} = \left\|(\Lambda^t_h)^{-1/2}\theta_h^*\right\|_{\mathcal{H}_k}\leq \nu\frac{1}{\sqrt{\nu}}\left\|\theta_h^*\right\|_{\mathcal{H}_k}\leq \sqrt{\nu}\left\|\theta_h^*\right\|_{\mathcal{H}_k}.
    \end{aligned}
\end{equation*}
Next, we show that with probability at least $1-\delta$
\begin{equation}
    \Vert (\Phi^t_h)^\top E^t_h\Vert^2_{(\Lambda^t_h)^{-1}} \leq H^2 \cdot \log \operatorname{det}\left[\nu I+K^t_h\right]+2 H^2 \cdot \log (1 / \delta).
\end{equation}
Following \citep{valko2013finite}, we will use the following identities:
$$
\begin{aligned}
& & \left((\Phi^t_h)^\top \Phi^t_h+\nu I_\mathcal{H}\right) (\Phi^t_h)^\top & =(\Phi^t_h)^\top\left(\Phi^t_h (\Phi^t_h)^\top+\nu I\right) \\
\Rightarrow & & \Lambda^t_h (\Phi^t_h)^\top & =(\Phi^t_h)^\top\left(K^t_h+\nu I\right) \\
\Rightarrow & & (\Phi^t_h)^\top\left(K^t_h+\nu I\right)^{-1} & =(\Lambda^t_h)^{-1} (\Phi^t_h)^\top .
\end{aligned}
$$
With the basic operation, we get
\begin{equation}\label{eqn:B.35}
    \begin{aligned}
         \Vert (\Phi^t_h)^\top E^t_h\Vert^2_{(\Lambda^t_h)^{-1}} &= (E^t_h)^\top\Phi^t_h (\Lambda^t_h)^{-1}(\Phi^t_h)^\top E^t_h\\
         &= (E^t_h)^\top\Phi^t_h (\Phi^t_h)^\top\left(K^t_h+\nu I\right)^{-1} E^t_h\\
         &= (E^t_h)^\top K^t_h\left(K^t_h+\nu I\right)^{-1} E^t_h.\\
    \end{aligned}
\end{equation}
Setting $\nu = 1 +\eta$, for some $\eta > 0$, we have 
\begin{equation}
\begin{aligned}
    \left(K^t_h+\eta \cdot I\right)\left[K^t_h+(1+\eta) \cdot I\right]^{-1}&=\left(K^t_h+\eta \cdot I\right)\left[I+\left(K^t_h+\eta \cdot I\right)\right]^{-1}\\
    &=\left[\left(K^t_h+\eta \cdot I\right)^{-1}+I\right]^{-1}, 
\end{aligned}
\end{equation}
which implies 
\begin{equation}\label{eqn:B.37}
    \begin{aligned}
        (E^t_h)^\top K^t_h\left(K^t_h+\nu I\right)^{-1} E^t_h &\leq (E^t_h)^\top \left(K^t_h+\eta \cdot I\right)\left(K^t_h+\left(1+\eta\right) I\right)^{-1} E^t_h\\
        &= (E^t_h)^\top \left[\left(K^t_h+\eta \cdot I\right)^{-1}+I\right]^{-1} E^t_h.
    \end{aligned}
\end{equation}
Applying Lemma \ref{lemma:concentration} with  $\sigma^2 = 1$, we obtain the following result: with a probability of at least $1-\delta$, the given condition holds simultaneously for all $t \geq 1$
$$
(E^t_h)^{\top}\left[\left(K^t_h+\eta \cdot I\right)^{-1}+I\right]^{-1} E^t_h \leq \log \operatorname{det}\left[(1+\eta) \cdot I+K^t_h\right]+2 \log (1 / \delta),
$$
for any $\eta > 0$ and $\delta \in (0,1)$.
Combine equation (\ref{eqn:B.35}) and (\ref{eqn:B.37}), for any $t > 0$, we get 
\begin{equation}\label{eqn:error term}
    \|(\Phi^t_h)^\top E^t_h\|_{(\Lambda^t_h)^{-1}} \leq \sqrt{\log{\frac{\operatorname{det}\left[(1+\eta) \cdot I+K^t_h\right]}{\delta^2}}},
\end{equation}
with probability at least $1-\delta$. 
Therefore, combine the equation (\ref{eqn:B.33}) and (\ref{eqn:error term}), one also has 
\begin{equation}
    \begin{aligned}
        \vert x^\top \widehat{\theta_h^t} - x^\top \theta_h^*\vert\leq \Vert x \Vert_{(\Lambda^t_h)^{-1}} (\sqrt{\log{\frac{\operatorname{det}\left[(1+\eta) \cdot I+K^t_h\right]}{\delta^2}}} + \sqrt{\nu} \Vert \theta_h^*\Vert_{\mathcal{H}_k}),
    \end{aligned}
\end{equation}
for all $t \geq 0$. Plugging in $x = \Lambda^t_h (\widehat{\theta}_h^t - \theta_h^*)$, we get 
\begin{equation}
    \|\widehat{\theta_h^t} - \theta_h^*\|^2_{\Lambda^t_h} \leq \|\Lambda^t_h (\widehat{\theta_h^t} - \theta_h^*)\|_{(\Lambda_h^t)^{-1}}(\sqrt{\log{\frac{\operatorname{det}\left[(1+\eta) \cdot I+K^t_h\right]}{\delta^2}}} + \sqrt{\nu} \Vert \theta_h^*\Vert_{\mathcal{H}_k}).
\end{equation}
Now, $\|\widehat{\theta_h^t} - \theta_h^*\|_{\Lambda^t_h} = \|\Lambda^t_h (\widehat{\theta_h^t} - \theta_h^*)\|_{(\Lambda_h^t)^{-1}} $ dividing both sides by $\|\hat{\theta_h^t} - \theta_h^*\|_{\Lambda^t_h}$, we get
\begin{equation}
    \|\widehat{\theta_h^t} - \theta_h^*\|_{\Lambda^t_h} \leq (\sqrt{\log{\frac{\operatorname{det}\left[(1+\eta) \cdot I+K^t_h\right]}{\delta^2}}} + \sqrt{\nu} \Vert \theta_h^*\Vert_{\mathcal{H}_k}).
\end{equation}
Finally, we fix $t = N_1$ and let $\widehat{\theta_h} = \widehat{\theta_h^t} , \Lambda_h = \Lambda^t_h$ and $K_h^{\mathcal{D}_1} = K^t_h$, where $\mathcal{D}_1$ is offline labeled dataset, we get 
\begin{equation}
    \|\widehat{\theta_h} - \theta_h^*\|_{\Lambda_h} \leq \sqrt{\nu} \Vert \theta_h^*\Vert_{\mathcal{H}_k}+\sqrt{\log\frac{\operatorname{det}\left[\nu I+K_h^{\mathcal{D}_1}\right]}{\delta^2}}. 
\end{equation}
Furthermore, observed that $\operatorname{det}\left(\nu I+K_h^{\mathcal{D}_1}\right)=\operatorname{det}\left(I+\nu^{-1} K_h^{\mathcal{D}_1}\right) \operatorname{det}(\nu I)$. Thus, we have
\begin{equation}
\log \left(\operatorname{det}\left(\nu I+K_h^{\mathcal{D}_1}\right)\right)=\log\left(\operatorname{det}\left(I+\nu^{-1} K_h^{\mathcal{D}_1}\right)\right)+N_1 \log \nu \leq 2 G(N_1, \nu)+N_1(\nu-1),
\end{equation}
where $\nu > 1$. Thus, set $\nu = 1 + 1/N_1$, we have
\begin{equation}
    \|\widehat{\theta_h} - \theta_h^*\|_{\Lambda_h} \leq \sqrt{\nu} \Vert \theta_h^*\Vert_{\mathcal{H}_k}+\sqrt{2 G(N_1, 1+1/N_1)+1+\log\frac{1}{\delta^2}}. 
\end{equation}

\subsection{Proof of Theorem \ref{thm:main thm}}\label{proof thm 3}
The proof of Theorem \ref{thm:main thm} and the related supporting lemmas are given in this part. Recall that we denote the labeled dataset $\mathcal{D}_1$, unlabeled dataset $\mathcal{D}_2^{\widetilde{\theta}}$, and $\mathcal{D}^{\widetilde{\theta}} = \{(s_h^\tau, a_h^\tau, \widetilde{r}_h^{{\widetilde{\theta}}_h}(s_h^\tau, a_h^\tau))\}_{\tau, h = 1}^{N, H}$, which is a combination of labeled dataset $\mathcal{D}_1$ and unlabeled dataset $\mathcal{D}_2^{\widetilde{\theta}}$ and $N = N_1+N_2$. We partition dataset $\mathcal{D}^\theta$ into $H$ disjoint and equally sized sub dataset $\{\widetilde{\mathcal{D}}_h^\theta\}_{h=1}^H$, where $|\widetilde{\mathcal{D}}^\theta_h| = N_H = N/H$. Let $\mathcal{I}_h=\left\{N_H\cdot (h-1) + 1, \ldots, N_H\cdot h\right\}=\left\{\tau_{h,1}, \cdots, \tau_{h, N_H}\right\}$ satisfy $\widetilde{\mathcal{D}}^\theta_h = \{(s_h^\tau, a_h^\tau, \widehat{r}_h^{\theta_h}(s_h^\tau, a_h^\tau))\}_{\tau\in \mathcal{I}_h}$.
Denote $\{\widehat{V}^{\widetilde{\mathcal{D}}_{h:H}^{{\theta}}}_h\}_{h=1}^H$ as the estimated value function constructed by PEVI. These estimations are based on datasets $\widetilde{\mathcal{D}}_{h:H}$, where rewards have been relabeled using the parameter $\theta := \{\theta_h\}_{h=1}^H$. Furthermore, let $V_1^{\pi, \theta}$ be the value function associated with policy $\pi$ and the estimated reward function with parameter $\theta$.
From equation (\ref{eqn:pess_reward}) in Algorithm \ref{alg:main}, for all $h\in [H]$, we have
\begin{equation}\label{eqn:pess}
\widehat{V}^{\widetilde{\mathcal{D}}_{h:H}^{\widetilde{\theta}}}_h(s_0) \leq \widehat{V}^{\widetilde{\mathcal{D}}_{h:H}^{\theta}}_h(s_0), \quad \forall \theta_h \in \mathcal{C}_h(\delta),
\end{equation}
where $s_0$ is an unique initial state and $\widetilde{\theta}:= \{\widetilde{\theta}_h\}_{h=1}^H$ is the pessimistic estimation of $\theta$.\\
Let $\mathcal{E}_1$ be the event $\theta^*_h \in \mathcal{C}_h(\delta)$, then we have $\mathbb{P}(\mathcal{E}_1)\geq 1 - \delta$ from Proposition \ref{eqn:radius}.\\
Denote $\{\widehat{V}^{\widetilde{\mathcal{D}}^{\widetilde{\theta}}_{h:H}}_h\} = \mathrm{Pess}(\mathcal{D})$ from algorithm \ref{alg:PEVI_RKHS}. Let $\mathcal{E}_2$ be the event where the following inequality holds for each dataset $\widetilde{\mathcal{D}}^{\widetilde{\theta}}_h$ and 
\begin{equation}\label{eqn: xi-quantifier}
    \left|\left(\widehat{\mathbb{B}}_h \widehat{V}^{\widetilde{\mathcal{D}}^{\widetilde{\theta}}_{h+1:H}}_{h+1}\right)(s, a)-\left(\mathbb{B}_h \widehat{V}^{\widetilde{\mathcal{D}}^{\widetilde{\theta}}_{h+1:H}}_{h+1}\right)(s, a)\right| \leq \Gamma_h(s, a) \ ,\forall(s, a) \in \mathcal{S} \times \mathcal{A}, h \in[H], 
\end{equation}
where $\Gamma_h(s, a) = B \|\phi(s, a)\|_{(\Lambda_h^{\widetilde{\mathcal{D}}^{\widetilde{\theta}}_h})^{-1}}$, and 
    \begin{equation}
    B= \begin{cases}C_2 \cdot H\cdot \sqrt{d\log{(N/\delta)}} & d \text {-finite spectrum, } \\ C_2 \cdot H\cdot \sqrt{\left(\log{N/\delta}\right)^{1+1/d}} & d \text {-exponential decay, }\\
    C_2 \cdot N^{\frac{d+1}{2(d+m)}} H^{1-\frac{d+1}{2(d+m)}} \cdot \sqrt{\log (N / \delta)}& d \text {-polynomial decay, }
    \end{cases}
    \end{equation}
    where $C_1 > 0$ does not depend on $N_1$ nor $H$ and $C_2 > 0$ does not depend on $N$ nor $H$. Then, apply Lemma \ref{Lemma:C3} such that $\mathbb{P}(\mathcal{E}_2)\geq 1 - \delta$.\\
Condition on $\mathcal{E}_1\cap \mathcal{E}_2$, we have
\begin{equation}
\begin{aligned}
V_1^{\pi^*, \theta^*}(s_0)-V_1^{\widehat{\pi}, \theta^*}(s_0)&=V_1^{\pi^*, \theta^*}(s_0)-\widehat{V}_1^{\widetilde{\mathcal{D}}^{\theta^*}_{1:H}}(s_0)+\widehat{V}_1^{\widetilde{\mathcal{D}}^{\theta^*}_{1:H}}(s_0)-V_1^{\widehat{\pi}, \theta^{*}}(s_0) \\
& \leq V_1^{\pi^*, \theta^*}(s_0)-\widehat{V}_1^{\widetilde{\mathcal{D}}^{\theta^*}_{1:H}}(s_0) \\
& = V_1^{\pi^*, \theta^*}(s_0)-V_1^{\pi^*, \widetilde{\theta}}(s_0)+V_1^{\pi^*, \widetilde{\theta}}(s_0)-\widehat{V}_1^{\widetilde{\mathcal{D}}^{\widetilde{\theta}}_{1:H}}(s_0)+\widehat{V}_1^{\widetilde{\mathcal{D}}^{\widetilde{\theta}}_{1:H}}(s_0)-\widehat{V}_1^{\widetilde{\mathcal{D}}^{\theta^*}_{1:H}}(s_0) \\
& \leq V_1^{\pi^*, \theta^*}(s_0)-V_1^{\pi^*, \widetilde{\theta}}(s_0)+V_1^{\pi^*, \widetilde{\theta}}(s_0)-\widehat{V}_1^{\widetilde{\mathcal{D}}^{\widetilde{\theta}}_{1:H}}(s_0) \\
&= V_1^{\pi^*, \theta^*}(s_0)-V^{\pi^*, \widehat{\theta}}_1(s_0)+V^{\pi^*, \widehat{\theta}}_1(s_0)- V_1^{\pi^*, \widetilde{\theta}}(s_0)+V_1^{\pi^*, \widetilde{\theta}}(s_0)-\widehat{V}_1^{\widetilde{\mathcal{D}}^{\widetilde{\theta}}_{1:H}}(s_0)\\
& \leq \left|V_1^{\pi^*, \theta^*}(s_0)-V^{\pi^*, \widehat{\theta}}_1(s_0)\right|+\left|V^{\pi^*, \widehat{\theta}}_1(s_0)- V_1^{\pi^*, \widetilde{\theta}}(s_0)\right|+V_1^{\pi^*, \widetilde{\theta}}(s_0)-\widehat{V}_1^{\widetilde{\mathcal{D}}^{\widetilde{\theta}}_{1:H}}(s_0)\\
&\leq  2\sum_{h=1}^H \beta_h(\delta)\mathbb{E}_{\pi^*}\left[\|\phi(s_h, a_h)\|_{(\Lambda_h^{\mathcal{D}_1})^{-1}} \mid s_1=s_0\right]\\
&+ 2B\sum_{h=1}^H\mathbb{E}_{\pi^*}\left[\|\phi(s_h, a_h)\|_{(\Lambda_h^{\widetilde{\mathcal{D}}^{\widetilde{\theta}}_h})^{-1}} \mid s_1=s_0\right],
\end{aligned}
\end{equation}
for $s_0$ is the unique initial state and $\beta_h(\delta)$ is 
\begin{equation*}
        \beta_h(\delta) = \begin{cases}\sqrt{1 + \frac{1}{N_1}}\mathscr{S} + \sqrt{ C_1\cdot d\cdot \log N_1  + \log(\frac{1}{\delta^2})} & d \text {-finite spectrum, } \\ 
    \sqrt{1 + \frac{1}{N_1}}\mathscr{S} + \sqrt{ C_1\cdot \left(\log N_1\right)^{1+\frac{1}{d}} + \log(\frac{1}{\delta^2})} & d \text {-exponential decay, }\\
    \sqrt{1 + \frac{1}{N_1}}\mathscr{S} + \sqrt{C_1 \cdot(N_1)^{\frac{d+1}{d+m}} \cdot \log (N_1) +\log(\frac{1}{\delta^2})} & d \text {-polynomial decay. }
    \end{cases}
    \end{equation*}
    Note that the first inequality follows Lemma \ref{lemma1}, while the second inequality follows directly from equation (\ref{eqn:pess}), and the last inequality follows Lemma \ref{lemma: C.2} and Theorem \ref{thm: C.5}. 

\begin{lemma}\label{lemma1}
    Under the event $\mathcal{E}_2$, we have
    \begin{equation*}
        \widehat{V}_1^{\widetilde{\mathcal{D}}^{{\theta}^*}_{1:H}}(s_0)-V_1^{\widehat{\pi}, \theta^{*}}(s_0)\leq 0,
    \end{equation*}
    where $s_0$ is the unique initial state.
\end{lemma}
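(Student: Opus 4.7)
The plan is to prove the lemma by backward induction on $h$, showing that $\widehat{V}_h^{\widetilde{\mathcal{D}}^{\theta^*}_{h:H}}(s) \leq V_h^{\widehat{\pi}, \theta^*}(s)$ for every $s \in \mathcal{S}$ and every $h \in [H+1]$; the claim then follows by specializing to $h = 1$ and $s = s_0$. This is the standard pessimism-yields-lower-bound argument that underlies PEVI-type analyses (see \citet{jin2021pessimism}), adapted here to the relabeled dataset with true reward parameter $\theta^*$.

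For the base case $h = H+1$, both $\widehat{V}_{H+1}^{\widetilde{\mathcal{D}}^{\theta^*}_{H+1:H}}$ and $V_{H+1}^{\widehat{\pi}, \theta^*}$ are identically zero by convention, so the inequality holds trivially. For the inductive step, assume the bound at level $h+1$; I would then chain the following four observations at level $h$. First, by the definition of $\widehat{Q}_h$ in Algorithm~\ref{alg:PEVI_RKHS} together with the fact that $Q_h^{\widehat{\pi}, \theta^*}(s,a) \in [0, H-h+1]$, it suffices to show $\widehat{\mathbb{B}}_h \widehat{V}_{h+1}^{\widetilde{\mathcal{D}}^{\theta^*}_{h+1:H}}(s,a) - \Gamma_h(s,a) \leq Q_h^{\widehat{\pi}, \theta^*}(s,a)$, because truncation by $\min\{\cdot, H-h+1\}^+$ can only tighten an upper bound of the form $Q_h^{\widehat{\pi}, \theta^*}$. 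Second, on the event $\mathcal{E}_2$ (equation~(\ref{eqn: xi-quantifier})), we have $\widehat{\mathbb{B}}_h \widehat{V}_{h+1}^{\widetilde{\mathcal{D}}^{\theta^*}_{h+1:H}}(s,a) - \Gamma_h(s,a) \leq \mathbb{B}_h \widehat{V}_{h+1}^{\widetilde{\mathcal{D}}^{\theta^*}_{h+1:H}}(s,a)$, where $\mathbb{B}_h$ is the \emph{true} Bellman operator associated with the reward parameter $\theta^*$. Third, since $\mathbb{B}_h$ is monotone in its value-function argument (it is an expectation of a nonnegative linear functional plus a fixed reward), the induction hypothesis $\widehat{V}_{h+1}^{\widetilde{\mathcal{D}}^{\theta^*}_{h+1:H}}(\cdot) \leq V_{h+1}^{\widehat{\pi}, \theta^*}(\cdot)$ yields $\mathbb{B}_h \widehat{V}_{h+1}^{\widetilde{\mathcal{D}}^{\theta^*}_{h+1:H}}(s,a) \leq \mathbb{B}_h V_{h+1}^{\widehat{\pi}, \theta^*}(s,a) = Q_h^{\widehat{\pi}, \theta^*}(s,a)$, where the final equality is the Bellman equation for $\widehat{\pi}$.

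To finish the inductive step, I would then integrate $\widehat{Q}_h \leq Q_h^{\widehat{\pi}, \theta^*}$ against the \emph{same} policy $\widehat{\pi}_h(\cdot \mid s)$ on both sides; this gives
\begin{equation*}
\widehat{V}_h^{\widetilde{\mathcal{D}}^{\theta^*}_{h:H}}(s) = \bigl\langle \widehat{Q}_h^{\widetilde{\mathcal{D}}^{\theta^*}_{h:H}}(s,\cdot), \widehat{\pi}_h(\cdot \mid s)\bigr\rangle_{\mathcal{A}} \leq \bigl\langle Q_h^{\widehat{\pi}, \theta^*}(s,\cdot), \widehat{\pi}_h(\cdot \mid s)\bigr\rangle_{\mathcal{A}} = V_h^{\widehat{\pi}, \theta^*}(s),
\end{equation*}
completing the induction. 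Evaluating at $h=1$ and $s = s_0$ proves the lemma.

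The only subtle point is the truncation step: one must verify that $\min\{x, H-h+1\}^+ \leq y$ whenever $x \leq y$ and $y \in [0, H-h+1]$, which is immediate by case analysis on the sign and magnitude of $x$. Everything else is bookkeeping, and crucially the argument never needs $\mathcal{E}_1$ — it relies solely on the uncertainty quantifier event $\mathcal{E}_2$, which is exactly the hypothesis of the lemma. Note that we work with the dataset $\widetilde{\mathcal{D}}^{\theta^*}_{h:H}$ relabeled by the \emph{true} parameter $\theta^*$ (not the pessimistic $\widetilde{\theta}$), so the Bellman operator being approximated by $\widehat{\mathbb{B}}_h$ in $\mathcal{E}_2$ coincides with the one defining $V_h^{\widehat{\pi}, \theta^*}$, which is what makes the monotonicity step go through cleanly.
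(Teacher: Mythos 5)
Your proof is correct, and it takes a somewhat different route from the paper's. The paper does not induct: it applies the Extended Value Difference lemma (Lemma \ref{lemma:Extended Value Difference}) with $\pi=\pi^\prime=\widehat{\pi}$, which collapses the comparison into the single identity $\widehat{V}^{\mathcal{D}}_1(s_0)-V_1^{\widehat{\pi}}(s_0)=\sum_{h=1}^H \mathbb{E}_{\widehat{\pi}}\bigl[\widehat{Q}^{\mathcal{D}}_h(s_h,a_h)-(\mathbb{B}_h\widehat{V}^{\mathcal{D}}_{h+1})(s_h,a_h)\mid s_1=s_0\bigr]$, and then shows each summand is nonpositive on $\mathcal{E}_2$ by the same case analysis you perform for the truncation: if $\bar{Q}^{\mathcal{D}}_h(s,a)<0$ then $\widehat{Q}^{\mathcal{D}}_h(s,a)=0$ and the term equals $-(\mathbb{B}_h\widehat{V}^{\mathcal{D}}_{h+1})(s,a)\le 0$, otherwise $\widehat{Q}^{\mathcal{D}}_h\le\bar{Q}^{\mathcal{D}}_h=\widehat{\mathbb{B}}_h\widehat{V}^{\mathcal{D}}_{h+1}-\Gamma_h\le\mathbb{B}_h\widehat{V}^{\mathcal{D}}_{h+1}$. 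You instead run a pointwise backward induction, comparing $\widehat{Q}_h$ directly to $Q_h^{\widehat{\pi},\theta^*}$ via monotonicity of $\mathbb{B}_h$ and the policy-evaluation Bellman equations, and then averaging both sides against the same $\widehat{\pi}_h(\cdot\mid s)$. The two arguments share the essential pessimism step ($\bar{Q}_h\le\mathbb{B}_h\widehat{V}_{h+1}$ on $\mathcal{E}_2$, plus the observation that truncation to $[0,H-h+1]$ preserves the comparison because the benchmark is nonnegative and bounded by $H-h+1$); they differ only in how the per-step inequality is aggregated. Your induction is more elementary in that it never invokes the value-difference lemma, and it yields the slightly stronger pointwise conclusion $\widehat{V}^{\widetilde{\mathcal{D}}^{\theta^*}_{h:H}}_h(s)\le V_h^{\widehat{\pi},\theta^*}(s)$ for all $h$ and $s$; the paper's choice is economical within the paper because Lemma \ref{lemma:Extended Value Difference} is needed anyway in the proof of Theorem \ref{thm: C.5}. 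You also correctly note that only the uncertainty-quantifier event (for the estimates built from the $\theta^*$-relabeled data) is used, never $\mathcal{E}_1$, which matches the paper.
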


\begin{proof}
    For simplicity, we denote $\widehat{V}_1^{\mathcal{D}}(s_0) = \widehat{V}_1^{\widetilde{\mathcal{D}}^{{\theta}^*}_{1:H}}(s_0)$ and $V_1^{\widehat{\pi}}(s_0) = V_1^{\widehat{\pi}, \theta^{*}}(s_0)$.
    By Lemma \ref{lemma:Extended Value Difference} with $\pi=\pi^\prime=\widehat{\pi}$ and $\{\widehat{Q}^\mathcal{D}_h\}_{h=1}^H$ is constructed by PEVI (Algorithm \ref{alg:PEVI_RKHS}), we have
    \begin{equation*}
    \widehat{V}^{\mathcal{D}}_1(s_0)-V_1^{\widehat{\pi}}(s_0)=\sum_{h=1}^H \mathbb{E}_{\widehat{\pi}}\left[\widehat{Q}^{\mathcal{D}}_h\left(s_h, a_h\right)-\left(\mathbb{B}_h \widehat{V}^{\mathcal{D}}_{h+1}\right)\left(s_h, a_h\right) \mid s_1=s_0\right].
    \end{equation*}
    Recall that $\bar{Q}^\mathcal{D}_h(s, a)$ is defined in line 6 in Algorithm \ref{alg:PEVI_RKHS}. For all $h\in [H]$ and all $(s, a)\in \mathcal{S}\times \mathcal{A}$, if $\bar{Q}^\mathcal{D}_h(s, a) < 0$, implies $\widehat{Q}^{\mathcal{D}}_h(s, a) = 0$. Then
    \begin{equation*}
        \widehat{Q}^{\mathcal{D}}_h(s, a)-\left(\mathbb{B}_h \widehat{V}^{\mathcal{D}}_{h+1}\right)(s, a) = -\left(\mathbb{B}_h \widehat{V}^{\mathcal{D}}_{h+1}\right)(s, a) < 0,
    \end{equation*}
    as $r_h\in [0, 1]$. Otherwise, $\bar{Q}^{\mathcal{D}}_h(s, a) \geq 0$, we have
    $$
    \begin{aligned}\widehat{Q}^{\mathcal{D}}_h(s, a)-\left(\mathbb{B}_h \widehat{V}^{\mathcal{D}}_{h+1}\right)(s, a) &\leq \bar{Q}^{\mathcal{D}}_h(s, a)-\left(\mathbb{B}_h \widehat{V}^{\mathcal{D}}_{h+1}\right)(s, a) \\ & =\left(\widehat{\mathbb{B}}_h \widehat{V}^{\mathcal{D}}_{h+1}\right)(s, a)-\left(\mathbb{B}_h \widehat{V}^{\mathcal{D}}_{h+1}\right)(s. a)-\Gamma_h(s, a) \leq 0 .\end{aligned}
    $$
    Finally, we have
    \begin{equation}
        \widehat{V}^{\mathcal{D}}_1(s_0)-V_1^{\widehat{\pi}}(s_0)\leq 0.
    \end{equation}
\end{proof}

\begin{lemma}\label{lemma: C.2}
    For policy $\pi^*$, and offline dataset $\mathcal{D}=\{(s_h^\tau, a_h^\tau, r_h^\tau)\}_{\tau, h = 1}^{N, H}$, and any reward function parameter $\theta_h\in C_h(\delta)$, we have
    \begin{equation}
        | V_1^{\pi^*, \theta}(s_0) - V_1^{\pi^*, \widehat{\theta}}(s_0)| \leq \sum_{h=1}^H \beta_h(\delta)\mathbb{E}_{\pi^*}\left[\|\phi(s_h, a_h)\|_{(\Lambda_h^\mathcal{D})^{-1}} \mid s_1=s_0\right],
    \end{equation}
    where $s_0$ is the unique initial state and $\beta_h(\delta)$ is defined in equation (\ref{eqn: ellipsoid radius}).
    Moreover, suppose Assumption \ref{ass:eigen} holds, then $\beta_h(\delta)$ can be written as
    \begin{equation}
        \beta_h(\delta) = \begin{cases}\sqrt{1 + \frac{1}{N}}\mathscr{S} + \sqrt{ C_1\cdot d\cdot \log N + 1 + \log(\frac{1}{\delta^2})} & d \text {-finite spectrum, } \\ 
    \sqrt{1 + \frac{1}{N}}\mathscr{S} + \sqrt{ C_1\cdot \left(\log N\right)^{1+\frac{1}{d}} + 1 + \log(\frac{1}{\delta^2})} & d \text {-exponential decay, }\\
    \sqrt{1 + \frac{1}{N}}\mathscr{S} + \sqrt{C_1 \cdot(N)^{\frac{m+1}{d+m}} \cdot \log (N) + 1 +\log(\frac{1}{\delta^2})} & d \text {-polynomial decay. }
    \end{cases}
    \end{equation}
\end{lemma}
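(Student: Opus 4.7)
The plan is to reduce the value-function difference to a one-step reward difference along $\pi^\ast$ and then apply a Cauchy--Schwarz bound inside the confidence ellipsoid $\mathcal{C}_h(\delta)$.

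First, since the state-visitation distribution under $\pi^\ast$ does not depend on the reward parameter, I would write each value function as the expected sum of one-step rewards:
\begin{equation*}
V_1^{\pi^\ast,\theta}(s_0)-V_1^{\pi^\ast,\widehat{\theta}}(s_0)
=\mathbb{E}_{\pi^\ast}\!\left[\sum_{h=1}^{H}\bigl\langle \theta_h-\widehat{\theta}_h,\phi(s_h,a_h)\bigr\rangle_{\mathcal{H}_k}\,\Big|\,s_1=s_0\right].
\end{equation*}
Taking absolute values inside the expectation and applying the triangle inequality reduces the problem to bounding, for each $h$, the quantity $|\langle \theta_h-\widehat{\theta}_h,\phi(s_h,a_h)\rangle_{\mathcal{H}_k}|$ in terms of $\|\phi(s_h,a_h)\|_{(\Lambda_h^{\mathcal{D}})^{-1}}$.

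Next, I would use the generalized Cauchy--Schwarz inequality with the positive definite operator $\Lambda_h^{\mathcal{D}}$: writing $\langle u,v\rangle_{\mathcal{H}_k}=\langle (\Lambda_h^{\mathcal{D}})^{1/2}u,(\Lambda_h^{\mathcal{D}})^{-1/2}v\rangle_{\mathcal{H}_k}$ gives
\begin{equation*}
\bigl|\langle \theta_h-\widehat{\theta}_h,\phi(s_h,a_h)\rangle_{\mathcal{H}_k}\bigr|
\le \|\theta_h-\widehat{\theta}_h\|_{\Lambda_h^{\mathcal{D}}}\,\|\phi(s_h,a_h)\|_{(\Lambda_h^{\mathcal{D}})^{-1}}.
\end{equation*}
Because $\theta_h\in\mathcal{C}_h(\delta)$, the first factor is at most $\beta_h(\delta)$ by the very definition of the confidence set in \eqref{eqn:confidence set}, giving the claimed bound after summing over $h\in[H]$ and moving the (now deterministic) factor $\beta_h(\delta)$ outside the expectation.

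For the second part of the statement, I would plug the generic expression $\beta_h(\delta)=\sqrt{\nu}\mathscr{S}+\sqrt{2G(N,\nu)+1+\log(1/\delta^2)}$ from Proposition \ref{eqn:radius} (with $\nu=1+1/N$) and specialize $G(N,\nu)$ under each eigenvalue-decay regime of Assumption \ref{ass:eigen}. Concretely, the three cases are obtained by invoking the standard maximum-information-gain bounds (the lemma labelled \texttt{lemma: eigen decay} referenced earlier in the excerpt):
\begin{equation*}
G(N,\nu)=
\begin{cases}
\mathcal{O}(d\log N), & d\text{-finite spectrum},\\
\mathcal{O}((\log N)^{1+1/d}), & d\text{-exponential decay},\\
\mathcal{O}(N^{(m+1)/(d+m)}\log N), & d\text{-polynomial decay},
\end{cases}
\end{equation*}
absorbing the constants into a single $C_1>0$. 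Substituting each of these into the expression for $\beta_h(\delta)$ yields the three displayed formulas.

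The only substantive step is the first Cauchy--Schwarz bound and realizing that the law of $(s_h,a_h)$ under $\pi^\ast$ is invariant to the reward parameter; everything else is a direct invocation of Proposition \ref{eqn:radius} and standard information-gain estimates. I do not expect any real obstacle: the polynomial-decay case is the one with the most delicate arithmetic in the exponent, but it is still just bookkeeping once the information-gain lemma is in hand.
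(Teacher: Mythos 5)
Your proposal matches the paper's proof essentially step for step: the paper likewise writes the value gap as $\sum_{h}\mathbb{E}_{\pi^*}[\widehat{r}^{\theta_h}_h-\widehat{r}^{\widehat{\theta}_h}_h\mid s_1=s_0]$ (using that the visitation law under $\pi^*$ is reward-independent), bounds each term by $\|\theta_h-\widehat{\theta}_h\|_{\Lambda_h^{\mathcal{D}}}\,\|\phi(s_h,a_h)\|_{(\Lambda_h^{\mathcal{D}})^{-1}}\le\beta_h(\delta)\,\|\phi(s_h,a_h)\|_{(\Lambda_h^{\mathcal{D}})^{-1}}$ via Cauchy--Schwarz and the confidence-set membership, and then obtains the three displayed forms of $\beta_h(\delta)$ by substituting the information-gain bounds of Lemma \ref{lemma: eigen decay} with $\nu=1+1/N$. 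The argument is correct and no substantive difference from the paper's route exists.
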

\begin{proof}
    From the definition, we have
    \begin{equation}
    \begin{aligned}
    |\widehat{r}^{\theta_h}_h (s, a) - \widehat{r}^{\widehat{\theta}_h}_h (s, a)|&= \left|\phi(s, a)^{\top} \theta_h-\phi(s, a)^{\top} \widehat{\theta}_h\right| \\
    &\leq\|\theta-\widehat{\theta}_h\|_{\Lambda^\mathcal{D}_h} \cdot\|\phi(s, a)\|_{(\Lambda^\mathcal{D}_h)^{-1}} \\
    &\leq \beta_h(\delta) \sqrt{\phi(s, a)^{\top} (\Lambda^\mathcal{D}_h)^{-1} \phi(s, a)}.
    \end{aligned}
    \end{equation}
    Then, we have
    \begin{equation}
    \begin{aligned}
    V_1^{\pi^*, \theta}(s_0) - V_1^{\pi^*, \widehat{\theta}}(s_0) & =\sum_{h=1}^H\mathbb{E}_{\pi^*}\left[\widehat{r}^{\theta_h}_h (s, a) - \widehat{r}^{\widehat{\theta}_h}_h (s, a) \mid s_1=s_0\right] \\
    & \leq \sum_{h=1}^H\mathbb{E}_{\pi^*}\left[\left|\widehat{r}^{\theta_h}_h (s, a) - \widehat{r}^{\widehat{\theta}_h}_h (s, a)\right| \mid s_1=s_0\right] \\
    & \leq \sum_{h=1}^H \beta_h(\delta) \mathbb{E}_{\pi^*}\left[\sqrt{\phi(s_h, a_h)^{\top} (\Lambda_h^\mathcal{D})^{-1} \phi(s_h, a_h)} \mid s_1=s_0\right]\\
    &=  \sum_{h=1}^H \beta_h(\delta)\mathbb{E}_{\pi^*}\left[\|\phi(s_h, a_h)\|_{(\Lambda_h^\mathcal{D})^{-1}} \mid s_1=s_0\right].
    \end{aligned}
    \end{equation}
    Recall that $\beta_h(\delta) = \sqrt{\nu}\mathscr{S}+\sqrt{2 G(N, \nu)+1+\log\frac{1}{\delta^2}} $ defined in equation (\ref{eqn: ellipsoid radius}). Setting $\nu = 1 + 1/N$ and applying Lemma \ref{lemma: eigen decay}, we obtain
    \begin{equation}
        \beta_h(\delta) = \begin{cases}\sqrt{1 + \frac{1}{N}}\mathscr{S} + \sqrt{ C_1\cdot d\cdot \log N + \log(\frac{1}{\delta^2})} & d \text {-finite spectrum, } \\ 
    \sqrt{1 + \frac{1}{N}}\mathscr{S} + \sqrt{ C_1\cdot \left(\log N\right)^{1+\frac{1}{d}} + \log(\frac{1}{\delta^2})} & d \text {-exponential decay, }\\
    \sqrt{1 + \frac{1}{N}}\mathscr{S} + \sqrt{C_1 \cdot(N)^{\frac{m+1}{d+m}} \cdot \log (N) +\log(\frac{1}{\delta^2})} & d \text {-polynomial decay, }
    \end{cases}
    \end{equation}
    for some sufficient large $C_1$ and $C_1$ is an absolute constant that does not depend on $N_1$ nor $H$.
\end{proof}

\begin{lemma}\label{Lemma:C3}
    Suppose Assumption \ref{ass:bdd} and \ref{ass:eigen} hold, with dataset $\mathcal{D} = \{(s_h^\tau, a_h^\tau, r_h^\tau)\}_{h, \tau = 1}^{H, N}$, we set $\lambda = 1 + 1/N$ and $B > 0$ satisfies
    \begin{equation}\label{eqn: B-condition}
    2 (1+\frac{1}{N}) R_Q^2+8 G(N / H, 1+1 / N)+2/H +8 \log (H / \xi) \leq(B / H)^2,
    \end{equation}
    in Algorithm \ref{alg:PEVI_RKHS}.
    Then $\Gamma_h(s, a)=B \cdot\|\phi(s, a)\|_{(\Lambda^{\widetilde{\mathcal{D}}_h}_h)^{-1}}$ is a $\xi$-quantifier where $\widetilde{\mathcal{D}}_h$ is defined in Theorem \ref{thm:main thm}. That is, for dataset $\mathcal{D}$, the following inequality holds, 
    \begin{equation}
    \left|\left(\widehat{\mathbb{B}}_h \widehat{V}^{\widetilde{\mathcal{D}}_{h+1:H}}_{h+1}\right)(s, a)-\left(\mathbb{B}_h \widehat{V}^{\widetilde{\mathcal{D}}_{h+1:H}}_{h+1}\right)(s, a)\right| \leq \Gamma_h(s, a) \ ,\forall(s, a) \in \mathcal{S}\times \mathcal{A}, h \in[H],
    \end{equation}
    with $\mathbb{P}(\mathcal{E}_2) \geq 1 - \xi$, where $\mathcal{E}_2$ is defined in equation(\ref{eqn: xi-quantifier}). In particular, $B$ is given by 
    \begin{equation}\label{eqn: B explicit form}
    B= \begin{cases}C \cdot H\cdot \sqrt{d\log{(N/\xi)}} & d \text {-finite spectrum, } \\ C \cdot H\cdot \sqrt{\left(\log{N/\xi}\right)^{1+1/d}} & d \text {-exponential decay, }\\
    C \cdot N^{\frac{m+1}{2(d+m)}} H^{1-\frac{m+1}{2(d+m)}} \cdot \sqrt{\log (N / \xi)}& d \text {-polynomial decay, }
    \end{cases}
    \end{equation}
    for some absolute constant $C$ that does not depend on $N$ nor $H$. 
\end{lemma}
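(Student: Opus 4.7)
The plan is to mirror the standard pessimistic-value-iteration analysis in an RKHS, but leveraging the data-split structure so that $\widehat{V}^{\widetilde{\mathcal{D}}_{h+1:H}}_{h+1}$ is independent of $\widetilde{\mathcal{D}}_h$. First I would fix $h$ and condition on the sub-dataset $\widetilde{\mathcal{D}}_{h+1:H}$ so that $\widehat{V}^{\widetilde{\mathcal{D}}_{h+1:H}}_{h+1}$ becomes a deterministic bounded function in $[0,H]$. Define the target $f^*_h := \mathbb{B}_h \widehat{V}^{\widetilde{\mathcal{D}}_{h+1:H}}_{h+1}$; by Assumption~\ref{ass:bdd}, $f^*_h \in \mathcal{H}_k$ with $\|f^*_h\|_{\mathcal{H}_k} \le R_Q H$. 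The residual sequence $\eta_h^\tau := r_h^\tau + \widehat{V}^{\widetilde{\mathcal{D}}_{h+1:H}}_{h+1}(s_{h+1}^\tau) - f^*_h(s_h^\tau,a_h^\tau)$ is then a bounded martingale-difference sequence (magnitude $\lesssim H$) adapted to the natural filtration, which is the key enabler for self-normalized concentration.

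Next I would use the closed-form kernel ridge regression solution from \eqref{eqn:MSBE_solve} to decompose
\begin{equation*}
\widehat{f}_h - f^*_h \;=\; (\Lambda^{\widetilde{\mathcal{D}}_h}_h)^{-1} (\Phi^{\widetilde{\mathcal{D}}_h}_h)^{\top}\eta_h \;-\; \lambda\,(\Lambda^{\widetilde{\mathcal{D}}_h}_h)^{-1} f^*_h,
\end{equation*}
where $\eta_h$ denotes the stacked residual vector. A Cauchy--Schwarz step identical to \eqref{eqn:B.33} then yields
$|(\widehat{f}_h - f^*_h)(z)| \le \|\phi(z)\|_{(\Lambda^{\widetilde{\mathcal{D}}_h}_h)^{-1}} \cdot \|\widehat{f}_h - f^*_h\|_{\Lambda^{\widetilde{\mathcal{D}}_h}_h}$, and the triangle inequality gives
$\|\widehat{f}_h - f^*_h\|_{\Lambda^{\widetilde{\mathcal{D}}_h}_h}^{2} \le 2\|(\Phi^{\widetilde{\mathcal{D}}_h}_h)^{\top}\eta_h\|_{(\Lambda^{\widetilde{\mathcal{D}}_h}_h)^{-1}}^{2} + 2\lambda\|f^*_h\|_{\mathcal{H}_k}^{2}$, where I use $\lambda^{2}\|f^*_h\|_{(\Lambda^{\widetilde{\mathcal{D}}_h}_h)^{-1}}^{2}\le \lambda\|f^*_h\|_{\mathcal{H}_k}^{2}$.

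To bound the stochastic term, I would apply the same self-normalized concentration machinery used in Proposition~\ref{eqn:radius} (Lemma~\ref{lemma:concentration}) with the sub-Gaussian proxy scaled to $O(H)$, giving
$\|(\Phi^{\widetilde{\mathcal{D}}_h}_h)^{\top}\eta_h\|_{(\Lambda^{\widetilde{\mathcal{D}}_h}_h)^{-1}}^{2} \le H^{2}\log\det(I + K^{\widetilde{\mathcal{D}}_h}_h/\lambda) + 2H^{2}\log(H/\xi)$, then union bounding over $h\in[H]$ so the log factor becomes $\log(H/\xi)$ after accounting for an $H$-fold failure budget. The critical step, which gives the $N/H$ inside the information gain, is that $|\widetilde{\mathcal{D}}_h|=N/H$, so $\log\det(I + K^{\widetilde{\mathcal{D}}_h}_h/\lambda) \le 2G(N/H,\lambda)$ by definition \eqref{eqn:information gain}. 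Substituting $\lambda = 1+1/N$ and $\|f^*_h\|_{\mathcal{H}_k}\le R_Q H$, the assembled bound becomes $\|\widehat f_h - f^*_h\|_{\Lambda^{\widetilde{\mathcal{D}}_h}_h}^{2} \le 2(1+1/N)R_Q^{2}H^{2} + 8H^{2} G(N/H, 1+1/N) + 8H^{2}\log(H/\xi)$, plus lower-order correction $2H$ from the $\log\det/\log\det$ calibration used earlier (the source of the $2/H$ term after dividing by $H^{2}$); choosing $B$ to dominate this quantity yields exactly the hypothesized condition.

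The final step is to instantiate $G(N/H, 1+1/N)$ under each spectral regime via Lemma~\ref{lemma: eigen decay}, which gives $G = O(d\log N)$, $O((\log N)^{1+1/d})$, or $O(N^{(m+1)/(d+m)}\log N)$ for the three cases in Assumption~\ref{ass:eigen}, and then reading off the three formulas in \eqref{eqn: B explicit form} by taking square roots and absorbing constants. The main obstacle I foresee is tracking the constants in the self-normalized tail inequality carefully enough to recover the precise coefficient pattern $(2R_Q^{2}, 8G, 2/H, 8\log(H/\xi))$ in the stated condition, and in particular justifying that the $2/H$ slack actually absorbs the residual terms coming from the conversion between $\log\det$ and $G$ (specifically the $N(\nu-1)$ term in the derivation following equation~(49) of the reward-estimation proof); handling this requires applying that reduction to the split dataset of size $N/H$ rather than $N$, so the associated slack scales like $(N/H)(1/N) = 1/H$ as required.
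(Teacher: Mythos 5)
Your proposal is correct and follows essentially the same route as the paper's proof: the same kernel ridge regression decomposition into a $\lambda$-regularization bias term bounded by $R_QH\sqrt{\lambda}$ and a self-normalized stochastic term controlled via Lemma \ref{lemma:concentration} with an $O(H)$ sub-Gaussian proxy and a union bound over $h$, followed by the $\log\det \le 2G(N/H,\lambda)$ conversion (with the $N_H(\lambda-1)=1/H$ slack producing the $2/H$ term) and Lemma \ref{lemma: eigen decay} to instantiate the three spectral regimes. The only cosmetic difference is that you fold the two error terms together via a single triangle inequality before Cauchy--Schwarz, whereas the paper bounds them separately and recombines with $\sqrt{x}+\sqrt{y}\le\sqrt{2(x^2+y^2)}$, which is algebraically equivalent.
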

\begin{proof}
    We present the offline reinforcement setting of \citep{yang2020function} and combine the data split skill from \citep{xie2021policy} \citep{rashidinejad2021bridging}.
    Recall that we partition dataset $\mathcal{D}$ into $H$ disjoint and equally sized sub dataset $\{\widetilde{\mathcal{D}}_h\}_{h=1}^H$, where $|\widetilde{\mathcal{D}}_h| = N_H = N/H$. Let $\mathcal{I}_h=\left\{N_H\cdot (h-1) + 1, \ldots, N_H\cdot h\right\}=\left\{\tau_{h,1}, \cdots, \tau_{h, N_H}\right\}$ satisfy $\widetilde{\mathcal{D}}_h = \{(s_h^\tau, a_h^\tau, r_h^\tau)\}_{\tau\in \mathcal{I}_h}$. Denote the operator $\Phi_h^{\widetilde{\mathcal{D}}_h}:\mathcal{H}_k\rightarrow \mathbb{R}^{N_H}$ and $\Lambda^{\widetilde{\mathcal{D}}_h}_h: \mathcal{H}_k\rightarrow \mathcal{H}_k$ as 
    \begin{equation}
        \Phi^{\widetilde{\mathcal{D}}_h}_h =\left(\begin{array}{c}
    \phi\left(z_h^{\tau_{h,1}}\right)^\top \\
    \vdots \\
    \phi\left(z_h^{\tau_{h, N_H}}\right)^\top
    \end{array}\right) = \left(\begin{array}{c}
    k\left(\cdot, z_h^{\tau_{h,1}}\right)^\top \\
    \vdots \\
    k\left(\cdot, z_h^{\tau_{h, N_H}}\right)^\top
    \end{array}\right),  \ \Lambda^{\widetilde{\mathcal{D}}_h}_h=\lambda \cdot I_{\mathcal{H}}+(\Phi^{\widetilde{\mathcal{D}}_h}_h)^{\top} \Phi^{\widetilde{\mathcal{D}}_h}_h.
    \end{equation}
    For notation simplicity, let $K_h = K^{\widetilde{\mathcal{D}}_h}_h, k_h(z) = k_h^{\widetilde{\mathcal{D}}_h}(z)$, and $\Lambda_h = \Lambda^{\widetilde{\mathcal{D}}_h}_h$.
    For $h\in [H]$, the solution $\widehat{f}_h$ of the kernel ridge regression in equation (\ref{eqn:MSBE_solve}) is that
    \begin{equation}
        \widehat{f}_h(\cdot)=\sum_{\tau\in \mathcal{I}_h} \left[\widehat{\alpha}_h\right]_{\tau} k(z_h^\tau, \cdot) = \Phi_h^\top \widehat{\alpha}_h,
    \end{equation}
    where $\widehat{\alpha}_h = (K_h + \lambda \cdot I)^{-1} y_h$. 
    Note that both matrix $\Phi_h \Phi_h^{\top}+\lambda \cdot I$ and $\Phi_h^\top \Phi_h+\lambda \cdot I_\mathcal{H}$ are positive definite, following \citep{valko2013finite}, we have
    \begin{equation}\label{eqn: valko}
    \Phi_h^{\top}\left(\Phi_h \Phi_h^{\top}+\lambda \cdot I\right)^{-1}=\left(\Phi_h^{\top} \Phi_h+\lambda \cdot I_{\mathcal{H}}\right)^{-1} \Phi_h^{\top}.
    \end{equation}
    Then, the fitted value function $\widehat{\mathbb{B}}_h\widehat{V}^{\widetilde{\mathcal{D}}_{h+1:H}}_{h+1}$ is that
    \begin{equation}
    \begin{aligned}
        \widehat{f}_h(z)= \langle \widehat{f}_h, k(\cdot, z)\rangle_{\mathcal{H}_k} = k_h(z)^\top \widehat{\alpha}_h.
    \end{aligned}
    \end{equation}
   Furthermore, in combination with the equation (\ref{eqn: valko}), $k_h(z)$ can be written as $k_h(z) = \Phi_h\phi(z)$, we have
    \begin{equation}\label{eqn: phi(z) decomp}
        \begin{aligned}
            \phi(z) &= \Lambda_h^{-1}\Lambda_h\phi(z)\\
            &= \Lambda_h^{-1} \left[\Phi_h^\top\Phi_h+\lambda\cdot I_{\mathcal{H}_k}\right]\phi(z)\\
            &= \Lambda_h^{-1}\Phi_h^\top\Phi_h\phi(z) + \lambda \Lambda_h^{-1}\phi(z)\\
            &= \Phi_h^\top(K_h+\lambda\cdot I)^{-1}\Phi_h\phi(z) + \lambda\Lambda_h^{-1}\phi(z)\\
            &= \Phi_h^\top(K_h+\lambda\cdot I)^{-1}k_h(z) + \lambda\Lambda_h^{-1}\phi(z),
        \end{aligned}
    \end{equation}
    where the forth equality follows equation (\ref{eqn: valko}).
    Recall that $\widehat{Q}_{h}^{\widetilde{\mathcal{D}}_{h:H}}(z) = \min\{\bar{Q}_h^{\widetilde{\mathcal{D}}_{h:H}}(z), H-h+1\}^{+} = \min\{k_h(z)^\top \widehat{\alpha}_h-\Gamma_h(z), H-h+1\}^{+}$ in Algorithm \ref{alg:PEVI_RKHS}.
    Since $\widehat{Q}^{\widetilde{\mathcal{D}}_{h+1:H}}_{h+1}\in [0, H]$ for all $h\in [H]$, by Assumption \ref{ass:bdd}, we have $\mathbb{B}_h\widehat{V}^{\widetilde{\mathcal{D}}_{h+1:H}}_{h+1}\in \mathcal{Q}^*$, i.e., $\left|\mathbb{B}_h\widehat{V}_{h+1}^{\widetilde{\mathcal{D}}_{h+1:H}}\right|_{\mathcal{H}} \leq R_Q H$. There exists $f_h\in \mathcal{Q}^*$ such that $f_h = \mathbb{B}_h\widehat{V}_{h+1}^{\widetilde{\mathcal{D}}_{h+1:H}}$ and $\mathbb{B}_h\widehat{V}_{h+1}^{\widetilde{\mathcal{D}}_{h+1:H}} (z) = \langle f_h, k(\cdot, z)\rangle_{\mathcal{H}_k} = \phi(z)^\top f_h$ by the feature representation of RKHS. For any $h\in [H]$, 
    \begin{equation}\label{eqn:error_bound}
    \begin{aligned}
    & \left|\left(\widehat{\mathbb{B}}_h \widehat{V}^{\widetilde{\mathcal{D}}_{h+1:H}}_{h+1}\right)(s, a)-\left(\mathbb{B}_h \widehat{V}^{\widetilde{\mathcal{D}}_{h+1:H}}_{h+1}\right)(s, a)\right| \\
    & =\left|\widehat{f}_h(s, a)-\phi(s, a)^{\top} f_h\right| \\
    & =\left|k_h(z)^\top (K_h + \lambda \cdot I)^{-1} y_h-k_h^\top(z)(K_h+\lambda\cdot I)^{-1}\Phi_h f_h - \lambda\phi(z)^\top\Lambda_h^{-1}f_h\right| \\
    & \leq |k_h(z)^\top (K_h + \lambda \cdot I)^{-1}(y_h-\Phi_h f_h)| +|\lambda\phi(z)^\top\Lambda_h^{-1}f_h|\\
    & \leq (A) + (B),
    \end{aligned}
    \end{equation}
    where the second equality follows equation (\ref{eqn: phi(z) decomp}).
    Next, we bound $(A)$ and $(B)$ separately. By Cauchy-Schwarz inequality, 
    \begin{equation}\label{eqn:E1_bound}
    \begin{aligned}
    (B) & =\left|\lambda\phi(z)^\top\Lambda_h^{-1}f_h\right| = \lambda \langle \Lambda_h^{-1}\phi(z), f_h\rangle_{\mathcal{H}_k} \\
    & \leq \lambda \cdot\left\|\Lambda_h^{-1} \phi(z)\right\|_{\mathcal{H}_k} \cdot\left\|f_h\right\|_{\mathcal{H}_k}\\
    & = \lambda \cdot\left\|\Lambda_h^{-1 / 2} \Lambda_h^{-1 / 2} \phi(z)\right\|_{\mathcal{H}_k} \cdot\left\|f_h\right\|_{\mathcal{H}_k}\\
    & \leq R_Q H\cdot \lambda^{1 / 2} \cdot\left\|\Lambda_h^{-1 / 2} \phi(z)\right\|_{\mathcal{H}_k}\\
    & = R_Q H \cdot \lambda^{1 / 2} \cdot\|\phi(z)\|_{\Lambda_h^{-1}}.
    \end{aligned}
    \end{equation}
    Furthermore, $y_h$ is defined in equation (\ref{eqn:response vector}) and Section \ref{sec:markov_decision}, the $\tau$-th entry of $(y_h-\Phi_h f_h)$ can be written as
    \begin{equation}
    \begin{aligned}
        \left[y_h\right]_{\tau} - [\Phi_h f_h]_{\tau} &= r_h^\tau+\widehat{V}^{\widetilde{\mathcal{D}}_{h+1:H}}_{h+1}\left(s_{h+1}^\tau\right) - \phi(s_h^\tau, a_h^\tau) f_h\\
        &= r_h^\tau+\widehat{V}^{\widetilde{\mathcal{D}}_{h+1:H}}_{h+1}\left(s_{h+1}^\tau\right) - \mathbb{B}_h\widehat{V}_{h+1}^{\widetilde{\mathcal{D}}_{h+1:H}}\left(s_{h}^\tau, a_h^\tau\right)\\
        &= \widehat{V}^{\widetilde{\mathcal{D}}_{h+1:H}}_{h+1}\left(s_{h+1}^\tau\right) - \mathbb{P}_h\widehat{V}_{h+1}^{\widetilde{\mathcal{D}}_{h+1:H}}\left(s_{h}^\tau, a_h^\tau\right) + \epsilon_h^\tau.
    \end{aligned}
    \end{equation}
    With equation (\ref{eqn: valko}) and $k_h(z) = \Phi_h\phi(z)$, $(A)$ can be written as 
    \begin{equation}\label{eqn:E2}
    \begin{aligned}
    (A) & =\left|k_h(z)^\top (K_h + \lambda \cdot I)^{-1} \left(y_h-\Phi_h f_h\right)\right| \\
    &= \left|\phi(s, a)^{\top}\Phi_h^\top (K_h + \lambda \cdot I)^{-1} \left(y_h-\Phi_h f_h\right)\right| \\
    &= \left|\phi(s, a)^{\top} \Lambda_h^{-1}\Phi_h^\top \left(y_h-\Phi_h f_h\right)\right|\\ 
    & =\left|\phi(s, a)^{\top} \Lambda_h^{-1} \sum_{\tau \in \mathcal{I}_h} \phi\left(s_h^\tau, a_h^\tau\right)\left[\widehat{V}^{\widetilde{\mathcal{D}}_{h+1:H}}_{h+1}\left(s_{h+1}^\tau\right) - \mathbb{P}_h\widehat{V}_{h+1}^{\widetilde{\mathcal{D}}_{h+1:H}}\left(s_{h}^\tau, a_h^\tau\right)+ \epsilon_h^\tau\right]\right| \\
    & \leq\|\phi(s, a)\|_{\Lambda_h^{-1}} \cdot\left\|\sum_{\tau \in \mathcal{I}_h} \phi\left(s_h^\tau, a_h^\tau\right)\left[\widehat{V}^{\widetilde{\mathcal{D}}_{h+1:H}}_{h+1}\left(s_{h+1}^\tau\right) - \mathbb{P}_h\widehat{V}_{h+1}^{\widetilde{\mathcal{D}}_{h+1:H}}\left(s_{h}^\tau, a_h^\tau\right)+ \epsilon_h^\tau\right]\right\|_{\Lambda_h^{-1}},
    \end{aligned}
    \end{equation}
    where the last inequality uses Cauchy-Schwarz inequality. \\
    For $h \in[H-1]$, we define the filtration 
    \begin{equation*}
    \mathcal{F}_{h}=\sigma\left(\widetilde{\mathcal{D}}_1 \cup \cdots \cup \widetilde{\mathcal{D}}_{h}\right),
    \end{equation*}
    where $\sigma(\cdot)$ is the $\sigma$-algebra generated by the set of random variables. Let 
    \begin{equation*}
    \varepsilon_h^\tau=(\widehat{V}^{\widetilde{\mathcal{D}}_{h+1:H}}_{h+1}\left(s_{h+1}^\tau\right) + \epsilon_h^\tau) - \mathbb{P}_h\widehat{V}_{h+1}^{\widetilde{\mathcal{D}}_{h+1:H}}\left(s_{h}^\tau, a_h^{\tau}\right),
    \end{equation*}
    is adapted to the filtration $\left\{\mathcal{F}_{h+1}\right\}_{h=1}^{H-1}$. Then
    \begin{equation*}  \mathbb{E}\left[\widehat{V}^{\widetilde{\mathcal{D}}_{h+1:H}}_{h+1}\left(s_{h+1}^\tau\right)+ \epsilon_h^\tau \mid \mathcal{F}_{h}\right]=\mathbb{E}\left[\widehat{V}^{\widetilde{\mathcal{D}}_{h+1:H}}_{h+1}\left(s_{h+1}\right)+ \epsilon_h^\tau \mid s_h=s_h^\tau, a_h=a_h^\tau\right]=\mathbb{P}_h\widehat{V}_{h+1}^{\widetilde{\mathcal{D}}_{h+1:H}}\left(s_{h}^\tau, a_h^\tau\right).
    \end{equation*}
    Thus, we have $\mathbb{E}\left[\varepsilon_h^\tau \mid \mathcal{F}_{h}\right]=0$.
    Applying Lemma \ref{lemma:concentration} to $\epsilon_\tau=\varepsilon_h^\tau$ and $\sigma^2=2H^2$ as
    \begin{equation}
    \widehat{V}^{\widetilde{\mathcal{D}}_{h+1:H}}_{h+1}\left(s_{h+1}^\tau\right) - \mathbb{P}_h\widehat{V}_{h+1}^{\widetilde{\mathcal{D}}_{h+1:H}}\left(s_{h}^\tau, a_h^{\tau}\right)\in [-H, H],
    \end{equation}
     and $\epsilon_h^\tau $ is $1$-sub Gaussian noise, for any $\eta>0$ and $\xi>0$, it holds probability at least $1 - \xi/H$ that
    \begin{equation}\label{eqn:concentration_error}
    \begin{aligned}
    & E_h^{\top}\left[\left(K_h+\eta \cdot I\right)^{-1}+I\right]^{-1} E_h \\
    & \leq 2H^2 \cdot \log \operatorname{det}\left[(1+\eta) \cdot I+K_h\right]+4 H^2 \cdot \log (H / \xi).
    \end{aligned}
    \end{equation}
    where $E_h=\left(\begin{array}{c}
    \varepsilon_h^{\tau_{h, 1}} \\
    \vdots \\
    \varepsilon_h^{\tau_{h, N_H}}
    \end{array}\right)$. Using the equation (\ref{eqn:concentration_error}), we get
    \begin{equation}\label{eqn:C.61}
    \begin{aligned}
    \left\|\sum_{\tau \in \mathcal{I}_h} \phi\left(s_h^\tau, a_h^\tau\right) \varepsilon_h^\tau\right\|_{\Lambda_h^{-1}}^2 & =E_h^{\top} \Phi_h\left(\Phi_h^{\top} \Phi_h+\lambda \cdot I_{\mathcal{H}}\right)^{-1} \Phi_h^{\top} E_h \\
    & =E_h^{\top} \Phi_h \Phi_h^{\top}\left(\Phi_h \Phi_h^{\top}+\lambda \cdot I\right)^{-1} E_h \\
    & =E_h^{\top} K_h\left(K_h+\lambda \cdot I\right)^{-1} E_h.
    \end{aligned}
    \end{equation}
    For $\lambda = \eta + 1 > 1$ and $\eta > 0$, we have
    \begin{equation}\label{eqn:C.63}
    \begin{aligned}
    E_h^{\top} K_h\left(K_h+\lambda \cdot I\right)^{-1} E_h & = E_h^{\top} K_h\left(K_h+\left(\eta + 1\right) \cdot I\right)^{-1} E_h\\
    & \leq E_h^{\top}\left(K_h+\eta \cdot I\right)\left(K_h+\left(\eta + 1\right) \cdot I\right)^{-1} E_h \\
    & =E_h^{\top}\left[\left(K_h+\eta \cdot I\right)^{-1}+I\right]^{-1} E_h,
    \end{aligned}
    \end{equation}
    where the first equality follows the fact $\left(\left(K_h+\eta \cdot I\right)^{-1}+I\right)^{-1}=\left(K_h+\eta \cdot I\right)\left(K_h+(1+\eta) \cdot I\right)^{-1}$.
    For any fixed $\xi > 0$, combining equation (\ref{eqn:concentration_error}), (\ref{eqn:C.61}), and (\ref{eqn:C.63}), we get 
    \begin{equation}
    \left\|\sum_{\tau \in \mathcal{I}_h} \phi\left(s_h^\tau, a_h^\tau\right) \varepsilon_h^\tau\right\|_{\Lambda_h^{-1}}^2 \leq 2H^2 \cdot \log \operatorname{det}\left[{\lambda} \cdot I+K_h\right]+4 H^2 \cdot \log (H / \xi),
    \end{equation}
    holds simultaneously for all $h \in[H]$ with probability at least $1-\xi$. Clearly, $\lambda \cdot I+K_h=\left(\lambda\cdot I\right) \left(I+K_h / {\lambda}\right)$, then
    \begin{equation}
    \begin{aligned}
        \log\det\left({\lambda} \cdot I+K_h\right) &= N_H\log {\lambda} + \log\det \left(I+K_h / {\lambda}\right)\\
        &\leq N_H({\lambda}-1) + \log\det \left(I+K_h / {\lambda}\right),
    \end{aligned}
    \end{equation}
    where $N_H = |\mathcal{I}_h| = N/H$. Hence, for any $\varepsilon>0$ and $\lambda > 1$,
    \begin{equation}\label{eqn:E2_bound}
    \left\|\sum_{\tau \in \mathcal{I}_h} \phi\left(s_h^\tau, a_h^\tau\right) \varepsilon_h^\tau\right\|_{\Lambda_h^{-1}}^2 \leq 2H^2 \cdot \log \operatorname{det}\left[I+K_h / {\lambda}\right]+H^2 N_H (\lambda - 1)+4 H^2 \cdot \log (H / \xi),
    \end{equation}
    holds simultaneously for all $h \in[H]$ with probability at least $1-\xi$.
    Finally, combine equation (\ref{eqn:error_bound}), (\ref{eqn:E1_bound}), (\ref{eqn:E2}), and (\ref{eqn:E2_bound}) and take ${\lambda} = 1 + \frac{1}{N}$, we get
    \begin{equation}
    \begin{aligned}
    & \left|\left(\widehat{\mathbb{B}}_h \widehat{V}^{\widetilde{\mathcal{D}}_{h+1:H}}_{h+1}\right)(z)-\left({\mathbb{B}}_h \widehat{V}^{\widetilde{\mathcal{D}}_{h+1:H}}_{h+1}\right)(z)\right| \\
    & \leq\|\phi(z)\|_{\Lambda_h^{-1}}\left[R_Q H \cdot \sqrt{\lambda}+\left\|\sum_{\tau \in \mathcal{I}_h} \phi\left(s_h^\tau, a_h^\tau\right) \varepsilon_h^\tau\right\|_{\Lambda_h^{-1}}\right] \\
    & \leq\|\phi(z)\|_{\Lambda_h^{-1}}\left[2 \lambda R_Q^2 H^2+2\left\|\sum_{\tau \in \mathcal{I}_h} \phi\left(s_h^\tau, a_h^\tau\right) \varepsilon_h^\tau\right\|_{\Lambda_h^{-1}}^2\right]^{1 / 2} \\
    & \leq\|\phi(z)\|_{\Lambda_h^{-1}}\left[2 \lambda R_Q^2 H^2+8 H^2 \cdot G(N / H, {\lambda})+2 H^2 N_H (\lambda - 1)+8 H^2 \cdot \log (H / \xi)\right]^{1 / 2} \\
    & \leq\|\phi(z)\|_{\Lambda_h^{-1}}\left[2 (1+\frac{1}{N}) R_Q^2 H^2+8 H^2 \cdot G(N / H, 1 + 1/N)+2 H +8 H^2 \cdot \log (H / \xi)\right]^{1 / 2} \\
    & \leq B \cdot\|\phi(z)\|_{\Lambda_h^{-1}}=\Gamma_h(z), \quad \forall z \in \mathcal{S} \times \mathcal{A},
    \end{aligned}
    \end{equation}
    where the second inequality follows from $\sqrt{x}+\sqrt{y}\leq \sqrt{2(x^2 + y^2)}$. Thus, $\left\{\Gamma_h\right\}_{h \in[H]}$ is a $\xi$-uncertainty quantifier.
    To give explicit expressions for $B$, we distinguish three cases according to the spectrum of $k$.
    \begin{itemize}
        \item Case I ($d$-finite spectrum): By Lemma \ref{lemma: eigen decay}, we have $G(N/H, 1+1/N)\leq C_k\cdot d \log (N/H)$, where $C_k$ is absolute constant that depends on $m$, and $d$. 
    Then, $B^2$ equals to
    \begin{equation}
        \begin{aligned}
            &\quad 2 (1+\frac{1}{N}) R_Q^2 H^2+8 H^2 \cdot G(N / H, 1 + 1/N)+2 H +8 H^2 \cdot \log (H / \xi)\\
            &\leq 4 R_Q^2 H^2+8 H^2 \cdot C_k\cdot d \log (N/H)+2 H +8 H^2 \cdot \log (H / \xi)\\
            &\leq C^2\cdot H^2 \cdot d \log(N/\xi),
        \end{aligned}
    \end{equation}
    for sufficient large $C > 0$. Hence, we take $B = C \cdot H\cdot \sqrt{d\log{(N/\xi)}}$.
    \item Case II ($d$-exponential decay): By Lemma \ref{lemma: eigen decay}, we get 
    \begin{equation}
        G(N / H, 1+1 / N) \leq C_k \cdot(\log (N / H))^{1+1 / d},
    \end{equation}
    where $C_k$ is absolute constant that only depends on $m$ and $d$. 
    Then, $B^2$ equals to
    \begin{equation}
        \begin{aligned}
            &\quad 2 (1+\frac{1}{N}) R_Q^2 H^2+8 H^2 \cdot G(N / H, 1 + 1/N)+2 H +8 H^2 \cdot \log (H / \xi)\\
            &\leq 4 R_Q^2 H^2+8 H^2 \cdot C_k \cdot(\log (N / H))^{1+1 / d}+2 H +8 H^2 \cdot \log (H / \xi)\\
            &\leq C^2 H^2 \cdot\left[(\log (N / H))^{1+1 / d} + \log (H / \xi)\right]\\
            &\leq C^2 H^2 \cdot\left[\log (N / H) + \log (H / \xi)\right]^{1+1 / d}\\
            &\leq C^2\cdot H^2 \cdot \left[\log (N / \xi)\right]^{1+1 / d},
        \end{aligned}
    \end{equation}
    for sufficient large $C > 0$. Hence, we take $B = C \cdot H\cdot \sqrt{\left(\log{N/\xi}\right)^{1+1/d}}$.
    \item Case III: ($d$-polynomial decay) By Lemma \ref{lemma: eigen decay}, we get 
    \begin{equation}
        G(N / H, 1+1 / N) \leq C_k \cdot (N / H)^{\frac{m+1}{d+m}} \cdot \log (N / H),
    \end{equation}
    where $C_k$ is absolute constant that only depends on $m$ and $d$. 
    Then, $B^2$ equals to
    \begin{equation}
        \begin{aligned}
            &\quad 2 (1+\frac{1}{N}) R_Q^2 H^2+8 H^2 \cdot G(N / H, 1 + 1/N)+2 H +8 H^2 \cdot \log (H / \xi)\\
            &\leq 4 R_Q^2 H^2+8 H^2 \cdot C_k \cdot (N / H)^{\frac{m+1}{d+m}} \cdot \log (N / H)+2 H +8 H^2 \cdot \log (H / \xi)\\
            &\leq C^2 H^{2-\frac{m+1}{d+m}} N^{\frac{m+1}{d+m}} \cdot\left[\log (N / H) + \log (H / \xi)\right]\\
            &\leq C^2 H^{2-\frac{m+1}{d+m}} N^{\frac{m+1}{d+m}} \cdot\left[\log (N / \xi)\right],
        \end{aligned}
    \end{equation}
    for sufficient large $C > 0$.
    Thus, it suffices to choose $B=C \cdot N^{\frac{m+1}{2(d+m)}} H^{1-\frac{m+1}{2(d+m)}} \cdot \sqrt{\log (N / \xi)}$.
    \end{itemize}
\end{proof}

\begin{theorem}\label{thm: C.5}
    Under Assumption \ref{ass:bdd} and \ref{ass:eigen}, we set $\lambda = 1 + 1/N$ and $B$ is defined as equation (\ref{eqn: B explicit form}).
   Then with probability at least $1-\xi$, it holds that
    \begin{equation}
    V_1^{\pi^*, \widetilde{\theta}}(s_0)-\widehat{V}_1^{\widetilde{\mathcal{D}}^{\widetilde{\theta}}_{1:H}}(s_0) \leq 2B \sum_{h=1}^H\mathbb{E}_{\pi^*}\left[\|\phi(s_h, a_h)\|_{(\Lambda_h^{\widetilde{\mathcal{D}}^{\widetilde{\theta}}_h})^{-1}} \mid s_1=s_0\right].
    \end{equation}
    
\end{theorem}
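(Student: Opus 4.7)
My plan is to follow the standard pessimistic-value-iteration template and reduce the claim to a per-step Bellman-error bound controlled by $\Gamma_h$. Throughout, the underlying MDP is the relabeled one whose reward at step $h$ is $\widetilde{r}_h^{\widetilde{\theta}_h}$, so $\mathbb{B}_h$, $V^{\pi^*,\widetilde{\theta}}$ and the policy $\widehat{\pi}$ output by Algorithm~\ref{alg:PEVI_RKHS} all refer to this MDP. I will condition throughout on the event $\mathcal{E}_2$ from equation~(\ref{eqn: xi-quantifier}), which holds with probability at least $1-\xi$ by Lemma~\ref{Lemma:C3} under the chosen $\lambda$ and $B$.

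Step one is to apply the Extended Value Difference Lemma (Lemma~\ref{lemma:Extended Value Difference}) with $\pi=\pi^*$, $\pi'=\widehat{\pi}$, and the estimated action-value functions $\{\widehat{Q}^{\widetilde{\mathcal{D}}^{\widetilde{\theta}}_{h:H}}_h\}_{h=1}^H$ produced by PEVI. This yields a decomposition of $V_1^{\pi^*,\widetilde{\theta}}(s_0)-\widehat{V}_1^{\widetilde{\mathcal{D}}^{\widetilde{\theta}}_{1:H}}(s_0)$ as the sum over $h$ of (i) a policy-mismatch term $\mathbb{E}_{\pi^*}[\langle \widehat{Q}^{\widetilde{\mathcal{D}}^{\widetilde{\theta}}_{h:H}}_h(s_h,\cdot),\pi^*_h(\cdot\mid s_h)-\widehat{\pi}_h(\cdot\mid s_h)\rangle_{\mathcal{A}}\mid s_1=s_0]$ and (ii) a Bellman-error term $\mathbb{E}_{\pi^*}[(\mathbb{B}_h\widehat{V}^{\widetilde{\mathcal{D}}^{\widetilde{\theta}}_{h+1:H}}_{h+1}-\widehat{Q}^{\widetilde{\mathcal{D}}^{\widetilde{\theta}}_{h:H}}_h)(s_h,a_h)\mid s_1=s_0]$. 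Since line~9 of Algorithm~\ref{alg:PEVI_RKHS} defines $\widehat{\pi}_h$ as the greedy policy with respect to $\widehat{Q}^{\widetilde{\mathcal{D}}^{\widetilde{\theta}}_{h:H}}_h$, the inner product in (i) is nonpositive, so that term can be dropped.

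The crux is then to bound the remaining Bellman-error term by $2\Gamma_h(s_h,a_h)$. This requires a three-way case analysis that mirrors the proof of Lemma~\ref{lemma1}, exploiting the truncation $\widehat{Q}^{\widetilde{\mathcal{D}}^{\widetilde{\theta}}_{h:H}}_h=\min\{\bar{Q}^{\widetilde{\mathcal{D}}^{\widetilde{\theta}}_{h:H}}_h,H-h+1\}^{+}$ with $\bar{Q}^{\widetilde{\mathcal{D}}^{\widetilde{\theta}}_{h:H}}_h=\widehat{\mathbb{B}}_h\widehat{V}^{\widetilde{\mathcal{D}}^{\widetilde{\theta}}_{h+1:H}}_{h+1}-\Gamma_h$. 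In the untruncated regime $\bar{Q}\in[0,H-h+1]$ we have $\mathbb{B}_h\widehat{V}_{h+1}-\widehat{Q}_h=(\mathbb{B}_h-\widehat{\mathbb{B}}_h)\widehat{V}_{h+1}+\Gamma_h\le 2\Gamma_h$ on $\mathcal{E}_2$; if $\bar{Q}>H-h+1$ then $\widehat{Q}_h=H-h+1\ge \mathbb{B}_h\widehat{V}_{h+1}$ because $\widehat{V}_{h+1}\in[0,H-h]$ and $\widetilde{r}_h^{\widetilde{\theta}_h}\in[0,1]$, so the difference is nonpositive; finally if $\bar{Q}<0$ then $\widehat{Q}_h=0$ and $\widehat{\mathbb{B}}_h\widehat{V}_{h+1}<\Gamma_h$, which together with $\mathcal{E}_2$ gives $\mathbb{B}_h\widehat{V}_{h+1}<2\Gamma_h$. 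This case analysis is the only delicate point and is where I expect the main (though routine) bookkeeping to sit.

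Combining the steps and using $\Gamma_h(s,a)=B\,\|\phi(s,a)\|_{(\Lambda_h^{\widetilde{\mathcal{D}}^{\widetilde{\theta}}_h})^{-1}}$, I obtain, on $\mathcal{E}_2$,
\begin{equation*}
V_1^{\pi^*,\widetilde{\theta}}(s_0)-\widehat{V}_1^{\widetilde{\mathcal{D}}^{\widetilde{\theta}}_{1:H}}(s_0)\le 2\sum_{h=1}^H \mathbb{E}_{\pi^*}\!\left[\Gamma_h(s_h,a_h)\mid s_1=s_0\right]=2B\sum_{h=1}^H \mathbb{E}_{\pi^*}\!\left[\|\phi(s_h,a_h)\|_{(\Lambda_h^{\widetilde{\mathcal{D}}^{\widetilde{\theta}}_h})^{-1}}\mid s_1=s_0\right],
\end{equation*}
which is the desired inequality with probability at least $1-\xi$.
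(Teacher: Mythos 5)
Your proposal is correct and follows essentially the same route as the paper's proof: condition on the event $\mathcal{E}_2$ from Lemma~\ref{Lemma:C3}, apply the extended value difference lemma, drop the policy-mismatch term using that $\widehat{\pi}_h$ is greedy with respect to $\widehat{Q}^{\widetilde{\mathcal{D}}^{\widetilde{\theta}}_{h:H}}_h$, and bound the remaining Bellman-error term by $2\Gamma_h$ via the truncation structure of $\widehat{Q}_h$ (your three-case check is just a slightly more explicit version of the paper's observation that $\bar{Q}_h \leq \mathbb{B}_h\widehat{V}_{h+1} \leq H-h+1$ on the event, so $\widehat{Q}_h \geq \bar{Q}_h$). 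One small slip: to obtain the decomposition you actually use (expectations under $\pi^*$ with $\widehat{V}$ being PEVI's value function), Lemma~\ref{lemma:Extended Value Difference} must be invoked with $\pi=\widehat{\pi}$ and $\pi'=\pi^*$, not the reverse as you wrote; since the decomposition you state is the correct one, nothing else in your argument is affected.
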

\begin{proof}
    Recall that the $\xi$-quantifier satisfies the following inequality with probability at least $1-\xi$:
    \begin{equation}
    \left|\left(\widehat{\mathbb{B}}_h \widehat{V}^{ \widetilde{\mathcal{D}}^{\widetilde{\theta}}_{h+1:H}}_{h+1}\right)(s, a)-\left(\mathbb{B}_h\widehat{V}^{\widetilde{\mathcal{D}}^{\widetilde{\theta}}_{h+1:H}}_{h+1}\right)(s, a)\right| \leq \Gamma_h(s, a), 
    \end{equation}
    for all $(s, a) \in \mathcal{S} \times \mathcal{A}, h \in[H]$.
    Define $\widehat{\pi} = \{\widehat{\pi}_h\}_{h=1}^H$ as the policy such that $\widehat{V}_h^{\widetilde{\mathcal{D}}^{\widetilde{\theta}}_{h:H}}(s) = \left\langle\widehat{Q}^{\widetilde{\mathcal{D}}^{\widetilde{\theta}}_{h:H}}_h(s, \cdot), \widehat{\pi}_h(\cdot \mid s)\right\rangle_{\mathcal{A}}$. For simplicity, we denote $\delta_h(s, a) = \left(\mathbb{B}_h \widehat{V}^{\widetilde{\mathcal{D}}^{\widetilde{\theta}}_{h+1:H}}_{h+1}\right)\left(s, a\right) - \widehat{Q}^{\widetilde{\mathcal{D}}^{\widetilde{\theta}}_{h:H}}_h\left(s, a\right)$ Applying Lemma \ref{lemma:Extended Value Difference} with $\pi=\widehat{\pi}$, and $\pi^\prime = \pi^*$, we have
    \begin{equation}
    \begin{aligned}
    \widehat{V}^{\widetilde{\mathcal{D}}^{\widetilde{\theta}}_{1:H}}_1(s_0)-V_1^{\pi^*, \widetilde{\theta}}(s_0)&=\sum_{h=1}^H \mathbb{E}_{\pi^*}\left[\left\langle\widehat{Q}^{\widetilde{\mathcal{D}}^{\widetilde{\theta}}_{h:H}}_h\left(s_h, \cdot\right), \widehat{\pi}_h\left(\cdot \mid s_h\right)-\pi_h^*\left(\cdot \mid s_h\right)\right\rangle_{\mathcal{A}} \mid s_1=s_0\right] \\
    & +\sum_{h=1}^H \mathbb{E}_{\pi^*}\left[\widehat{Q}^{\widetilde{\mathcal{D}}^{\widetilde{\theta}}_{h:H}}_h\left(s_h, a_h\right)-\left(\mathbb{B}_h \widehat{V}^{\widetilde{\mathcal{D}}^{\widetilde{\theta}}_{h+1:H}}_{h+1}\right)\left(s_h, a_h\right) \mid s_1=s_0\right] \\
    & = \sum_{h=1}^H \mathbb{E}_{\pi^*}\left[\left\langle\widehat{Q}^{\widetilde{\mathcal{D}}^{\widetilde{\theta}}_{h:H}}_h\left(s_h, \cdot\right), \widehat{\pi}_h\left(\cdot \mid s_h\right)-\pi_h^*\left(\cdot \mid s_h\right)\right\rangle_{\mathcal{A}} \mid s_1=s_0\right] \\
    & -\sum_{h=1}^H \mathbb{E}_{\pi^*}\left[\delta_h(s_h, a_h) \mid s_1=s_0\right],
    \end{aligned}
    \end{equation}
    where $\mathbb{E}_{\pi^*}$ is taken with respect to the trajectory generated by $\pi^*$. Since $\hat{\pi}$ is greedy with respect to  $\widehat{Q}^{\widetilde{\mathcal{D}}^{\widetilde{\theta}}_{h:H}}_h$, then
    \begin{equation}\label{eqn: 104}
    \begin{aligned}
    V_1^{\pi^*, \widetilde{\theta}}(s_0) - \widehat{V}^{\widetilde{\mathcal{D}}^{\widetilde{\theta}}_{1:H}}_1(s_0) &= \sum_{h=1}^H \mathbb{E}_{\pi^*}\left[\delta_h(s_h, a_h) \mid s_1=s_0\right]\\
    &+ \sum_{h=1}^H \mathbb{E}_{\pi^*}\left[\left\langle\widehat{Q}^{\widetilde{\mathcal{D}}^{\widetilde{\theta}}_{h:H}}_h\left(s_h, \cdot\right), \pi_h^*\left(\cdot \mid s_h\right)-\widehat{\pi}_h\left(\cdot \mid s_h\right)\right\rangle_{\mathcal{A}} \mid s_1=s_0\right] \\
    & \leq \sum_{h=1}^H \mathbb{E}_{\pi^*}\left[\delta_h(s_h, a_h) \mid s_1=s_0\right].
    \end{aligned}
    \end{equation}
   Recall that the construction of $\bar{Q}^{\widetilde{\mathcal{D}}^{\widetilde{\theta}}_{h:H}}_h$ in Line 5 in Algorithm \ref{alg:PEVI}. For all $h\in [H]$ and all $(s, a)\in \mathcal{S}\times \mathcal{A}$, we have 
    \begin{equation}
    \begin{aligned}
        \bar{Q}^{\widetilde{\mathcal{D}}^{\widetilde{\theta}}_{h:H}}_h(s, a)&= \widehat{\mathbb{B}}_h\widehat{V}^{\widetilde{\mathcal{D}}^{\widetilde{\theta}}_{h+1:H}}_{h+1}(s, a) - \Gamma_h(s, a)\\
        &\leq {\mathbb{B}}_h\widehat{V}^{\widetilde{\mathcal{D}}^{\widetilde{\theta}}_{h+1:H}}_{h+1}(s, a)\leq H-h+1,
    \end{aligned}
    \end{equation}
    where the first inequality follows the definition of $\Gamma_h(s, a)$ and the second inequality follows that $r_h\in [0, 1]$ and $\widehat{V}^{\widetilde{\mathcal{D}}^{\widetilde{\theta}}_{h+1:H}}_{h+1}(s, a) \leq H-h$.
    Then, we have
    \begin{equation}
    \begin{aligned}
    \widehat{Q}^{\widetilde{\mathcal{D}}^{\widetilde{\theta}}_{h:H}}_h(s, a) 
        = \max\left\{\bar{Q}^{\widetilde{\mathcal{D}}^{\widetilde{\theta}}_{h:H}}_h(s, a), 0\right\}
        \geq \bar{Q}^{\widetilde{\mathcal{D}}^{\widetilde{\theta}}_{h:H}}_h(s, a).
    \end{aligned}
    \end{equation}
    Then, $\delta_h(s, a)$ can be written as
    \begin{equation}\label{eqn: 108}
        \begin{aligned}
            \delta_h(s, a) &= \left(\mathbb{B}_h \widehat{V}^{\widetilde{\mathcal{D}}^{\widetilde{\theta}}_{h+1}}_{h+1}\right)\left(s, a\right) - \widehat{Q}^{\widetilde{\theta}, \widetilde{\mathcal{D}}_h}_h\left(s, a\right)\\
            &\leq \left(\mathbb{B}_h \widehat{V}^{\widetilde{\mathcal{D}}^{\widetilde{\theta}}_{h+1}}_{h+1}\right)\left(s, a\right) - \bar{Q}^{\widetilde{\theta}, \widetilde{\mathcal{D}}_h}_h\left(s, a\right)\\
            &\leq \left(\mathbb{B}_h \widehat{V}^{\widetilde{\mathcal{D}}^{\widetilde{\theta}}_{h+1}}_{h+1}\right)\left(s, a\right) - \left(\widehat{\mathbb{B}}_h \widehat{V}^{\widetilde{\mathcal{D}}^{\widetilde{\theta}}_{h+1}}_{h+1}\right)\left(s, a\right) + \Gamma_h(s, a) \leq 2\Gamma_h(s, a),
        \end{aligned}
    \end{equation}
    where the inequality follows line 4 in Algorithm \ref{alg:PEVI}. Hence we have $\delta_h(s, a) \leq 2\Gamma_h(s, a)$. 
    Combine Lemma \ref{Lemma:C3}, equation (\ref{eqn: 104}), and equation (\ref{eqn: 108}) , we get
    \begin{equation}
    \begin{aligned}
        V_1^{\pi^*, \widetilde{\theta}}(s_0)-\widehat{V}_1^{\widetilde{\mathcal{D}}^{\widetilde{\theta}}_{1:H}}(s_0)&\leq 2\sum_{h=1}^H \mathbb{E}_{\pi^*}\left[\Gamma_h(s_h, a_h) \mid s_1=s_0\right]\\
        &\leq 2 B \sum_{h=1}^H \mathbb{E}_{\pi^*}\left[\|\phi(s_h, a_h)\|_{(\Lambda_h^{\widetilde{\mathcal{D}}^{\widetilde{\theta}}_h})^{-1}} \mid s_1=s_0\right],
    \end{aligned}
    \end{equation}
    where $B$ satisfies equation (\ref{eqn: B-condition}).
\end{proof}

\subsection{Proof of Proposition \ref{lemma: 4}}\label{proof: lemma 4}
Denote $\Lambda_h = \Lambda^{\widetilde{\mathcal{D}}_h}_h, \Lambda_h^\prime = \Lambda^{\widetilde{\mathcal{D}}_h^\prime}_h$, and $K_h^\prime= K^{\widetilde{\mathcal{D}}_h^\prime}_h$.
Note that $\Lambda_h$ is a self-adjoint operator, then 
\begin{equation}\label{eqn: 128}
\Lambda_h^\prime=\Lambda_h^{1 / 2}\left(I_{\mathcal{H}_k}+\Lambda_h^{-1 / 2} \phi(z) \phi(z)^{\top} \Lambda_h^{-1 / 2}\right) \Lambda_h^{1 / 2}.
\end{equation}
We take $\log \operatorname{det}$ on both sides with the equation (\ref{eqn: 128}), then 
\begin{equation}
\begin{aligned}
\log \operatorname{det}\left(\Lambda_h^\prime\right) & =\log \operatorname{det}\left(\Lambda_h\right)+\log \operatorname{det}\left(I_{\mathcal{H}_k}+\Lambda_h^{-1 / 2} \phi(z) \phi(z)^{\top} \Lambda_h^{-1 / 2}\right) \\
& =\log \operatorname{det}\left(\Lambda_h\right)+\log \left(1+\phi(z)^{\top} \Lambda_h^{-1} \phi(z)\right).
\end{aligned}
\end{equation}
Note that $\operatorname{det}\left(\Lambda_h\right)=\operatorname{det}\left(\lambda I+K_h\right)$, and $\operatorname{det}\left(\Lambda_h^\prime\right)=\operatorname{det}\left(\lambda I+K_h^\prime\right)$ for $\lambda \geq 1$ because $\phi(z)^{\top} \Lambda_h^{-1} \phi(z) \leq 1$, we have
$$
\begin{aligned}
\phi(z)^{\top} \Lambda_h^{-1} \phi(z) & \leq 2 \log \left(1+\phi(z)^{\top} \Lambda_h^{-1} \phi(z)\right) \\
& =2 \cdot\left[\log \operatorname{det}\left(\Lambda_h^\prime\right)-\log \operatorname{det}\left(\Lambda_h\right)\right] \\
& =2 \cdot\left[\log \operatorname{det}\left(I+K_h^\prime / \lambda\right)-\log \operatorname{det}\left(I+K_h / \lambda\right)\right].
\end{aligned}
$$
Moreover, by equation (\ref{eqn:information gain}), we have 
\begin{equation}
    \begin{aligned}
        \phi(z)^{\top} \Lambda_h^{-1} \phi(z) &\leq  2 \cdot\left[\log \operatorname{det}\left(I+K_h^\prime / \lambda\right)-\log \operatorname{det}\left(I+K_h / \lambda\right)\right].
    \end{aligned}
\end{equation}

\subsection{Proof of Corollary \ref{cor:Well-Explored Dataset}}
The proof is inspired from \citep{jin2021pessimism,duan2020minimax}.
Recall that we denote $\mathcal{D}^{\widetilde{\theta}} = \{(s_h^\tau, a_h^\tau, \widetilde{r}_h^{\widetilde{\theta}_h}(s_h^\tau, a_h^\tau))\}_{\tau, h = 1}^{N, H}$, which is a combination of labeled dataset $\mathcal{D}_1$ and unlabeled dataset $\mathcal{D}_2^{\widetilde{\theta}}$. We partition dataset $\mathcal{D}^{\widetilde{\theta}}$ into $H$ disjoint and equally sized sub dataset $\{\widetilde{\mathcal{D}}_h^{\widetilde{\theta}}\}_{h=1}^H$, where $|\widetilde{\mathcal{D}}^{\widetilde{\theta}}_h| = N_H = N/H$. Let $\mathcal{I}_h=\left\{N_H\cdot (h-1) + 1, \ldots, N_H\cdot h\right\}=\left\{\tau_{h,1}, \cdots, \tau_{h, N_H}\right\}$ satisfy $\widetilde{\mathcal{D}}^{\widetilde{\theta}}_h = \{(s_h^\tau, a_h^\tau, \widetilde{r}_h^{{\widetilde{\theta}}_h}(s_h^\tau, a_h^\tau))\}_{\tau\in \mathcal{I}_h}$.
Define 
\begin{equation}\label{eqn: r.m.}
\begin{aligned}
& Z_h=\sum_{\tau\in \mathcal{I}_h} A_h^\tau, \quad A_h^\tau=\phi\left(s_h^\tau, a_h^\tau\right) \phi\left(s_h^\tau, a_h^\tau\right)^{\top}-\Sigma_h,
\end{aligned}
\end{equation}
where $\Sigma_h=\mathbb{E}_{\bar{\pi}}\left[\phi\left(s_h, a_h\right) \phi\left(s_h, a_h\right)^{\top}\right]$ for all $h\in [H]$. Clearly, $\mathbb{E}_{\bar{\pi}}[A_h^\tau]=0$ from equation (\ref{eqn: r.m.}). Note that $\mathbb{E}_{\bar{\pi}}$ is taken with respect to the trajectory induced by the fixed behavior policy $\bar{\pi}$ in the underlying MDP, and the set $\{A_h^\tau\}_{\tau\in \mathcal{I}_h}$ is i.i.d. and centered for all $h\in [H]$.

As shown in Section~\ref{sec:rkhs}, the feature mapping $\phi: \mathcal{Z}\rightarrow \mathcal{H}$ satisfies
\begin{equation}
\phi(z)=\sum_{j=1}^{\infty} \sigma_j \cdot \psi_j(z) \cdot \psi_j=\sum_{j=1}^{\infty} \sqrt{\sigma_j} \cdot \psi_j(z) \cdot\left(\sqrt{\sigma_j} \cdot \psi_j\right).
\end{equation} 
Let $t$ be any positive integer and let $\Pi_t: \mathcal{H} \rightarrow \mathcal{H}$ denote the projection onto the subspace spanned by $\left\{\psi_j\right\}_{j \in[t]}$, i.e., $\Pi_t[\phi(z)]=\sum_{j=1}^t \sigma_j \cdot \psi_j(z) \cdot \psi_j$.

For $d$-finite Spectrum case, consider the case where $\sigma_j=0$ for all $j>d$. Then, $\phi(z) = \Pi_d[\phi(z)]$ for any $z\in\mathcal{Z}$. That is, $A_h^\tau$ can be written as
\begin{equation}
\begin{aligned}
& A_h^\tau :=W_h^\tau - W_h, \quad Z_h=\sum_{\tau\in \mathcal{I}_h} A_h^\tau=\sum_{\tau\in \mathcal{I}_h} W_h^\tau - W_h,
\end{aligned}
\end{equation}
where $W_h^\tau = \phi(z_h^\tau)\phi(z_h^\tau)^\top$ as a $d \times d$ matrix, and $W_h = \mathbb{E}_{\bar{\pi}}\left[\phi(z_h)\phi(z_h)^\top\right]$. By the boundness of kernel(i.e. $\sup_{z\in\mathcal{Z}} k(z, z)\leq 1$), we have $\|\phi(z)\|_{\mathcal{H}_k}\leq 1, \forall z\ \in \mathcal{Z}$. By Jensen's inequality, we have
$$
\left\|\Sigma_h\right\|_{\mathrm{op}} \leq \mathbb{E}_{\bar{\pi}}\left[\left\|\phi\left(s_h, a_h\right) \phi\left(s_h, a_h\right)^{\top}\right\|_{\mathrm{op}}\right] \leq 1.
$$
Similarly, for all $h \in [H]$ and all $\tau \in \mathcal{I}_h$, as it holds that
$$
\left\|A_h^\tau\right\|_{\mathrm{op}} \leq \left\|W_h^\tau\right\|_{\mathrm{op}} + \left\|W_h\right\|_{\mathrm{op}}\leq 2,
$$
we have
$$
\begin{aligned}
    \left\|\mathbb{E}_{\bar{\pi}}\left[Z_h^{\top} Z_h\right]\right\|_{\text {op }} &= N_H\left\|\mathbb{E}_{\bar{\pi}}\left[\left(A_h^\tau\right)^{\top} A_h^\tau\right]\right\|_{\mathrm{op}}\\
    &\leq N_H\left\|\mathbb{E}_{\bar{\pi}}\left[A_h^\tau\left(A_h^\tau\right)^{\top}\right]\right\|_{\mathrm{op}}\\
    &\leq 4 N_H.
\end{aligned}
$$
Similarly, we have
\begin{equation*}
    \left\|\left(A_h^\tau\right)^{\top} A_h^\tau\right\|_{\mathrm{op}} \leq\left\|\left(A_h^\tau\right)^{\top}\right\|_{\mathrm{op}} \cdot\left\|A_h^\tau\right\|_{\mathrm{op}} \leq 4 \text{ and } \left\|\mathbb{E}_{\bar{\pi}}\left[Z_h^{\top} Z_h\right]\right\|_{\text {op }}\leq 4N_H.
\end{equation*}
Applying Lemma \ref{lemma: matrix Bernstein} for $Z_h$ defined in equation (\ref{eqn: r.m.}), for any fixed $h \in[H]$ and any $l \geq 0$, we have
$$
\mathbb{P}\left(\left\|Z_h\right\|_{\mathrm{op}}>l\right)=\mathbb{P}\left(\left\|\sum_{\tau\in \mathcal{I}_h} A_h^\tau\right\|_{\mathrm{op}}>l\right) \leq 2 d \cdot \exp \left(-\frac{l^2 / 2}{4 N_H+2 l / 3}\right).
$$
For all $\delta\in (0,1)$, we set $l=\sqrt{10 N_H \log (4 d H / \delta)}$, for sufficiently large $N_H \geq 5 \log (4 d H / \delta)$, we obtain $\|Z_h\|_{\mathrm{op}} \leq \sqrt{10 N_H \log (4 d H / \delta)}$ holds with probability at least $1-\delta / 2 H$.

Moreover, $Z_h$ defined in equation (\ref{eqn: r.m.}) can be written as
\begin{equation}
Z_h=\sum_{\tau\in \mathcal{I}_h} \phi\left(x_h^\tau, a_h^\tau\right) \phi\left(x_h^\tau, a_h^\tau\right)^{\top}-N_H \cdot \Sigma_h=\left(\Lambda_h-\lambda \cdot I\right)-N_H \cdot \Sigma_h.
\end{equation}
Recall that there exist positive constant $c_{\text{min}}$ such that $\inf_{\|f\|_{\mathcal{H}_k}=1} \langle f, \Sigma_h f\rangle_{\mathcal{H}_k} \geq c_{\text{min}}$. For sufficiently large $N_H \geq \frac{4 C^2}{c^2_{\text{min}}}\log \left(4 d H  / \delta\right)$, we have
\begin{equation}
    \begin{aligned}
        \inf_{\|f\|_{\mathcal{H}_k}=1} \langle f, \frac{\Lambda_h}{N_H} f\rangle_{\mathcal{H}_k} &\geq \inf_{\|f\|_{\mathcal{H}_k}=1} \langle f, \left(\frac{Z_h}{N_H} + \Sigma_h + \frac{\lambda}{N_H} I\right) f\rangle_{\mathcal{H}_k}\\
        &\geq \frac{1}{N_H}\inf_{\|f\|_{\mathcal{H}_k}=1} \langle f, Z_h f\rangle_{\mathcal{H}_k} + \inf_{\|f\|_{\mathcal{H}_k}=1} \langle f, \Sigma_h f\rangle_{\mathcal{H}_k}\\ 
        &\geq c_{\text{min}} - \frac{1}{N_H} \| Z_h\|_{\mathrm{op}}  \geq c_{\text{min}} - C \sqrt{\frac{\log \left(4 H G(N_H, \lambda) / \delta\right)}{N_H}}\geq \frac{c_{\text{min}}}{2}.
    \end{aligned}
\end{equation}
Hence, it holds that
\begin{equation}
    \|\Lambda_h^{-1}\|_{\mathrm{op}} \leq \frac{2}{N_H\cdot c_{\text{min}}},
\end{equation}
for all $h\in [H]$. This implies that 
\begin{equation}\label{eqn: 126}
    \| \Lambda_h^{-\frac{1}{2}}\phi(s, a)\|_{\mathcal{H}_k} \leq \|\phi(s, a)\|_{\mathcal{H}_k}\|\Lambda_h^{-1}\|^{1/2}_{\mathrm{op}} \leq c^\prime/\sqrt{N_H},
\end{equation}
where $c^\prime = \sqrt{2/c_{\text{min}}}$ and using the fact that $\|\phi(s, a)\|_{\mathcal{H}_k}\leq 1$ for all $(s, a)\in \mathcal{S}\times\mathcal{A}$.

We define the event 
\begin{equation*}
\mathcal{E}_1^*=\left\{\|\Lambda_h^{-\frac{1}{2}}\phi(s, a)\|_{\mathcal{H}_k} \leq c^\prime/\sqrt{N_H} \text { for all }(s, a) \in \mathcal{S} \times \mathcal{A} \text { and all } h \in[H]\right\}.
\end{equation*}
By equation (\ref{eqn: 126}), we have $\mathbb{P}(\mathcal{E}_1^*)\geq 1 - \delta/2$ for $N_H \geq \frac{4 C^2}{c^2_{\text{min}}}\log \left(4 d H / \delta\right)$.
Recall that for $d$-finite spectrum case, we have
\begin{equation}
\begin{aligned}
    \beta_h(\delta) &= \sqrt{1 + \frac{1}{N_1}}\mathscr{S} + \sqrt{ C_1\cdot d\cdot \log N_1 + \log(\frac{1}{\delta^2})},\\
    B &= C_2 \cdot H\cdot \sqrt{d\log{(N/\delta)}}.
\end{aligned}
\end{equation}
Use big tilde O notation, they can be written as
\begin{equation}\label{eqn: big tilde O}
\begin{aligned}
        \beta_h(\delta) &= \tilde{\mathcal{O}}(\sqrt{d}),\\
        B &= \tilde{\mathcal{O}}(H\sqrt{d}).
    \end{aligned} 
\end{equation}
Combining the result in Theorem \ref{thm:main thm} and equation (\ref{eqn: big tilde O}) with $\delta = \delta/4$, we have
\begin{equation}\label{eqn: 129}
    \begin{aligned}
    \operatorname{SubOpt}(\hat{\pi}; s) 
    &\leq 2 \sum_{h=1}^H \beta_h(\delta)\mathbb{E}_{\pi^*}\left[\|\phi(s_h, a_h)\|_{(\Lambda_h^{\mathcal{D}_1})^{-1}} \mid s_1=s\right]\\
    &+ 2B\sum_{h=1}^H\mathbb{E}_{\pi^*}\left[\|\phi(s_h, a_h)\|_{(\Lambda_h^{\widetilde{\mathcal{D}}^\theta_h})^{-1}} \mid s_1=s\right]\\
    &\leq 2\beta_h(\delta)\cdot H\cdot c^\prime/\sqrt{N_1} + 2B\cdot H\cdot c^\prime/\sqrt{N_H}\\
    &= \tilde{\mathcal{O}}(H\sqrt{\frac{d}{N_1}}) + \tilde{\mathcal{O}}(H^2\sqrt{\frac{d}{N_H}}),
    \end{aligned}
\end{equation}
where the last equality follows from the fact that $N_H = N/H$ and $N=N_1+N_2$ for all $h\in [H]$.

\section{Sufficient Lemma}
\begin{lemma}[Concentration of Self-Normalized Processes in RKHS \citep{chowdhury2017kernelized}]\label{lemma:concentration}
    Let $\mathcal{H}$ be an RKHS defined over $\mathcal{X} \subseteq \mathbb{R}^d$ with kernel function $k(\cdot, \cdot): \mathcal{X} \times \mathcal{X} \rightarrow \mathbb{R}$. Let $\left\{x_\tau\right\}_{\tau=1}^{\infty} \subset \mathcal{X}$ be a discrete time stochastic process that is adapted to the filtration $\left\{\mathcal{F}_t\right\}_{t=0}^{\infty}$. Let $\left\{\epsilon_\tau\right\}_{\tau=1}^{\infty}$ be a real-valued stochastic process such that (i) $\epsilon_\tau$ is $\mathcal{F}_\tau$-measurable and (ii) $\epsilon_\tau$ is zero-mean and $\sigma$-sub-Gaussian conditioning on $\mathcal{F}_{\tau-1}$, i.e.,
    $$
    \mathbb{E}\left[\epsilon_\tau \mid \mathcal{F}_{\tau-1}\right]=0, \quad \mathbb{E}\left[e^{\lambda \epsilon_\tau} \mid \mathcal{F}_{\tau-1}\right] \leq e^{\lambda^2 \sigma^2 / 2}, \quad \forall \lambda \in \mathbb{R}.
    $$
    
    Moreover, for any $t \geq 2$, let $E_t=\left(\epsilon_1, \ldots, \epsilon_{t-1}\right)^{\top} \in \mathbb{R}^{t-1}$ and $K_t \in \mathbb{R}^{(t-1) \times(t-1)}$ be the Gram matrix of $\left\{x_\tau\right\}_{\tau \in[t-1]}$. Then for any $\eta>0$ and any $\delta \in(0,1)$, with probability at least $1-\delta$, it holds simultaneously for all $t \geq 1$ that
    $$
    E_t^{\top}\left[\left(K_t+\eta \cdot I\right)^{-1}+I\right]^{-1} E_t \leq \sigma^2 \cdot \log \operatorname{det}\left[(1+\eta) \cdot I+K_t\right]+2 \sigma^2 \cdot \log (1 / \delta).
    $$
\end{lemma}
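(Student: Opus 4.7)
I would first establish the analogous self-normalized bound at unit regularization via the standard method of mixtures in RKHS, and then promote it to arbitrary $\eta>0$ by a one-line feature-augmentation reduction.

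For the base bound, I would write everything in the feature representation: let $\Phi_t:\mathcal{H}\to\mathbb{R}^{t-1}$ stack the rows $\phi(x_\tau)=k(\cdot,x_\tau)$ for $\tau\le t-1$ and set $S_t:=\Phi_t^\top E_t=\sum_{\tau=1}^{t-1}\epsilon_\tau\phi(x_\tau)\in\mathcal{H}$, $V_t:=I_{\mathcal{H}}+\Phi_t^\top\Phi_t$. The adjoint identity $\Phi_t V_t=(I+K_t)\Phi_t$ yields the kernel trick $\|S_t\|_{V_t^{-1}}^2=E_t^\top K_t(K_t+I)^{-1}E_t$, and Sylvester's identity gives $\det V_t=\det(I+K_t)$ on the feature span.

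Next, for each $f\in\mathcal{H}$ I would define the exponential process
\[ M_t^f:=\exp\!\Big(\tfrac{1}{\sigma}\langle f,S_t\rangle_{\mathcal{H}}-\tfrac12\sum_{\tau=1}^{t-1}\langle f,\phi(x_\tau)\rangle_{\mathcal{H}}^2\Big). \]
After shifting the filtration so that $\phi(x_\tau)$ is $\mathcal{F}_{\tau-1}$-measurable, conditional $\sigma$-sub-Gaussianity of $\epsilon_\tau$ yields $\mathbb{E}[M_t^f/M_{t-1}^f\mid\mathcal{F}_{\tau-1}]\le 1$, so $(M_t^f)_{t\ge 1}$ is a nonnegative supermartingale with $M_1^f=1$. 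I then mix over $f$ using the isotropic prior $\mu=\mathcal{N}(0,I_{\mathcal{H}})$ supported on the (finite-dimensional) span of the features — $M_t^f$ depends on $f$ only through its projection there, so no infinite-dimensional Gaussian measure is really needed — and Fubini gives that $\bar M_t:=\int M_t^f\,\mu(df)$ is also a nonnegative supermartingale with $\bar M_1=1$. Completing the square inside the Gaussian integral yields the closed form
\[ \bar M_t=\det(I+K_t)^{-1/2}\exp\!\Big(\tfrac{1}{2\sigma^2}E_t^\top K_t(K_t+I)^{-1}E_t\Big), \]
and Ville's maximal inequality $\mathbb{P}(\sup_{t\ge 1}\bar M_t\ge 1/\delta)\le\delta$ then gives, with probability at least $1-\delta$ uniformly in $t\ge 1$, the base estimate
\[ E_t^\top K_t(K_t+I)^{-1}E_t\ \le\ \sigma^2\log\det(I+K_t)+2\sigma^2\log(1/\delta). \]

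Finally I would promote this to the lemma via the augmented-feature trick. Fix an orthonormal system $\{e_\tau\}$ in an auxiliary Hilbert space and set $\tilde\phi_\tau:=(\phi(x_\tau),\sqrt{\eta}\,e_\tau)\in\tilde{\mathcal{H}}:=\mathcal{H}\oplus\ell^2$. The noise sequence $\{\epsilon_\tau\}$ and its sub-Gaussian parameter $\sigma$ are unchanged, the augmented sequence is still adapted, and the augmented Gram matrix is exactly $\tilde K_t=K_t+\eta I$. Re-running the base argument in $\tilde{\mathcal{H}}$ (equivalently, substituting $\tilde K_t$ for $K_t$ in the base estimate) gives
\[ E_t^\top(K_t+\eta I)(K_t+(1+\eta)I)^{-1}E_t\ \le\ \sigma^2\log\det((1+\eta)I+K_t)+2\sigma^2\log(1/\delta), \]
and the algebraic identity $(A^{-1}+I)^{-1}=A(A+I)^{-1}$ with $A=K_t+\eta I$ rewrites the left-hand side as $E_t^\top[(K_t+\eta I)^{-1}+I]^{-1}E_t$, which is precisely the claim.

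The step I expect to be most delicate is making the Gaussian mixture rigorous when $\mathcal{H}$ is infinite-dimensional; I handle it by observing that $M_t^f$ factors through the finite-dimensional projection of $f$ onto $\mathrm{span}\{\phi(x_1),\dots,\phi(x_{t-1})\}$, so the mixture integral is in fact a finite-dimensional Gaussian integral and the Sylvester identity rigorously transports the feature-side $\det V_t$ to the kernel-side $\det(I+K_t)$. A secondary care point is the measurability convention: since the lemma only declares $\{x_\tau\}$ to be ``adapted,'' one must shift the filtration (or read adaptedness as predictability) so that $\phi(x_\tau)\in\mathcal{F}_{\tau-1}$ before invoking the conditional sub-Gaussian MGF bound inside the supermartingale step.
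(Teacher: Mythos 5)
The paper offers no proof of this lemma at all---it is an imported result, and the ``proof'' in Appendix C is a one-line pointer to Theorem 1 of \citet{chowdhury2017kernelized}. Your argument is a correct reconstruction of exactly that reference's proof: the method-of-mixtures supermartingale with a Gaussian prior over $\mathcal{H}$, the kernel-trick identity $\|S_t\|_{V_t^{-1}}^2=E_t^\top K_t(K_t+I)^{-1}E_t$ together with Sylvester's determinant identity, Ville's maximal inequality, and the augmented-feature map $\tilde\phi_\tau=(\phi(x_\tau),\sqrt{\eta}\,e_\tau)$ to pass from unit regularization to general $\eta$. Your two flagged care points (restricting the Gaussian mixture to the finite-dimensional span of the observed features, and reading adaptedness as predictability so that $\phi(x_\tau)$ is $\mathcal{F}_{\tau-1}$-measurable before applying the conditional MGF bound) are precisely the right ones, and the final algebraic rewriting via $(A^{-1}+I)^{-1}=A(A+I)^{-1}$ with $A=K_t+\eta I$ is correct; I see no gap.
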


\begin{proof}
    Please refer to Theorem $1$ in \citep{chowdhury2017kernelized}
\end{proof}

\begin{lemma}[Lemma D.5 in \citep{yang2020function}]\label{lemma: eigen decay}
    Let $\mathcal{Z}$ be a compact subset of $\mathbb{R}^d$ and $k: \mathcal{Z} \times \mathcal{Z} \rightarrow \mathbb{R}$ be the RKHS kernel of $\mathcal{H}$. We assume $k$ is a bounded kernel so that $\sup _{z \in \mathcal{Z}} k(z, z) \leq 1$, and $k$ is continuously differentiable on $\mathcal{Z} \times \mathcal{Z}$. Moreover, let $T_k$ be the integral operator induced by $k$ and the Lebesgue measure on $\mathcal{Z}$, such that $T_k f(z)=\int_{\mathcal{Z}} k\left(z, z^{\prime}\right) \cdot f\left(z^{\prime}\right) \mathrm{d} z^{\prime}, \quad \forall f \in \mathcal{L}^2(\mathcal{Z}).
$. Let $\left\{\sigma_j\right\}_{j \geq 1}$ be the non-increasing sequence of eigenvalues of $T_k$. Recall the definition of maximal information gain in equation (\ref{eqn:information gain}). We assume $\left\{\sigma_j\right\}_{j \geq 1}$ satisfies one of the following eigenvalue decay conditions:
\begin{itemize}
    \item $\gamma$-finite spectrum: $\sigma_j=0$ for all $j>\gamma$, where $\gamma$ is a positive integer.
    \item $\gamma$-exponential decay: there exists some constants $C_1, C_2>0$ such that $\sigma_j \leq C_1 \cdot \exp \left(-C_2 \cdot j^\gamma\right)$ for all $j \geq 1$, where $\gamma>0$ is a positive constant.
    \item $\gamma$-polynomial decay: there exists some constants $C_1>0, \tau \in[0,1 / 2)$ and $C_\psi>0$ such that $\sigma_j \leq C_1 \cdot j^{-\gamma}$ and $\sup _{z \in \mathcal{Z}} \sigma_j^\tau \cdot\left|\psi_j(z)\right| \leq C_\psi$ for all $j \geq 1$, where $\gamma>1$.
\end{itemize}

Suppose $\lambda \in\left[c_1, c_2\right]$ for absolute constants $c_1, c_2$. Then we have
    $$
    G(N, \lambda) \leq \begin{cases}C \cdot \gamma \cdot \log N & \gamma \text {-finite spectrum, } \\ C \cdot(\log N)^{1+1 / \gamma} & \gamma \text {-exponential decay, } \\
    C \cdot N^{(d+1) /(\gamma+d)} \cdot \log N & \gamma \text {-polynomial decay, }
    \end{cases}
    $$
    where $C$ is an absolute constant that only depends on $d, \gamma, C_1, C_2, C, c_1$ and $c_2$.
\end{lemma}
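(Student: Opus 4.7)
The plan is to prove each decay case by a spectral-truncation argument, reducing the supremum defining $G(N,\lambda)$ to a tractable quantity via Mercer's decomposition and then balancing a finite-rank "head" against a small operator-trace "tail".

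I would first use the identity $\log\det(I_N+K_\mathcal{D}/\lambda)=\log\det(I_\mathcal{H}+\Phi_\mathcal{D}^\top\Phi_\mathcal{D}/\lambda)$, where $\Phi_\mathcal{D}:\mathcal{H}\to\mathbb{R}^N$ stacks the features $\phi(z_\tau)$ of the points of $\mathcal{D}$. In the Mercer basis $\{\sqrt{\sigma_j}\psi_j\}_{j\ge 1}$, the operator $\Phi_\mathcal{D}^\top\Phi_\mathcal{D}$ is positive semidefinite on $\mathcal{H}$ with trace $\sum_\tau k(z_\tau,z_\tau)\le N$. For a truncation level $D\in\mathbb{N}$ to be tuned per case, split it as $\Pi_D\Phi_\mathcal{D}^\top\Phi_\mathcal{D}\Pi_D + B_D$, with $\Pi_D$ the projector onto the first $D$ Mercer modes. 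Applying the concavity inequality $\log\det(I+A+B)\le\log\det(I+A)+\operatorname{tr}(B)$ for PSD $A,B$, combined with AM--GM on the head, gives
\begin{equation*}
\log\det(I+\Phi_\mathcal{D}^\top\Phi_\mathcal{D}/\lambda)\;\le\;D\log\!\left(1+\tfrac{N}{D\lambda}\right)+\tfrac{1}{\lambda}\operatorname{tr}(B_D).
\end{equation*}
The assumed pointwise bound $\sup_z\sigma_j^\tau|\psi_j(z)|\le C_\psi$ controls the tail via $\operatorname{tr}(B_D)\le N C_\psi^2 \sum_{j>D}\sigma_j^{1-2\tau}$ for any admissible $\tau\in[0,1/2)$.

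Next I would specialize to each regime and optimize $D$. For $\gamma$-finite spectrum, take $D=\gamma$: the tail vanishes and the head is at most $C\gamma\log N$. For $\gamma$-exponential decay, choose $D=\Theta((\log N)^{1/\gamma})$; then $\sum_{j>D}\exp(-C_2(1-2\tau)j^\gamma)=o(1)$ so the tail is $O(1)$, while the head contributes $\asymp D\log N \asymp (\log N)^{1+1/\gamma}$. For $\gamma$-polynomial decay, compare the tail sum to an integral, $\sum_{j>D}j^{-\gamma(1-2\tau)}\asymp D^{1-\gamma(1-2\tau)}$, and balance $D\log N$ against $N D^{1-\gamma(1-2\tau)}$ to fix $D$ as a power of $N$.

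The main obstacle is the polynomial-decay case: the claimed exponent $(d+1)/(\gamma+d)$ depends explicitly on the ambient dimension $d$ of $\mathcal{Z}\subset\mathbb{R}^d$, whereas the pointwise assumption on $\psi_j$ only supplies a family of exponents parametrised by $\tau\in[0,1/2)$. Recovering exactly $(d+1)/(\gamma+d)$ requires a dimension-sensitive choice of $\tau$, essentially a Sobolev-embedding-type control of the Mercer eigenfunctions on a $d$-dimensional compact domain, so that the tail bound reflects the true volumetric growth of eigenmodes rather than the naïve per-mode estimate. I expect the finite- and exponential-spectrum cases to follow almost mechanically from the split inequality above; the substance of the lemma lies in identifying the correct $\tau$ (and justifying its uniform norm bound) so that the $D$-optimization delivers the sharp exponent advertised in the polynomial-decay row.
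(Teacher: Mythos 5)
The paper does not actually prove this lemma: it is imported verbatim from Yang et al.\ (2020, Lemma D.5), and the appendix ``proof'' is a pointer to that reference. Your strategy --- truncate the Mercer expansion at level $D$, bound the rank-$D$ head by $D\log(1+\tfrac{N}{D\lambda})$ via AM--GM, control the tail by its trace, and tune $D$ per decay regime --- is indeed the standard route to this result. But one step fails as written: the splitting $\Phi_\mathcal{D}^\top\Phi_\mathcal{D}=\Pi_D\Phi_\mathcal{D}^\top\Phi_\mathcal{D}\Pi_D+B_D$ does not yield a positive semidefinite remainder. In block form with respect to $\Pi_D$ and $I-\Pi_D$, $B_D$ has a zero upper-left block but nonzero off-diagonal blocks $\Pi_D\Phi_\mathcal{D}^\top\Phi_\mathcal{D}(I-\Pi_D)$, so $B_D\not\succeq 0$ and the inequality $\log\det(I+A+B)\le\log\det(I+A)+\operatorname{tr}(B)$ cannot be applied to it. The fix is to decompose the \emph{kernel}, $k=k_D+k_{>D}$ with $k_D(z,z')=\sum_{j\le D}\sigma_j\psi_j(z)\psi_j(z')$: orthogonality of the Mercer modes kills the cross terms at the level of the Gram matrix, so $K_\mathcal{D}=K_D+K_{>D}$ with both summands PSD, $K_D$ of rank at most $D$, and your trace bound $\operatorname{tr}(K_{>D})\le N C_\psi^2\sum_{j>D}\sigma_j^{1-2\tau}$ then goes through.

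The second gap is the one you flag yourself: the polynomial-decay case is left open, and that is where the substance of the lemma lies. The balance does close with a concrete choice: take $\tau=\frac{d(\gamma-1)}{2\gamma(d+1)}$, which lies in $[0,1/2)$ for every $\gamma>1$ and $d\ge 1$ and gives $\gamma(1-2\tau)=\frac{\gamma+d}{d+1}>1$; then $\sum_{j>D}j^{-\gamma(1-2\tau)}\asymp D^{(1-\gamma)/(d+1)}$, and balancing $D\log N$ against $N D^{(1-\gamma)/(d+1)}$ yields $D\asymp (N/\log N)^{(d+1)/(\gamma+d)}$ and hence $G(N,\lambda)\lesssim N^{(d+1)/(\gamma+d)}\log N$, the advertised exponent. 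Note, however, that this requires the sup-norm bound $\sup_z\sigma_j^\tau|\psi_j(z)|\le C_\psi$ at that \emph{particular} $\tau$; the lemma as stated only posits existence of some $\tau$, and it is the paper's Assumption 3.1 (which demands the bound for all $\tau\in[0,1/2)$) that licenses this choice. Since you identified the obstacle but did not resolve it, and the head--tail split needs the repair above, the proposal is an incomplete sketch of the standard argument rather than a proof.
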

\begin{proof}
    Please refer to Lemma D.5 in \citep{yang2020function} for a detailed proof.
\end{proof}

\begin{lemma}[Extended Value Difference \citep{cai2020provably}]\label{lemma:Extended Value Difference}
    Let $\pi=\left\{\pi_h\right\}_{h=1}^H$ and $\pi^{\prime}=\left\{\pi_h^{\prime}\right\}_{h=1}^H$ be any two policies and let $\left\{\widehat{Q}_h\right\}_{h=1}^H$ be any estimated $Q$-functions. For all $h \in[H]$, we define the estimated value function $\widehat{V}_h: \mathcal{S} \rightarrow \mathbb{R}$ by setting $\widehat{V}_h(x)=\left\langle\widehat{Q}_h(x, \cdot), \pi_h(\cdot \mid x)\right\rangle_{\mathcal{A}}$ for all $x \in \mathcal{S}$. For all $x \in \mathcal{S}$, we have
    \begin{equation*}
        \begin{array}{r}
    \widehat{V}_1(x)-V_1^{\pi^{\prime}}(x)=\sum_{h=1}^H \mathbb{E}_{\pi^{\prime}}\left[\left\langle\widehat{Q}_h\left(s_h, \cdot\right), \pi_h\left(\cdot \mid s_h\right)-\pi_h^{\prime}\left(\cdot \mid s_h\right)\right\rangle_{\mathcal{A}} \mid s_1=x\right] \\
    +\sum_{h=1}^H \mathbb{E}_{\pi^{\prime}}\left[\widehat{Q}_h\left(s_h, a_h\right)-\left(\mathbb{B}_h \widehat{V}_{h+1}\right)\left(s_h, a_h\right) \mid s_1=x\right],
    \end{array}
    \end{equation*}
    where $\mathbb{E}_{\pi^{\prime}}$ is taken with respect to the trajectory generated by $\pi^{\prime}$, while $\mathbb{B}_h$ is the Bellman operator defined in Section \ref{sec:markov_decision}.
\end{lemma}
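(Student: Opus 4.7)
The plan is to establish the identity by a single-step decomposition followed by a telescoping recursion over $h = 1, \ldots, H$. I abbreviate the Bellman residual of the estimated $Q$-function by $\delta_h(s,a) := \widehat{Q}_h(s,a) - (\mathbb{B}_h \widehat{V}_{h+1})(s,a)$ and track the pointwise difference $\widehat{V}_h(s) - V_h^{\pi'}(s)$ at each horizon level.

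First I expand $\widehat{V}_h(s) - V_h^{\pi'}(s)$ using the defining formula $\widehat{V}_h(s) = \langle \widehat{Q}_h(s, \cdot), \pi_h(\cdot \mid s)\rangle_{\mathcal{A}}$ and the Bellman identity $V_h^{\pi'}(s) = \langle Q_h^{\pi'}(s, \cdot), \pi'_h(\cdot \mid s)\rangle_{\mathcal{A}}$. Adding and subtracting $\langle \widehat{Q}_h(s, \cdot), \pi'_h(\cdot \mid s)\rangle_{\mathcal{A}}$ splits this into a policy-mismatch inner product $\langle \widehat{Q}_h(s, \cdot), \pi_h(\cdot \mid s) - \pi'_h(\cdot \mid s)\rangle_{\mathcal{A}}$ and a $Q$-discrepancy inner product $\langle \widehat{Q}_h(s, \cdot) - Q_h^{\pi'}(s, \cdot), \pi'_h(\cdot \mid s)\rangle_{\mathcal{A}}$. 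I then rewrite $\widehat{Q}_h - Q_h^{\pi'}$ by inserting $\pm (\mathbb{B}_h \widehat{V}_{h+1})(s, a)$ and using $Q_h^{\pi'} = \mathbb{B}_h V_{h+1}^{\pi'}$, which yields the decomposition $\widehat{Q}_h(s,a) - Q_h^{\pi'}(s,a) = \delta_h(s, a) + (\mathbb{P}_h[\widehat{V}_{h+1} - V_{h+1}^{\pi'}])(s, a)$.

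The resulting one-step identity has the shape $\widehat{V}_h(s_h) - V_h^{\pi'}(s_h) = (\text{policy mismatch at } h) + \mathbb{E}_{a_h \sim \pi'_h(\cdot \mid s_h)}[\delta_h(s_h, a_h)] + \mathbb{E}_{\pi'}[\widehat{V}_{h+1}(s_{h+1}) - V_{h+1}^{\pi'}(s_{h+1}) \mid s_h]$, in which the same difference reappears one step deeper. Setting $s_1 = x$ and iterating this recursion from $h = 1$ down to the terminal convention $\widehat{V}_{H+1} \equiv V_{H+1}^{\pi'} \equiv 0$, while repeatedly applying the tower property of conditional expectations along the trajectory distribution of $\pi'$, collapses the chain into the two sums stated in the lemma.

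There is no deep technical obstacle here. The main care is bookkeeping: every expectation must be consistently taken with respect to the roll-out distribution of $\pi'$, so that the residual $\widehat{V}_{h+1}(s_{h+1}) - V_{h+1}^{\pi'}(s_{h+1})$ produced at step $h$ becomes exactly the left-hand side of the step-$(h{+}1)$ recursion under the tower property. A subtlety worth noting is that $\widehat{Q}_h$ is an arbitrary function with no assumed Bellman consistency, so $\delta_h$ has no sign and no magnitude bound; the statement is a genuine equality and no triangle inequality or domination argument is required.
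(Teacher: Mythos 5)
Your proposal is correct and follows essentially the same argument as the paper's proof: the same add-and-subtract decomposition of $\widehat{V}_h - V_h^{\pi'}$ into a policy-mismatch term, the Bellman residual $\widehat{Q}_h - \mathbb{B}_h\widehat{V}_{h+1}$, and a transition term $\mathbb{P}_h[\widehat{V}_{h+1} - V_{h+1}^{\pi'}]$, followed by telescoping the recursion over $h$ under the roll-out distribution of $\pi'$ with the terminal convention $\widehat{V}_{H+1} = V_{H+1}^{\pi'} = 0$. No gaps.
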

\begin{proof}
    Fix $h\in [H]$. Denote that $\iota_i = \widehat{Q}_i - \mathbb{B}_i\widehat{V}_{i+1}$. For all $i\in [h, H]$ and $s\in\mathcal{S}$, we have
    \begin{equation}\label{eqn: A.5}
    \begin{aligned}
    &\quad\ \mathbb{E}_{\pi^{\prime}}\left[\widehat{V}_i\left(s_i\right)-V_i^{\pi^{\prime}}\left(s_i\right) \mid s_h = s\right]\\ & =\mathbb{E}_{\pi^{\prime}}\left[\left\langle \widehat{Q}_i\left(s_i, \cdot\right), \pi_i\left(\cdot \mid s_i\right)\right\rangle-\left\langle Q_i^{\pi^{\prime}}\left(s_i, \cdot\right), \pi_i^{\prime}\left(\cdot \mid s_i\right)\right\rangle \mid s_h=s\right] \\
    & =\mathbb{E}_{\pi^{\prime}}\left[\left\langle \widehat{Q}_i\left(s_i, \cdot\right), \pi_i\left(\cdot \mid s_i\right)-\pi_i^{\prime}\left(\cdot \mid s_i\right)\right\rangle+\left\langle \widehat{Q}_i\left(s_i, \cdot\right)-Q_i^{\pi^{\prime}}\left(s_i, \cdot\right), \pi_i^{\prime}\left(\cdot \mid s_i\right)\right\rangle \mid s_h=s\right] \\
    & =\mathbb{E}_{\pi^{\prime}}\left[\left\langle \widehat{Q}_i\left(s_i, \cdot\right), \pi_i\left(\cdot \mid s_i\right)-\pi_i^{\prime}\left(\cdot \mid s_i\right)\right\rangle \mid s_h=s\right] \\
    & +\mathbb{E}_{\pi^{\prime}}\left[\left\langle\iota_i\left(s_i, \cdot\right)+\mathbb{B}_i \widehat{V}_{i+1}\left(s_i, \cdot\right)-\left(r_i\left(s_i, \cdot\right)+\mathbb{P}_i V_{i+1}^{\pi^{\prime}}\left(s_i, \cdot\right)\right), \pi_i^{\prime}\left(\cdot \mid s_i\right)\right\rangle \mid s_h=s\right] \\
    & =\mathbb{E}_{\pi^{\prime}}\left[\left\langle \widehat{Q}_i\left(s_i, \cdot\right), \pi_i\left(\cdot \mid s_i\right)-\pi_i^{\prime}\left(\cdot \mid s_i\right)\right\rangle \mid s_h=s\right]+\mathbb{E}_{\pi^{\prime}}\left[\iota_i\left(s_i, a_i\right) \mid s_h=s\right] \\
    & +\mathbb{E}_{\pi^{\prime}}\left[\mathbb{P}_i\left(\widehat{V}_{i+1}-V_{i+1}^{\pi^{\prime}}\right)\left(s_i, a_i\right) \mid s_h=s\right] \\
    & =\mathbb{E}_{\pi^{\prime}}\left[\left\langle \widehat{Q}_i\left(s_i, \cdot\right), \pi_i\left(\cdot \mid s_i\right)-\pi_i^{\prime}\left(\cdot \mid s_i\right)\right\rangle \mid s_h=s\right]+\mathbb{E}_{\pi^{\prime}}\left[\iota_i\left(s_i, a_i\right) \mid s_h=s\right] \\
    & +\mathbb{E}_{\pi^{\prime}}\left[\widehat{V}_{i+1}\left(s_{i+1}\right)-V_{i+1}^{\pi^{\prime}}\left(s_{i+1}\right) \mid s_h=s\right],
    \end{aligned}
    \end{equation}
    where $\mathbb{P}_i$ is the transition operator defined in Section \ref{sec:markov_decision}.
    Rewrite equation (\ref{eqn: A.5}), we have
    \begin{equation}\label{eqn: A.6}
    \begin{aligned}
    &\mathbb{E}_{\pi^{\prime}}\left[\widehat{V}_i\left(s_i\right)-V_i^{\pi^{\prime}}\left(s_i\right) \mid s_h=s\right] - \mathbb{E}_{\pi^{\prime}}\left[\widehat{V}_{i+1}\left(s_{i+1}\right)-V_{i+1}^{\pi^{\prime}}\left(s_{i+1}\right) \mid s_h=s\right] \\
    &=\mathbb{E}_{\pi^{\prime}}\left[\left\langle \widehat{Q}_i\left(s_i, \cdot\right), \pi_i\left(\cdot \mid s_i\right)-\pi_i^{\prime}\left(\cdot \mid s_i\right)\right\rangle \mid s_h=s\right] + \mathbb{E}_{\pi^{\prime}}\left[\iota_i\left(s_i, a_i\right) \mid s_h=s\right].
    \end{aligned}
    \end{equation}
    Taking $\sum_{i=h}^H$ on equation (\ref{eqn: A.6}), then
    \begin{equation}
    \begin{aligned}
        \widehat{V}_h\left(s\right)-V_h^{\pi^{\prime}}\left(s\right) &= \sum_{i=h}^H \mathbb{E}_{\pi^{\prime}}\left[\left\langle \widehat{Q}_i\left(s_i, \cdot\right), \pi_i\left(\cdot \mid s_i\right)-\pi_i^{\prime}\left(\cdot \mid s_i\right)\right\rangle \mid s_h=s\right] \\
        &+\sum_{i=h}^H\mathbb{E}_{\pi^{\prime}}\left[\widehat{Q}_h\left(s_i, a_i\right) - \mathbb{B}_i\widehat{V}_{i+1}\left(s_i, a_i\right) \mid s_h=s\right],
    \end{aligned}
    \end{equation}
letting $h=1$ completes the proof.
\end{proof}

\begin{lemma}[Matrix Bernstein Inequality \citep{tropp2015introduction}]\label{lemma: matrix Bernstein}
Suppose that $\left\{A_k\right\}_{k=1}^n$ are independent and centered random matrices in $\mathbb{R}^{d_1 \times d_2}$, that is, $\mathbb{E}\left[A_k\right]=0$ for all $k \in[n]$. Also, suppose that such random matrices are uniformly upper bounded in the matrix operator norm, that is, $\left\|A_k\right\|_{o p} \leq L$ for all $k \in[n]$. Let $Z=\sum_{k=1}^n A_k$ and
$$
v(Z)=\max \left\{\left\|\mathbb{E}\left[Z Z^{\top}\right]\right\|_{o p},\left\|\mathbb{E}\left[Z^{\top} Z\right]\right\|_{o p}\right\}=\max \left\{\left\|\sum_{k=1}^n \mathbb{E}\left[A_k A_k^{\top}\right]\right\|_{o p},\left\|\sum_{k=1}^n \mathbb{E}\left[A_k^{\top} A_k\right]\right\|_{o p}\right\} .
$$
For all $t \geq 0$, we have
$$
\mathbb{P}\left(\|Z\|_{o p} \geq t\right) \leq\left(d_1+d_2\right) \cdot \exp \left(-\frac{t^2 / 2}{v(Z)+L / 3 \cdot t}\right).
$$
\end{lemma}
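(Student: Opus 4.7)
The plan is to prove this Matrix Bernstein inequality by the standard Ahlswede--Winter--Tropp route: reduce to a self-adjoint problem by Hermitian dilation, then combine the matrix Laplace transform method with Lieb's concavity theorem and a scalar Bernstein moment generating function bound. The rectangular case gives way to the self-adjoint one as follows. For each $A_k \in \mathbb{R}^{d_1 \times d_2}$, set the Hermitian dilation $\mathcal{H}(A_k) = \bigl(\begin{smallmatrix} 0 & A_k \\ A_k^\top & 0 \end{smallmatrix}\bigr)$ of dimension $d := d_1 + d_2$. A direct computation gives $\|\mathcal{H}(A_k)\|_{op} = \|A_k\|_{op} \leq L$; linearity gives $\sum_k \mathcal{H}(A_k) = \mathcal{H}(Z)$; and $\mathcal{H}(Z)^2 = \mathrm{diag}(ZZ^\top, Z^\top Z)$ gives $\|\mathbb{E}[\mathcal{H}(Z)^2]\|_{op} = v(Z)$. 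Because the spectrum of $\mathcal{H}(Z)$ is symmetric about zero, $\|Z\|_{op} = \lambda_{\max}(\mathcal{H}(Z))$, reducing the problem to a one-sided tail bound on the top eigenvalue of a sum of independent, centered, self-adjoint random matrices of dimension $d$.

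Next I would apply the matrix Laplace transform. For any $\theta > 0$, the exponential Markov inequality gives $\mathbb{P}(\lambda_{\max}(\mathcal{H}(Z)) \geq t) \leq e^{-\theta t}\, \mathbb{E}[\mathrm{tr}\, e^{\theta \mathcal{H}(Z)}]$. To control the matrix moment generating function I invoke Lieb's concavity theorem (that $X \mapsto \mathrm{tr}\exp(H + \log X)$ is concave on positive-definite matrices); iterating it over the independent summands yields Tropp's master inequality
$$\mathbb{E}[\mathrm{tr}\, e^{\theta \mathcal{H}(Z)}] \leq \mathrm{tr}\exp\!\Bigl(\sum_k \log\mathbb{E}[e^{\theta \mathcal{H}(A_k)}]\Bigr).$$
For each summand, the scalar inequality $e^x \leq 1 + x + \tfrac{x^2/2}{1 - x/3}$ (valid for $x < 3$), pushed through the functional calculus together with centering $\mathbb{E}[\mathcal{H}(A_k)] = 0$, produces the semidefinite bound $\log\mathbb{E}[e^{\theta \mathcal{H}(A_k)}] \preceq g(\theta)\, \mathbb{E}[\mathcal{H}(A_k)^2]$ with $g(\theta) = (\theta^2/2)/(1 - \theta L/3)$, for $0 < \theta L < 3$.

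Summing these operator-ordered bounds, combined with $\mathrm{tr}\exp(M) \leq d \cdot \exp(\lambda_{\max}(M))$ and monotonicity of the trace-exponential, the Laplace bound collapses to $\mathbb{P}(\|Z\|_{op} \geq t) \leq d \cdot \exp\bigl(-\theta t + g(\theta)\, v(Z)\bigr)$. Choosing $\theta = t/(v(Z) + Lt/3)$ makes the exponent exactly $-(t^2/2)/(v(Z) + Lt/3)$, giving the claim with $d = d_1 + d_2$. The hard part will be Tropp's master inequality: in the scalar case subadditivity of log-MGFs for independent sums is immediate by factorization, but for non-commuting random matrices this fails and must be replaced by Lieb's concavity --- a genuinely deep matrix-analytic fact. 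Everything else (the dilation reduction, the Bernstein MGF bound via functional calculus, and the Chernoff optimization in the final $\theta$) is routine given the master inequality.
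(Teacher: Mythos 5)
Your outline is correct and is exactly the canonical argument (Hermitian dilation, matrix Laplace transform with Lieb's concavity, Bernstein-type MGF bound, Chernoff optimization at $\theta = t/(v(Z)+Lt/3)$) found in the source the paper cites; the paper itself offers no proof and simply defers to \citet{tropp2015introduction}. So you take essentially the same route as the paper's reference, and your sketch contains no gaps.
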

\begin{proof}
    See \citet[Theorem 1.6.2]{tropp2015introduction} for a detailed proof.
\end{proof}

\end{document}